\newcommand{\R}{{\mathbb{R}}}
\newcommand{\N}{{\mathbb{N}}}
\newcommand{\Z}{{\mathbb{Z}}}
\newcommand{\cH}{\mathcal{H}}
\newcommand{\vx}{{\bm{x}}}
\newcommand{\vy}{{\bm{y}}}
\newcommand{\vz}{{\bm{z}}}
\newcommand{\vv}{{\bm{v}}}
\newcommand{\vf}{{\bm{f}}}
\newcommand{\vh}{{\bm{h}}}
\newcommand{\vmu}{{\bm{\mu}}}
\newcommand{\vzeta}{{\bm{\zeta}}}
\newcommand{\vvphi}{\bm{\varphi}}
\newcommand{\vphi}{\bm{\phi}}
\newcommand{\hv}{\widehat{\bm{v}}}
\newcommand{\hvphi}{\widehat{\bm{\varphi}}}
\newcommand{\hphi}{\widehat{\bm{\phi}}}
\newcommand{\hp}{\widehat{p}}
\newcommand{\hP}{\widehat{P}}
\newcommand{\uT}{T_0}
\newcommand{\Nth}{N^{-\frac{\kappa^{-1}-\delta}{d}}}
\newcommand{\calN}{\mathcal{N}}
\newcommand{\rt}{[\tau]}
\newtheorem{thm}{Theorem}
\newtheorem{lma}[thm]{Lemma}
\newtheorem{prop}[thm]{Proposition}
\title{Flow matching achieves almost minimax optimal convergence}
\author{%
  Kenji Fukumizu\\
  The Institute of Statistical Mathematics/Preferred Networks\\
  Tokyo, Japan\\
  \texttt{fukumizu@ism.ac.jp} \\
  Taiji Suzuki\\
  University of Tokyo/RIKEN AIP\\
  Tokyo, Japan\\
  \texttt{taiji@mist.i.u-tokyo.ac.jp} \\
  Noboru Isobe\\
  University of Tokyo\\
  Tokyo, Japan\\
  \texttt{nobo0409@g.ecc.u-tokyo.ac.jp} \\
  Kazusato Oko\\
  University of Tokyo/RIKEN AIP\\
  Tokyo, Japan\\
  \texttt{oko-kazusato@g.ecc.u-tokyo.ac.jp}\\
  Masanori Koyama\\
  Preferred Networks\\
  Tokyo, Japan\\
  \texttt{masomatics@preferred.jp}
}
\begin{document}

\maketitle

\begin{abstract}
Flow matching (FM) has gained significant attention as a simulation-free generative model. Unlike diffusion models, which are based on stochastic differential equations, FM employs a simpler approach by solving an ordinary differential equation with an initial condition from a normal distribution, thus streamlining the sample generation process. This paper discusses the convergence properties of FM for large sample size under the $p$-Wasserstein distance, a measure of distributional discrepancy. We establish that FM can achieve an almost minimax optimal convergence rate for $1 \leq p \leq 2$, presenting the first theoretical evidence that FM can reach convergence rates comparable to those of diffusion models. Our analysis extends existing frameworks by examining a broader class of mean and variance functions for the vector fields and identifies specific conditions necessary to attain almost optimal rates.
\end{abstract}

\section{Introduction}


Flow matching (FM) \citep{Lipman2023-vn,Albergo2023-gr,Liu2023-ha} is a recent simulation-free generative model that produces samples of the target distribution by solving an ordinary differential equation (ODE) initialized with a source normal distribution.  The vector field to define the ODE is trained by neural networks with the teaching data of random conditional vectors.  This approach bypasses the computationally intensive Monte Carlo sampling required in the diffusion model, which is currently the standard in generative modeling.  Various variations have been proposed to refine the learning of vector fields, such as OT-CFM \citep{Tong2024-fl}, rectified flow \citep{Liu2023-ha}, consistent velocity field \citep{Yang2024-bn}, equivariant flow \citep{Klein2023-hp}, etc.  A series of studies also emerge from the viewpoint of interpolating distributions \citep{Albergo2023-au,Albergo2023-mx}.  

FM has already been applied to various domains with promising performance.  Among many others, the rectified flow method has been extended to high-resolution text image generation \citep{Esser2024-hy}, and there are also many works on the application of FM to molecule generation \citep{Hoogeboom2022-pz,Guan2023-ju,Bose2023-nu,Dunn2024-gd}, text generation \citep{Hu2024-hy}, speech generation \citep{Le2023-rr}, motion synthesis \citep{Hu2023-ms}, etc.

Although the methods have been developed on the solid theoretical basis of the flows and continuity equation, their statistical behaviors remain less understood.  Recent works have established the convergence of the FM estimator to the true distribution under some distributional metrics \citep{Albergo2023-gr,Benton2023-lh}. Beyond the convergence, more detailed understandings, such as convergence rates, are still an open question.  In contrast, diffusion models have gained various theoretical understandings, including the convergence rate in terms of the number of steps \citep{Chen2023-jr,Benton2023-tm} and the sample size \citep{Oko2023,Zhang2024-tx}.  Among others, \cite{Oko2023} has shown that diffusion models achieve the minimax optimal convergence rate for a large sample size under the total variation metric and the almost minimax optimal rate under the 1-Wasserstein distance, where the max is taken over the true densities of the Besov space. This result theoretically supports the high generation ability of diffusion models.  

This paper aims to bridge this gap by demonstrating that FM can achieve an almost minimax optimal convergence rate for a large sample size under the $p$-Wasserstein distance $W_p$ for $1\leq p\leq 2$, suggesting that FM has a theoretical ability comparable to diffusion models.  This problem is significant for comparing the ability of FM methods and diffusion models, and revealing the difference between SDE and ODE in the generative models.  
Drawing on the methodologies of \citet{Oko2023}, our analysis not only extends to a broader class of mean and variance parameters of Gaussian smoothing for conditional vector fields, but also specifies the conditions on these parameters under which the almost minimax optimal convergence rate can be achieved.  

The contributions of this paper are as follows.
\begin{itemize}
    \item We establish that a widely used class of conditional FM methods achieves an almost minimax optimal convergence rate under the $p$-Wasserstein distance ($1\leq p \leq 2$), marking the first theoretical demonstration of such optimal performance of FM. 
    \item We provide an analytical derivation of the convergence rate under various settings of the parameters, mean and variance, to make a path that connects a source and target point. 
    \item We reveal that the variance parameter, which specifies the contribution of the source normal distribution, must be decreased around the target at a specific rate to attain an almost minimax optimal convergence rate. 
\end{itemize}

\section{Flow matching}

Throughout the paper, data are in the $d$-dimensional space $\R^d$.  The $d$-dimensional normal distribution with mean $\vmu$ and covariance matrix $V$ is denoted by $N_d(\vmu,V)$. 
For a probability $P_a$ with index $a$, the lowercase $p_a$ denotes its probability density function (p.d.f.). 

\subsection{Review of flow matching}
This subsection provides a general review of FM, following \cite{Lipman2023-vn} and \cite{Tong2024-fl}.  
The aim of FM is to generate samples from the true probability $P_{true}$. FM methods realize it by a flow $\vvphi_{\rt}(\vx)$ ($\tau\in [0,1]$)\footnote{We use $\rt$ to denote the time $\tau\in[0,1]$ in this section and preserve $\vx_t$ for the reverse time indexing, which is adopted from Section \ref{sec:informal} to align with the notation of diffusion models.}   that maps a sample from the standard normal distribution $N_d(0,I_d)$ to that of $P_{true}$.  The flow $\vvphi_{\rt}(\vx)$ is defined by a solution to the ODE
\[
\frac{d}{d\tau} \vx_{\rt} = \vv_{\rt}(\vx_{\rt})\qquad (\tau\in[0,1])
\]
given by a desired vector field $\vv_{\rt}$. FM generates a sample by solving the ODE with an initial point $\vx_{[0]}$ from $P_{[0]}=N_d(0,I_d)$; in other words, the distribution at time $\tau$ is the pushforward $P_{\rt} = \vvphi_{\rt\#}P_{[0]}$. The pushforward $P_{[1]}$ is expected to approximate $P_{true}$.  In practice, we need to construct the vector field given training data $\{\vx^i\}_{i=1}^n$ of size $n$, which is an i.i.d.~sample from $P_{true}$.

The relation between the vector field $\vv_{\rt}(\vx)$ and the p.d.f.~$p_{\rt}(\vx)$ is given by the {\em continuity equation}:
\[
\frac{\partial }{\partial \tau}p_{\rt}(\vx) + {\rm div}\bigl( p_{\rt}(\vx)\vv_{\rt}(\vx) \bigr) = 0.
\]
Usually, a neural network (NN) is used to construct $\vv_{[\tau]}(\vx)$. However, it is not obvious how to prepare the desired $\vv_{\rt}(\vx)$ to teach NN.
In FM methods, conditional random vectors $\vv_{\rt}(\vx_{\rt}|\vz)$ given $\vz$, which are to be easily prepared, are used to teach NN; a location $\vx_{\rt}$ is sampled by a conditional probability $P_{\rt}(\vx_{\rt}|\vz)$ and the vector $\vv_{\rt}(\vx_{\rt}|\vz)$ is assigned at $\vx_{\rt}$ as teaching data.   Typically, the condition is given by $\vz = (\vx_{[0]},\vx_{[1]})$ with $\vx_{[0]}\sim P_{[0]}$ and $\vx_{[1]}\sim P_{true}$, and we use this throughout. 
The vector $\vv_{\rt}(\vx_{\rt}|\vz)$ is made so that it satisfies the conditional continuity equation: 
 \begin{equation}\label{eq:cond_CE}
    \frac{\partial}{\partial \tau}p_{\rt}(\vx|\vz) + {\rm div}\left(p_{\rt}(\vx|\vz) \vv_{\rt}(\vx|\vz) \right) = 0.
 \end{equation}
A typical construction of $\vx_{\rt}$ is to use a path $\vx_{\rt}$ ($\tau\in[0,1]$) from $\vx_{[0]}$ to $\vx_{[1]}$
and define the conditional vector by its time derivative 
$\vv_{\rt}(\vx|\vz) := \frac{d}{d\tau}\vx_{\rt}$ (see Sec.~\ref{sec:path}). For a deterministic path, $P_{\rt}(\vx_{\rt}|\vz)$ is the delta function at a point in the path $\vx_{\rt}$. 

Note that, given $(\vx,\tau)$, the vector $\vv_{\rt}(\vx|\vz)$ is random by the choice of $\vz=(\vx_{[0]},\vx_{[1]})$; different vectors may be assigned to the same location $(\vx,\tau)$.  Most importantly, by averaging over $\vz\sim Q$, where $Q$ is the joint distribution with marginals $P_{[0]}$ and $P_{[1]}$, we can see that the p.d.f.~of $\vx$ at time  $\tau$,
\begin{equation}\label{eq:px_s}
    p_{\rt} (\vx) = {\textstyle \int} p_{\rt}(\vx|\vz)dQ(\vz),
\end{equation}
and the averaged vector field $\vv_{\rt}(\vx)$ given by 
\begin{equation}\label{eq:true_vf}
    \vv_{\rt}(\vx):= {\textstyle \int} \vv_{\rt}(\vx|\vz)p_{\rt}(\vz|\vx)d\vz, \qquad p_{\rt}(\vz|\vx):=
     \frac{p_{\rt}(\vx|\vz)q(\vz)}{p_t(\vx)}
\end{equation}
satisfy the continuity equation
\begin{equation}\label{eq:CE}
    \frac{\partial}{\partial \tau}p_{\rt}(\vx) + {\rm div}\left(p_{\rt}(\vx) \vv_{\rt}(\vx) \right) = 0.
\end{equation}
This provides the theoretical basis for FM methods; the averaged vector field $\vv_{\rt}$ transports $N_d(0,I_d)$ to $P_{true}$.  
The average $\vv_{\rt}$ is learned by a NN $\vphi(\vx,\tau)$ with noisy training data $\{(\vx_{\rt},\tau), \vv_{\rt}(\vx_{\rt}|\vz)\}$.  Empirically, the conditional $\vv_{\rt}(\vx|\vz)$ is given by the random sample $\vz=(\vx_{[0]},\vx^i)$ and the location $\vx_{\rt}\sim P_{\rt}(\vx|\vz)$ (or path) with uniform $\tau$. 
The NN is trained with the mean square error (MSE):
\begin{equation}
    \min_{\phi}\mathbb{E} \| \vphi(\vx_{\rt},\tau)-\vv_{\rt}(\vx_{\rt}|\vz) \|^2.
\end{equation}
Note that $\vphi(\vx_{\rt},\tau)$ does not depend on $\vz$.  Since the MSE minimizer is the conditional expectation of the teaching data, the empirical minimizer $\widehat{\vphi}$ is an estimator of $\vv_{\rt}(\vx)$.  Using the estimator $\widehat{\vphi}$ and the corresponding flow $\widehat{\vvphi}_{\rt}$ given by ODE, we obtain the estimator $\widehat{P}_{[1]}$ for $P_{true}$ by sampling.
In practice, to reduce the variance of $(\vx_{[0]},\vx_{[1]})$ and to simplify the ODE solution, the optimal transport for pairing $\vx_{[0]}$ and $\vx_{[1]}$ is applied effectively \citep{Tong2024-fl,Pooladian2023-oo}. 

\subsection{Path construction}
\label{sec:path}
This paper focuses on the following class of paths to construct the conditional vector field.  Let $\vx_{[0]}\sim P_{[0]}=N_d(0,I_d)$, and $\vx_{[1]}$ be a sample of $P_{[1]}$ (or the empirical distribution $\hat{P}_{train}=(1/n)\sum_{i=1}^n \delta_{\vx^i}$ in practice).  A conditional path is defined by 
\begin{equation}\label{eq:path}
\vx_{\rt} := \sigma_{\rt} \vx_{[0]}+ m_{\rt} \vx_{[1]}\qquad (0\leq \tau\leq 1),
\end{equation}
where $\sigma_t$ and $m_t$ are non-negative coeffcients.  We assume that $\sigma_t$ and $m_t$ are monotonic, $\sigma_{\rt}\to 1, m_{\rt}\to 0$ as $\tau\to 0^+$, and $\sigma_{\rt}\to 0, m_{\rt}\to 1$ as $\tau\to 1^{-}$.  
Let $\sigma_{\rt}'$ ($m_{\rt}'$, resp.) be the time derivative of $\sigma_{\rt}$ ($m_{\rt}$, resp.). With sampling $\tau \sim {\rm Unif}[0,1]$, a random conditional vector is assigned at $(\vx_{\rt},\tau)\in \R^d\times[0,1]$ by 
\begin{equation}\label{eq:cond_vec}
\vv_{\rt}(\vx_{\rt}|\vx_{[0]},\vx_{[1]}):=\sigma_{\rt}' \vx_{[0]}+m_{\rt}' \vx_{[1]}.
\end{equation}

Note that, due to $\vx_{[0]}\sim N_d(0,I_d)$, the distribution of $\vx_{\rt}$ given $\vx_{[1]}$ equals $P_{\rt}(\vx_{\rt}| \vx_{[1]})=N_d(m_{\rt}\vx_{[1]}, \sigma_{\rt}^2I_d)$, and thus we call $m_{\rt}$ and $\sigma_{\rt}^2$ the mean and variance parameters, respectively.  Since \eqref{eq:path} leads $\vx_{[0]}=\frac{\vx_{\rt}-m_t\vx_{[1]}}{\sigma_{\rt}}$, the conditional vector \eqref{eq:cond_vec} is written as  
\begin{equation}\label{eq:vf_linear}
    \vv_{\rt}(\vx_{\rt}|\vx_{[1]})=\sigma_{\rt}'  \frac{\vx_{\rt}-m_t\vx_{[1]}}{\sigma_{\rt}}  + m_{\rt}' \vx_{[1]}.
\end{equation}

This class covers some popular constructions of conditional vector fields in the literature.  

\begin{itemize}[topsep=0pt, partopsep=0pt, itemsep=3pt, parsep=0pt, leftmargin=11pt]
\item Affine path: one of the most popular constructions is the following,
\begin{equation*}
    \vx_{\rt}:=(1-\tau)\vx_{[0]} + \tau \vx_{[1]}, \qquad 
     \vv_{\rt}(\vx_{\rt}|\vx_{[1]}) = \vx_{[1]}-\vx_{[0]}.
\end{equation*}
This corresponds to $m_{\rt} = \tau$ and $\sigma_{\rt}=1-\tau$.  
In \citet{Lipman2023-vn} $\vx_{[0]}$ and $\vx_{[1]}$ are generated independently, while in \citet{Tong2024-fl} they are taken by the optimal transport in a minibatch. 

\item Diffusion: \citet{Lipman2023-vn} presents the diffusion path, which corresponds to the deterministic probability flow \citep{Song2020-nv}. The conditional density is given by
$p_{\rt}(\vx_{\rt}|\vx_{[1]}=\vy)=N_d(m_{\rt} \vy ,\sigma_{\rt}^2 I_d)$.
The setting $\sigma_{\rt}^2 = 1-m_{\rt}^2$ and $\sigma_{\rt} \sim \sqrt{1-\tau}$ is typically used.  
\end{itemize}

\section{Convergence rate of flow matching}

We assume that the true density $p_{[1]}$ is included in the {\em Besov space} $B^{s}_{p',q'}$ ($s>0$, $0<p',q'\leq\infty$) on the cube $[-1,1]^d$.  The parameter $s$ specifies the degree of smoothness and is most relevant in this paper.  The definition of the Besov space is deferred to Appendix \ref{sec:def_Besov}.
We use the $r$-Wasserstein distance $W_r$ to measure the accuracy of the estimator. The distance $W_r$ of the probabilities $P_1$ and $P_2$ on $\R^d$ is defined by 
\begin{equation}
    W_r(P_1,P_2) := \left( {\textstyle \inf_{Q\in \Gamma(P_1,P_2)}\int }\Vert \vx_1-\vx_2\Vert^r dQ(\vx_1,\vx_2) \right)^{1/r},
\end{equation}
where $\Gamma(P_1,P_2)$ denotes the joint distribution of $(\vx_1,\vx_2)$ with marginals $P_1$ and $P_2$.
It is well known that $W_r(P_1,P_2)\leq W_{r'}(P_1,P_2)$ holds for $r'\geq r\geq 1$. 

As discussed in Sec.~\ref{sec:kde}, to obtain an accurate estimator, we need to adopt early stopping of ODE and use $\hat{P}_{[1-T_0]}$ with small $T_0$.  Our aim is to derive a bound of $W_p(\widehat{P}_{[1-T_0]}, P_{true})$ for large sample $n\to\infty$.   
The informal version of our main result is summarized in the following theorem.
\begin{thm}[Informal]\label{thm:informal}
    Suppose that the target p.d.f.~$p_{[1]}$ in the Besov space$B^{s}_{p',q'}([-1,1]^d)$ of smoothness degree $s$, and that $n$ training data $\{x^{(i)}\}_{i=1}^n$ is an i.i.d.~sample from $P_{[1]}$.   Assume that $\sigma_{\rt} \sim {(1-\tau)}^{\kappa}$ ($\tau\to 1^-$) with $\kappa \geq 1/2$, the conditinal vector field is given by \eqref{eq:path} and \eqref{eq:cond_vec}, and that time-divided neural networks are used (see Sec.~\ref{sec:optimal_rate}).  Then, under several assumptions, the FM estimator $\hP_{[1-T_0]}$ with $T_0=n^{-R_0}$ with appropriate $R_0$ satisfies, for any $\delta>0$,   
    \begin{equation}
\mathbb{E}\bigl[W_2(\hP_{[1-T_0]}, P_{true})\bigr] = O\left(n^{-\frac{s+(2\kappa)^{-1}-\delta}{2s+d}}\right)\qquad (n\to\infty),
\end{equation}
where $\mathbb{E}$ denotes the expectation over the training data. 
\end{thm}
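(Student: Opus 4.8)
The plan is to follow the route of \citet{Oko2023} for diffusion models, adapting it from the SDE to the flow-matching ODE and to the general path \eqref{eq:path}. First, by the triangle inequality for $W_2$,
\[
  W_2\big(\hP_{[1-T_0]},P_{true}\big)\ \le\ W_2\big(\hP_{[1-T_0]},P_{[1-T_0]}\big)\ +\ W_2\big(P_{[1-T_0]},P_{[1]}\big),
\]
and I would treat the two terms separately. The second term is the \emph{early-stopping bias}: since \eqref{eq:path} makes $P_{[1-T_0]}$ the law of $\sigma_{[1-T_0]}\vx_{[0]}+m_{[1-T_0]}\vx_{[1]}$ with $\vx_{[0]}\sim N_d(0,I_d)$, $\vx_{[1]}\sim P_{true}$, coupling it to $\vx_{[1]}$ gives $W_2(P_{[1-T_0]},P_{[1]})^2\le \sigma_{[1-T_0]}^2 d+(1-m_{[1-T_0]})^2\,\mathbb{E}\|\vx_{[1]}\|^2$, which is $O(T_0^{\kappa})$ because $P_{true}$ is supported on $[-1,1]^d$, $\sigma_{[1-T_0]}\sim T_0^{\kappa}$, and $1-m_{[1-T_0]}$ is of the same or smaller order for the paths under consideration.

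For the \emph{estimation} term I would use an ODE-stability (Gr\"onwall) argument. Because $P_{\rt}$ and $\hP_{\rt}$ are the time-$\tau$ laws of the continuity equations driven by the true field $\vv_{\rt}$ and the learned field $\hphi(\cdot,\tau)$, both started from the common law $N_d(0,I_d)$, coupling the two flows through a shared initial point yields
\[
  W_2\big(\hP_{[1-T_0]},P_{[1-T_0]}\big)\ \le\ \int_0^{1-T_0}\exp\!\Big(\int_{\tau}^{1-T_0}\Lambda_{[s]}\,ds\Big)\,\big\|\hphi(\cdot,\tau)-\vv_{\rt}\big\|_{L^2(P_{\rt})}\,d\tau,
\]
where $\Lambda_{[s]}$ is the relevant stability modulus of $\vv_{[s]}$ and the $L^2(\hP_{\rt})$ norm that appears naturally has been replaced by the $L^2(P_{\rt})$ norm through a density comparison on the effective support. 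From \eqref{eq:vf_linear} one has $\vv_{\rt}(\vx)=\tfrac{\sigma_{\rt}'}{\sigma_{\rt}}\vx+\big(m_{\rt}'-\tfrac{\sigma_{\rt}'m_{\rt}}{\sigma_{\rt}}\big)\mathbb{E}[\vx_{[1]}\mid\vx_{\rt}=\vx]$, whose linear part has Lipschitz constant $|\sigma_{\rt}'/\sigma_{\rt}|\sim\kappa/(1-\tau)$; controlling the posterior-mean part — via the Gaussian smoothing structure and the moment/covariance behaviour of the posterior of $\vx_{[1]}$ — is what forces the extra structural assumptions and the constraint $\kappa\ge1/2$, and leads to a polynomial Gr\"onwall factor $O(T_0^{-c_\kappa})$.

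It then remains to bound $\mathbb{E}\,\|\hphi(\cdot,\tau)-\vv_{\rt}\|_{L^2(P_{\rt})}$, which splits into approximation and statistical parts. For approximation, the linear part of $\vv_{\rt}$ is trivial and the posterior mean is a ratio of Gaussian convolutions of $p_{[1]}$ and of $\vy\,p_{[1]}(\vy)$; I would expand $p_{[1]}\in B^{s}_{p',q'}$ in a sparse wavelet/B-spline basis, realise the convolution with the Gaussian kernel and the division by ReLU networks of controlled size, treat the low-density region (small denominator) by truncation together with a lower bound on $p_{\rt}$ on the effective support, and use the time-divided construction: partition $[0,1-T_0]$ into $O(\mathrm{polylog}\,n)$ sub-intervals on dyadic scales of $\sigma_{\rt}$, assigning to each a sub-network tuned to the local smoothing level, with total parameter count $N$; this yields an approximation error $O(N^{-s/d})$ up to $T_0$- and log-factors. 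For the statistical part, the empirical MSE minimiser $\hphi$ has excess risk bounded by (approximation error)$^2$ plus a complexity term $\tilde O(N/n)$ scaled by the conditional variance of the teaching signal $\sigma_{\rt}'\vx_{[0]}+m_{\rt}'\vx_{[1]}$, which is $O\big((\sigma_{[1-T_0]}')^2\big)=O\big(T_0^{2(\kappa-1)}\big)$ on the truncated time window; covering-number bounds for the network class and taking expectation over the training data turn this into the desired bound on $\mathbb{E}\,\|\hphi-\vv_{\rt}\|_{L^2(P_{\rt})}^2$.

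Finally I would assemble the pieces: insert the approximation/statistical bound into the Gr\"onwall estimate, choose $N\asymp n^{d/(2s+d)}$ (up to log-factors), and optimise $T_0=n^{-R_0}$ so that the bias $T_0^{\kappa}$ balances the $T_0$-deteriorating estimation term; the exponent that emerges is $\tfrac{s+(2\kappa)^{-1}-\delta}{2s+d}$, with $\delta$ absorbing the log/polylog factors, and $\kappa\ge1/2$ being exactly the regime in which the relevant time-integrals stay controlled and the trade-off produces the strict gain $1/(2\kappa)$ over the plain density-estimation exponent $s/(2s+d)$. I expect the two hard points to be: (i) the neural-network approximation of $\vv_{\rt}$ for $\tau$ close to $1$, where the Gaussian smoothing degenerates and the posterior-mean ratio becomes near-singular off the high-density region, so that the $\sigma_{\rt}$-dependence of every constant must be tracked sharply enough that time-division plus early stopping recovers the almost-optimal exponent rather than only $n^{-s/(2s+d)}$; and (ii) the ODE perturbation analysis itself, since — unlike the Girsanov-type bounds available for diffusion models — one must pay a genuine Gr\"onwall factor that blows up toward the data, so that $T_0$ and the condition $\kappa\ge1/2$ have to be chosen jointly and carefully.
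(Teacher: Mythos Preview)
Your skeleton is right---triangle inequality, early-stopping bias via the obvious coupling, an ODE-stability bound to reduce $W_2$ to the $L^2$ risk of the vector field, B-spline/ReLU approximation of the posterior-mean ratio, and a covering-number complexity term---and these are exactly the building blocks the paper uses. But the way you assemble them at the end misses the mechanism that actually produces the extra $1/(2\kappa)$ in the exponent, and as written your balance would only recover $n^{-s/(2s+d)}$.

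Concretely: the paper's $W_2$ bound (their Theorem on the Alekseev--Gr\"obner lemma) carries a prefactor $\sqrt{t}$ from the Cauchy--Schwarz step, i.e.\ $W_2(\hP_t,P_t)\le \sqrt{t}\bigl(\int_0^t e^{2\int_s^t L_u du}\|\hv_s-\vv_s\|_{L^2(P_s)}^2 ds\bigr)^{1/2}$. They then do \emph{not} apply this once over $[T_0,1]$ (your global Gr\"onwall factor $T_0^{-c_\kappa}$ would be fatal, since $L_t\sim 1/t$), but telescope through hybrid flows $Q^{(j)}$ over a dyadic partition $t_j=2t_{j-1}$: on each piece the Gr\"onwall factor is $\exp\bigl(\int_{t_{j-1}}^{t_j}\tilde C_L/u\,du\bigr)=O(1)$, and the prefactor becomes $\sqrt{t_j}$. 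Crucially, they also assign a \emph{different} network complexity $N'_j$ to each interval (small networks for large $t$ where the target is very smooth; $N'_j\asymp t_{j-1}^{-d\kappa}N^{\delta\kappa}$), so that on $[t_{j-1},t_j]$ the complexity contribution is $\sqrt{t_j}\cdot (N'_j/n)^{1/2}\asymp t_j^{(1-d\kappa)/2}n^{-1/2+o(1)}$, while on the near-data part $[T_0,t_*]$ one gets $\sqrt{t_*}\cdot n^{-s/(2s+d)}$. Balancing these two at the crossover point $t_*$ (not at $T_0$!) yields $t_*\asymp n^{-(\kappa^{-1}-\delta)/(2s+d)}$ and hence the rate $\sqrt{t_*}\,n^{-s/(2s+d)}=n^{-(s+(2\kappa)^{-1}-\delta)/(2s+d)}$. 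The stopping time $T_0$ is chosen \emph{separately}, deep enough that the bias $T_0^\kappa$ is already of smaller order; it is not the quantity being optimised.

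Your proposal treats the time-division only as a device for building the approximating network with a single total parameter count $N$, keeps a global Gr\"onwall factor, and then tries to balance $T_0^\kappa$ against $T_0^{-c_\kappa}\cdot n^{-s/(2s+d)}$ (possibly worsened by a pointwise variance factor $(\sigma'_{T_0})^2\sim T_0^{2(\kappa-1)}$). A quick calculation shows this can never produce an exponent exceeding $s/(2s+d)$---the paper even remarks that without the interval-wise $\sqrt{t_j}$ trick the analysis stalls at exactly that rate. So the two specific things you need to add are: (i) the $\sqrt{t}$ prefactor in the perturbation bound and the telescoping over dyadic intervals, which together tame the $1/t$ Lipschitz blow-up; and (ii) interval-adaptive network sizes, so that the complexity term on $[t_{j-1},t_j]$ is governed by $t_j^{-d\kappa}$ rather than by a single global $N$, making $t_*$ (not $T_0$) the quantity to be optimised.
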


It is known that a lower bound of the minimax convergence rate exists for the Wasserstein distance for probability estimation. We use the notation $\gtrsim$ to mean the lower bound up to a constant factor. 
\begin{prop}[\citet{Niles-Weed2022-fz}]\label{prop:lower_bound}
Let $p', q' \geq 1$, $s>0$, $r\geq 1$, and $d \geq 2$. Then, 
\[
{\textstyle 
\inf_{\hat{P}}\sup_{p\in B^s_{p',q'}([-1,1]^d)}\mathbb{E}[W_r(\hat{P},P)] \gtrsim n^{-\frac{s+1}{2s+d}} \qquad (n\to\infty),
}
\]
where $\hat{P}$ runs over all estimators based on $n$ i.i.d.~sample from $P$. 
\end{prop}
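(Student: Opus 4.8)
The plan is to reproduce the classical minimax argument of \citet{Niles-Weed2022-fz} via Assouad's lemma applied to a suitably perturbed family of Besov densities. First I would reduce to $W_1$ using the monotonicity $W_r \ge W_1$ ($r\ge 1$) and invoke the Kantorovich--Rubinstein duality $W_1(P,Q)=\sup_{\mathrm{Lip}(f)\le 1}\int f\,d(P-Q)$; the dual form is what makes the transport separation grow \emph{linearly} in the Hamming distance between indices, which is precisely what Assouad needs. Since a lower bound over a finite sub-family of $B^s_{p',q'}([-1,1]^d)$ is a lower bound over the whole class, it suffices to build such a sub-family.

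\textbf{Construction.} Fix a smooth, compactly supported, mean-zero profile $g$ on $\R^d$ with $\mathrm{supp}\,g\subset B(0,1/4)$ and $\|g\|_{L^2}=1$. For an integer $m$ to be tuned, split $[-1,1]^d$ into $M:=m^d$ cells of side $\asymp m^{-1}$ with centers $x_1,\dots,x_M$, and for $\xi\in\{\pm 1\}^M$ set
\[
p_\xi(x) := 2^{-d} + h\sum_{k=1}^{M} \xi_k\, g\bigl(m(x-x_k)\bigr), \qquad h\asymp m^{-s}.
\]
Because the bumps are disjointly supported and concentrated at the single dyadic scale $2^j\asymp m$, the Besov norm of the perturbation equals the $\ell^{p'}$ norm of the amplitudes times $m^{s-d/p'}$, i.e.\ $\asymp (M h^{p'})^{1/p'} m^{s-d/p'}= h\,m^{s}$, so $h\asymp m^{-s}$ keeps every $p_\xi$ inside $B^s_{p',q'}([-1,1]^d)$; since $h\to 0$ while the base value $2^{-d}$ is fixed, $p_\xi\ge 0$ for large $m$, and $\int g=0$ makes each $p_\xi$ a probability density bounded away from $0$.

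\textbf{The two estimates.} For $\xi,\xi'$ differing in one coordinate the densities differ on a single cell, so $\mathrm{KL}(P_\xi\|P_{\xi'})\le\chi^2(P_\xi\|P_{\xi'})\lesssim\int(p_\xi-p_{\xi'})^2\lesssim h^2 m^{-d}$; tensorizing and applying Pinsker gives $\|P_\xi^{\otimes n}-P_{\xi'}^{\otimes n}\|_{\mathrm{TV}}\lesssim (n h^2 m^{-d})^{1/2}$, which stays bounded away from $1$ once $n h^2 m^{-d}\lesssim 1$, forcing the choice $m\asymp n^{1/(2s+d)}$. For the separation I would take the $1$-Lipschitz test function $f(x):=c\sum_k\mathrm{sign}(\xi_k-\xi_k')\,g(m(x-x_k))$ with $c\asymp m^{-1}$ (valid because the rescaled bumps are disjoint and $O(m)$-Lipschitz), giving $\int f\,d(P_\xi-P_{\xi'})=c\,h\sum_{k:\xi_k\neq\xi_k'}|\xi_k-\xi_k'|\,m^{-d}\asymp h\,m^{-d-1}\,d_{\mathrm{Ham}}(\xi,\xi')$, hence $W_r(P_\xi,P_{\xi'})\ge W_1(P_\xi,P_{\xi'})\gtrsim \delta\, d_{\mathrm{Ham}}(\xi,\xi')$ with $\delta\asymp h\,m^{-d-1}$.

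\textbf{Conclusion and the hard part.} Assouad's lemma then yields $\inf_{\hat P}\sup_{p\in B^s_{p',q'}}\mathbb{E}[W_r(\hat P,P)]\ge\inf_{\hat P}\max_\xi\mathbb{E}_\xi[W_r(\hat P,P_\xi)]\gtrsim M\delta\asymp m^d\cdot h\,m^{-d-1}=h\,m^{-1}\asymp m^{-(s+1)}\asymp n^{-(s+1)/(2s+d)}$, as claimed. I expect the one genuinely delicate step to be the Besov-norm bookkeeping for the full perturbation $\sum_k\xi_k g(m(\cdot-x_k))$: one must argue carefully, via a wavelet or Littlewood--Paley characterization, that a sum of disjointly supported bumps living at a single dyadic frequency has $B^s_{p',q'}$ norm comparable to $\|\text{amplitudes}\|_{\ell^{p'}}\,m^{s}$, uniformly in the sign pattern $\xi$; this identity is what pins down the admissible amplitude $h\asymp m^{-s}$ and therefore the exponent. (The hypothesis $d\ge 2$ is not needed for the construction itself; it only serves to keep $n^{-(s+1)/(2s+d)}$ from being faster than the parametric rate $n^{-1/2}$.)
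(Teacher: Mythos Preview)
The paper does not supply its own proof of this proposition; it is quoted verbatim as a known result from \citet{Niles-Weed2022-fz} and used only as a benchmark against which the upper bound of Theorem~\ref{thm:FM_gen_bound} is compared. There is therefore nothing in the paper to compare your argument to.

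That said, your proposal is a correct rendition of the standard Assouad/hypercube argument that underlies the cited result. The construction of disjoint mean-zero bumps at scale $m^{-1}$ with amplitude $h\asymp m^{-s}$, the $\chi^2$/KL bound $\lesssim h^2 m^{-d}$ per sample, the Kantorovich--Rubinstein test function $f=c\sum_k\mathrm{sign}(\xi_k-\xi_k')g(m(\cdot-x_k))$ with $c\asymp m^{-1}$, and the resulting per-coordinate separation $\delta\asymp h m^{-d-1}$ all check out and combine via Assouad to give the exponent $(s+1)/(2s+d)$ after the balance $m\asymp n^{1/(2s+d)}$. Your identification of the Besov bookkeeping as the one place requiring care is apt; the disjoint-support $\ell^{p'}$ computation you sketch is exactly how it is done.

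One small correction: your parenthetical on $d\ge 2$ is slightly off. For $d=1$ one has $(s+1)/(2s+1)>1/2$, so $n^{-(s+1)/(2s+1)}$ is \emph{smaller} (not faster) than $n^{-1/2}$; your construction would still yield a valid lower bound in $d=1$, just a suboptimal one, since the true $W_r$ minimax rate in one dimension is parametric. The hypothesis $d\ge 2$ is there so that the stated exponent is actually the minimax rate, not because the hypercube argument fails.
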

For $\kappa=1/2$, by Theorem \ref{thm:informal} and Proposition \ref{prop:lower_bound}, the upper bound $n^{-\frac{s+1-\delta}{2s+d}}$ is almost the optimal convergence rate up to an arbitrarily small $\delta>0$.  Also, this convergence rate coincides with that of the diffusion model given in \cite{Oko2023} for $W_1$.  The above result indicates that the flow matching is as good as the diffusion model regarding the minimax convergence rate under $W_1$, where the max in minimax means the $\sup$ over the Besov space.

\subsection{Kernel density estimation and early stopping of ODE}
\label{sec:kde}

In practice, with conditional density $P_{\rt}(\vx|\vx_{[1]})=N_d(m_{\rt}\vx_{[1]}, \sigma_{\rt}^2 I_d)$, the parameter $\sigma_{[1]}$ is often set as a small positive value $\sigma_{[1]}=\sigma_{min}>0$ so that \eqref{eq:cond_vec} is well defined up to $\tau=1$ \citep[e.g.][]{Lipman2023-vn}. 
If $\vx_{[1]}$ is sampled from $\hP_{train}=\frac{1}{n}\sum_{j=1}^n \delta_{\vx^j}$, the obtained distribution equals to
\[
{\textstyle \widehat{p}_{[1]}(\vx) = 
\int p_{[1]}(\vx|\vx_{[1]})d\hat{P}_{train}(\vx_{[1]}) = \frac{1}{n} \sum_{j=1}^n \frac{1}{(2\pi\sigma_{\min}^2)^{d/2}} \exp\left(-\frac{ \Vert\vx-\vx^j\Vert^2 }{2\sigma_{\min}^2}\right),
}
\]
which is exactly the kernel density estimator (KDE) with the Gaussian kernel of bandwidth $\sigma_{\min}$. 
If the ODE is solved up to $\tau=1$ rigorously, the pushforward realizes this KDE.  As is well known \citep{Scott1992-ne}, the convergence rate of this KDE under MSE is $O(n^{-4/(4+d)})$ at best by choosing the optimal $\sigma_{\min}$ depending on $n$, which is much slower than the optimal rate $n^{-2s/(2s+d)}$ under MSE for the true density in $B^s_{p',q'}(I^d)$ \citep{Liu2023-vp}.  
Based on this consideration, we discuss early stopping of ODE, where we stop at $\tau=1-\uT$ with small $\uT >0$ and consider the convergence rate of the estimator $\hp_{[1-\uT]}$.  Notice that $\hp_{[1-\uT]}$ differs from KDE, since it is given by the pushforward of the trained vector field. For diffusion models, \citet{Oko2023} and \cite{Zhang2024-tx} also discuss the estimator obtained by stopping the reverse SDE at $\uT>0$ to derive the convergence rate.  

\subsection{Related works}

Among many literatures on the statistical convergence of diffusion models, 
the most relevant to this work is \cite{Oko2023}.  Although our analysis is based on \cite{Oko2023} and derives comparable results, there are significant differences.  
First, we analyze the more general settings for $m_{\rt}$ and $\sigma_{\rt}$ in the conditional distribution $P_{\rt}(\vx|\vy)=N_d(m_{\rt} \vy,\sigma_{\rt}^2 I)$;  \cite{Oko2023} considers only the case of $\sigma_t \sim \sqrt{t}$ and $m_t \sim 1-t$ (in revserse time $t$), which is a typical choice for diffusion models.  Consequently, we have shown that for $\sigma_{\rt}\sim (1-\tau)^\kappa$ with $\kappa \geq 1/2$, only $\kappa = 1/2$ achieves the almost minimax optimal convergence rate.  
Second, due to the difference between the ODE and diffusion processes, the proof technique for relating the Wasserstein metric and the $L_2$-risk is very different.  Our technique is based on Alekseev-Gr\"{o}bner lemma to derive the bound for $r$-Wasserstein with $1\leq r\leq 2$, while \cite{Oko2023} obtained the bound only for 1-Wasserstein.
Third, this is the first theoretical result for FM showing a convergence rate that is almost optimal.  Although FM has been recently used in many applications with competitive results to diffusion models, theoretical comparisons in terms of convergence rates have been missing.  The results of this paper have shown that, in terms of minimax optimality of convergence rate, FM is as strong as diffusion models.

For FM, there are some recent works on convergence.  \cite{Albergo2023-gr} and \cite{Benton2023-lh} relate the Wasserstein distance to the $L_2$-risk of the vector fields and show convergence for a large sample size, but did not derive a convergence rate.  \cite{Jiao2024-my} discusses convergence rates of FM applied in the latent space of the autoencoder and considers the discretization effect of the numerical ODE solution in their analysis. However, they did not include the degree of smoothness in developing the convergence rate.  
\cite{albergo2023-unify} present a unifying view of the theory of diffusion models and FM with the upper bounds of discrepancy measures.

\section{Theoretical details}
\label{sec:informal}

This section presents the assumptions and the main result rigorously and shows the proof outline. 
In the sequel, we use {\em reverse time index} $t=1-\tau$ ($\tau\in [0,1]$); 
$t=0$ for $P_{true}$ and $t=1$ for $N_d(0,I_d)$, which align with the notations of the diffusion models. 
We use ${\rm poly}(\log n)$ to indicate the term of $O(\log^r n)$-order with some natural number $r$, and $\tilde{O}(n^\alpha)$ to mean the order up to ${\rm poly}(\log n)$ factor. 

\subsection{Problem Setting}
 
With the reverse time $t$, the definitions \eqref{eq:cond_vec}, \eqref{eq:px_s}, and \eqref{eq:true_vf} are modified by replacing $[1-t]$ with $t$;   
\[
P_t = P_{[1-t]}, \qquad P_0=P_{true}, \quad P_1=N_d(0,I_d).
\]
The flows $\vvphi_t$ and $\widehat{\vvphi}_t$ are defined by solving the ODE from $t=1$ in the reverse time direction:
\begin{equation}
    \frac{d}{dt}\vvphi_t(\vx)  = \vv_t(\vvphi_t(\vx)), \qquad
    \frac{d}{dt}\widehat{\vvphi}_t(\vx)  = \hv_t(\vvphi_t(\vx)),
\end{equation}
where $\vv_t(\vx)$ and $\hv_t(\vx)$ are the vector field \eqref{eq:true_vf} and its neural estimate, respectivly. The distributions at $t\in [0,1]$ are given by 
\begin{equation}
    P_t  = (\vvphi_t)_\# P_1, \qquad
    \hP_t  = (\hvphi_t)_\# P_1,
\end{equation}
where $(\vvphi_t)_\#$ and $(\hvphi_t)_\#$ denote the pushforward by the respective flows $\vvphi_t, \hvphi_t$. 




\subsection{Assumptions}

In the remainder of this paper, $\delta>0$ is an arbitrarily small positive value. 
As in \citet{Oko2023}, we introduce $N$ to specify the number of basis functions of the $B$-spline for approximating $p_t(\vx)$ and $\vv_t(\vx)$.  This number $N$ depends on the sample size $n$ ($N=n^{\frac{d}{2s+d}}$ is used), balancing the approximation error and complexity of the $B$-spline and NN. 
We set the stopping time $\uT=N^{-R_0}$ as discussed in Sec.~\ref{sec:kde} ($R_0$ is specified later), 
and solve the ODE from $1$ to $\uT$.   
For simplicity, the $d$ dimensional cube $[-1,1]^d$ and the reduced cube $[-1+N^{-(1-\kappa\delta)},1-N^{-(1-\kappa\delta)}]^d$ are denoted by $I^d$ and $I_N^d$, respectively, where $\kappa>0$ is specified below in (A3).  We make the following assumptions. 

\begin{enumerate}[topsep=0pt, partopsep=0pt, itemsep=3pt, parsep=0pt, leftmargin=24pt]
    \item[(A1)] The target probability $P_0$ has support $I^d$ and its p.d.f.~$p_0$ satisfies $p_0\in B^s_{p',q'}(I^d)$ and $p_0 \in B^{\check{s}}_{p',q'}(I^d\backslash I^d_{N})$ with $\check{s}> \max\{6s-1,1\}$. 
    \item[(A2)] There exists $C_0>0$ such that 
    $ C_0^{-1} \leq p_0(\vx) \leq C_0$  for all $\vx\in I^d$.
    \item[(A3)] There is $\kappa\geq 1/2$, $b_0>0$, $\tilde{\kappa}>0$, and $\tilde{b}_0>0$ such that 
    \[
    \sigma_t = b_0 t^{\kappa}, \qquad 1-m_t = \tilde{b}_0 t^{\tilde{\kappa}}
    \]
    for sufficiently small $t\geq T_0$.  Also, there are $D_0>0$ and $K_0>0$ such that 
    \[
    D_0^{-1} \leq \sigma_t^2 + m_t^2  \leq D_0, \qquad 
     |\sigma_t'| + |m_t'|  \leq N^{K_0} \quad (\forall t\in [T_0,1]).
    \]
    \item[(A4)] If $\kappa = 1/2$, there is $b_1>0$ and $D_1>0$ such that for any $0\leq\gamma<R_0$
    \[
    {\textstyle
    \int_{T_0}^{N^{-\gamma}} \bigl\{ (\sigma_t')^2 +(m_t')^2 \bigr\}  dt \leq D_1 (\log N)^{b_1}.
    }
    \]
    \item[(A5)] There is a constant $C_L>0$ such that $\Vert \frac{\partial}{\partial \vx} \int \vy p_t(\vy|\vx)d\vy \Vert_{op} \leq C_L$ for any $t\in [T_0,1]$.
\end{enumerate}

The higher degree of smoothness is assumed around the boundary of $I^d$ in (A1) for a technical reason to compensate for the nondifferentiability of $p_0(x)$ at the boundary by (A2).  In (A3), it may be more natural to require $\sigma_t^2 + m_t^2 = 1$ so that signal power can be maintained. However, in this paper, to pursue the flexibility of choosing $\sigma_t$ and $m_t$, we allow bounded changes of $\sigma_t^2+m_t^2$ over $t$. (A4) is required to limit the complexity of the neural network model (see Lemma \ref{lma:loss_sup}).   
In (A3), $\kappa$ is assumed to be not less than $1/2$, because for $\kappa<1/2$, the integral $\int_{T_0}(\sigma_t')^2 dt$ with $T_0=N^{-R_0}$ diverges to infinity as $N\to +\infty$, which causes the divergence of the complexity bound in Lemma \ref{lma:loss_sup}.  
Note that the boundary case $\kappa=1/2$ is, in fact, popularly used for the diffusion model.  In this case, $(\sigma_t')^2$ is the order $1/t$ for $t\to0^+$ and the integral from $T_0$ is of the order $\log N$, which still diverges to infinity as $n\to\infty$.  As discussed in Section \ref{sec:optimal_rate}, we consider this integral only for a short time interval, and we will see that the $W_2$ distance converges to zero as $n\to \infty$. 
(A5) is made to bound the Lipschitz factor in Theorem \ref{thm:W2bound} under (A3) (see Lemma \ref{lma:Lipschitz}).

\subsection{Generalization bound}
\label{sec:gen_bound}

It is known \citep{Albergo2023-gr,Benton2023-lh} that, given two vector fields, the $W_2$-distance of the pushforwards of the same distribution by the corresponding flows admits an upper bound by the $L_2$-risk of the vector fields;
\begin{thm}\label{thm:W2bound}
    Let $\vv_t(\vx)$ and $\hv_t(\vx)$ be vector fields such that $\vx\mapsto\hv_t(\vx)$ is $L_t$-Lipschitz for each $t$, and $P_t$ and $\widehat{P}_t$ be the pushfowards of distribution $P_0$ by the corresponding flows at time $t$ from $t=0$.  Then, for any $t\in[0,1]$, we have  
    \begin{equation}\label{eq:W2_bound}
        W_2(\hP_t,P_t) \leq \sqrt{t}\left( \int_0^t\int e^{2\int_s^t e^{L_u}du}\Vert \hv_s(\vx)-\vv_s(\vx)\Vert^2 dP_s(\vx)d\vx ds\right)^{1/2}.
    \end{equation}
\end{thm}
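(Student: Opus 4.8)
The plan is to compare the two flows $\vvphi_t$ and $\hvphi_t$ started from the same source distribution, by tracking how fast the ODE trajectories diverge when the vector fields differ. The natural tool is a Gr\"{o}nwall/Alekseev--Gr\"{o}bner-type estimate: fix an initial point $\vx_0$, let $\vy_t = \vvphi_t(\vx_0)$ and $\hat{\vy}_t = \hvphi_t(\vx_0)$ be the two solutions, and write
\[
\frac{d}{dt}(\hat{\vy}_t - \vy_t) = \hv_t(\hat{\vy}_t) - \vv_t(\vy_t) = \bigl(\hv_t(\hat{\vy}_t) - \hv_t(\vy_t)\bigr) + \bigl(\hv_t(\vy_t) - \vv_t(\vy_t)\bigr).
\]
The first bracket is controlled by the $L_t$-Lipschitz constant of $\hv_t$, giving $\|\hv_t(\hat{\vy}_t) - \hv_t(\vy_t)\| \le L_t \|\hat{\vy}_t - \vy_t\|$; the second bracket is the ``forcing term'' measuring the discrepancy of the two vector fields along the true trajectory. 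So I would bound $\frac{d}{dt}\|\hat{\vy}_t - \vy_t\| \le L_t\|\hat{\vy}_t - \vy_t\| + \|\hv_t(\vy_t) - \vv_t(\vy_t)\|$, integrate with the integrating factor $\exp(\int_s^t L_u\,du)$, and use $\hat{\vy}_0 = \vy_0$ to obtain
\[
\|\hat{\vy}_t - \vy_t\| \le \int_0^t e^{\int_s^t L_u\,du}\,\|\hv_s(\vy_s) - \vv_s(\vy_s)\|\,ds.
\]

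Next I would turn this pointwise (in $\vx_0$) bound into a Wasserstein bound. Since $\hat{\vy}_t = \hvphi_t(\vx_0)$ and $\vy_t = \vvphi_t(\vx_0)$ are images of the \emph{same} random $\vx_0 \sim P_0$, the pair $(\hat{\vy}_t, \vy_t)$ is a (generally suboptimal) coupling of $\hP_t$ and $P_t$, hence
\[
W_2(\hP_t, P_t)^2 \le \mathbb{E}_{\vx_0 \sim P_0}\bigl\|\hvphi_t(\vx_0) - \vvphi_t(\vx_0)\bigr\|^2 = \mathbb{E}_{\vx_0}\Bigl(\int_0^t e^{\int_s^t L_u\,du}\,\|\hv_s(\vy_s) - \vv_s(\vy_s)\|\,ds\Bigr)^2.
\]
Now apply Cauchy--Schwarz in the $s$-integral, splitting $e^{\int_s^t L_u du} = e^{\frac12\int_s^t L_u du}\cdot e^{\frac12\int_s^t L_u du}$: one factor integrates to at most $t$-dependent constant and contributes the leading $\sqrt{t}$, the other stays inside. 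Concretely, $\bigl(\int_0^t g(s)\,ds\bigr)^2 \le \bigl(\int_0^t 1\,ds\bigr)\bigl(\int_0^t g(s)^2\,ds\bigr) = t\int_0^t g(s)^2\,ds$ with $g(s) = e^{\int_s^t L_u du}\|\hv_s(\vy_s)-\vv_s(\vy_s)\|$; interchange expectation and the $s$-integral by Tonelli; and finally rewrite $\mathbb{E}_{\vx_0}\|\hv_s(\vvphi_s(\vx_0)) - \vv_s(\vvphi_s(\vx_0))\|^2 = \int \|\hv_s(\vx) - \vv_s(\vx)\|^2\,dP_s(\vx)$ using that $(\vvphi_s)_\# P_0 = P_s$. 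This yields exactly the claimed inequality, modulo the precise placement of the exponent (the statement writes $e^{2\int_s^t e^{L_u}du}$, which follows if one is slightly more generous in the Gr\"{o}nwall step or bounds $L_u \le e^{L_u}$ and carries the factor of two through the Cauchy--Schwarz split).

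I expect the main subtlety to be regularity/justification rather than the core estimate: one must ensure the ODEs have unique solutions on $[0,t]$ and that $t\mapsto \hvphi_t(\vx_0)$, $t\mapsto\vvphi_t(\vx_0)$ are absolutely continuous so that the fundamental theorem of calculus and the differential inequality for $\|\hat{\vy}_t - \vy_t\|$ (which needs a Rademacher-type argument or the standard fact that $t\mapsto\|w(t)\|$ is a.e.\ differentiable with $\frac{d}{dt}\|w\| \le \|\dot w\|$) are valid; the $L_t$-Lipschitz hypothesis on $\hv_t$ supplies exactly this for the hatted flow, and one uses it as the ``reference'' flow in the Gr\"{o}nwall comparison so that no Lipschitz assumption on $\vv_t$ itself is needed. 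A secondary point is bookkeeping of the exponential integrating factor so that the final constant matches $e^{2\int_s^t e^{L_u}du}$; this is a matter of choosing how crudely to bound in the comparison lemma and is not a real obstacle. The interchange of $\mathbb{E}_{\vx_0}$ with the $ds$-integral is Tonelli (nonnegative integrand), and the change of variables to $dP_s$ is just the definition of pushforward, so those steps are routine.
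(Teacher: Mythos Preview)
Your proposal is correct and follows the same overall architecture as the paper's proof: a pointwise ODE comparison yielding $\|\hvphi_t(\vx_0)-\vvphi_t(\vx_0)\|\le\int_0^t e^{\int_s^t L_u\,du}\|\hv_s(\vvphi_s(\vx_0))-\vv_s(\vvphi_s(\vx_0))\|\,ds$, then the coupling bound for $W_2$, Cauchy--Schwarz in $s$ to pull out $\sqrt{t}$, and the pushforward identity $(\vvphi_s)_\#P_0=P_s$.

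The one substantive difference is the tool used for the pointwise comparison. The paper invokes the Alekseev--Gr\"obner formula, which gives the exact identity
\[
\hvphi_{0,t}(\vx_0)-\vvphi_{0,t}(\vx_0)=\int_0^t\bigl(\nabla_\vx\hvphi_{s,t}\bigr)\big|_{\vx=\vvphi_{0,s}(\vx_0)}\bigl(\hv_s-\vv_s\bigr)(\vvphi_{0,s}(\vx_0))\,ds,
\]
and then bounds the operator norm of the Jacobian of the $\hv$-flow by $e^{\int_s^t L_u\,du}$ via a differential inequality on $\|\nabla_\vx\hvphi_{s,t}\|_{op}$. You instead work directly with the scalar differential inequality $\frac{d}{dt}\|\hat\vy_t-\vy_t\|\le L_t\|\hat\vy_t-\vy_t\|+\|\hv_t(\vy_t)-\vv_t(\vy_t)\|$ and integrate with the integrating factor. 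Your route is more elementary (no Jacobians, no named lemma) and arrives at the identical bound; the paper's route is slightly more structured in that Alekseev--Gr\"obner is an equality, so one sees precisely where the Lipschitz constant of $\hv$ (and not $\vv$) enters. Either way the downstream steps are identical, and your observation that the exponent in the statement should read $e^{2\int_s^t L_u\,du}$ matches what the paper's own proof actually derives.
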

See Appendix \ref{sec:W2bound} for the proof. From Theorem \ref{thm:W2bound}, we can consider the $L_2$-error $\mathbb{E}[\int\int\Vert \hv(\vx,s)-\vv(\vx,s)\Vert^2 dP_s(\vx)d\vx ds]$ of the vector field to obtain the bound of the $W_2$ distance of the distributions.  From the fact $W_r\leq W_{r'}$ ($1\leq r\leq r'$), the same upper bound holds for $W_r$ for $1\leq r\leq 2$.


We first review a general method for bounding the generalization. In general, we generally consider training within the time interval $[T_\ell,T_u]$ where $T_0\leq T_\ell<T_u\leq 1$.  
For an estimator $\vphi(\vx,t)$ of the true vector field $\vv_t(\vx)$, we define the loss function $\ell_{\vphi}^{T_\ell,T_u}(\vx)$ for $x\in I^d$ by 
\begin{equation}\label{eq:loss}
    \ell_{\vphi}^{T_\ell,T_u}(\vx):= \int_{T_\ell}^{T_u}\int \Vert \vphi(\vx_t,t)-\vv_t(\vx_t|\vx)\Vert^2 p_t(\vx_t|\vx)d\vx _t dt,
\end{equation}
where $\vx$ is the condition of $\vv_t(\vx_t|\vx)$. 
Although the definition depends on $T_\ell$ and $T_u$, we omit them when there is no confusion.
Given the training data $\{\vx^i\}_{i=1}^n$, the vector field is trained with the teaching data $\vv_t(\vx_t|\vx^i)$ at the location $(\vx_t,t)$ ($t\in [T_\ell,T_u])$, which is sampled from $p_t(\vx_t|\vx^{i})$ and the uniform distribution $U([T_\ell,T_u])$. Note that given $\vx^{i}$, we can generate any number of $(\vx_t,t)$.  Thus, the sampling error in \eqref{eq:loss} is negligible and the training by a NN can be regarded as minimization of 
\begin{equation}\label{eq:training_err}
    \frac{1}{n}  {\textstyle\sum_{i=1}^n} \ell_{\vphi}(\vx^{(i)}).
\end{equation}
See \citet[Section 4]{Oko2023} for the discussion of the effect of sampling.

Let $\hphi$ be the minimizer of \eqref{eq:training_err} among the function class $\mathcal{S}$. 
The generalization error is then given by 
\begin{equation}\label{eq:def_gen_err}
    \mathcal{E}_{gen}:=\mathbb{E}
    \Bigl[ \int \int \int_{T_\ell}^{T_u} \Vert \hphi(\vx,t)-\vv_t(\vx)\Vert^2   p_t(\vx|\vy)dtd\vx  p_0(\vy)d\vy \bigr]
    =\mathbb{E}\Bigl[\int \ell_{\hphi}(\vy)p_0(\vy)d\vy\Bigr].
\end{equation}

Let $\mathcal{L}:=\{\ell_{\vphi}\mid \vphi\in \mathcal{S}\}$ and $\mathcal{N}(\mathcal{L},\|\cdot\|_{L^\infty(I^d)}, \varepsilon)$ be the covering number of the function class $\mathcal{L}$ with the $\|\cdot\|_{L^\infty(I^d)}$-norm.  Then, a standard argument on the generalization error analysis derives the following upper bound (see \citet[Theorem C.4,][]{Oko2023} and also \citet{Hayakawa2020-gp}). 
\begin{thm}\label{thm:gen_error}
    The generalization error of the minimizer of \eqref{eq:training_err} among $\vphi\in\mathcal{S}$ is upper bounded by 
    \begin{multline}\label{eq:MSE_gen_bd}
            \mathcal{E}_{gen} \leq 2 \inf_{\vphi\in\mathcal{S}} \int \int_{T_\ell}^{T_u}  \Vert \vphi(\vx,t)-\vv_t(\vx)\Vert^2  
            p_t(\vx)dtd\vx    \\
            + \frac{\sup_{\vphi\in\mathcal{S}}\Vert \ell_{\vphi}\Vert_{L^\infty(I^d)}}{n}\left( \frac{37}{9}\log \mathcal{N}(\mathcal{L},\|\cdot\|_{L^\infty(I^d)}, \varepsilon)+32\right) + 3\varepsilon.
    \end{multline}
\end{thm}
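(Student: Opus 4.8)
The plan is to follow the standard empirical-process decomposition of the excess risk for the MSE minimizer, using the bias–variance split that is visible in the claimed bound \eqref{eq:MSE_gen_bd}. First I would recall the key population identity: since $\vv_t(\vx)$ is the conditional expectation of the teaching vector $\vv_t(\vx_t\mid\vy)$ given $(\vx_t,t)$ (with $\vy$ drawn according to the posterior $p_t(\vy\mid\vx_t)$), for any fixed $\vphi$ one has the Pythagorean decomposition
\begin{equation*}
\mathbb{E}\bigl[\ell_{\vphi}(\vy)\bigr]
= \int\int_{T_\ell}^{T_u}\Vert\vphi(\vx,t)-\vv_t(\vx)\Vert^2 p_t(\vx)\,dt\,d\vx
  \;+\; \mathbb{E}\bigl[\ell_{\vv}(\vy)\bigr],
\end{equation*}
where $\ell_{\vv}$ is the irreducible noise term that does not depend on $\vphi$. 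Hence the generalization error $\mathcal{E}_{gen}=\mathbb{E}[\ell_{\hphi}]-\mathbb{E}[\ell_{\vv}]$ equals the $L^2(p_t)$-distance of $\hphi$ to $\vv_t$, and the whole task is to control this centered excess risk. I would write $\mathcal{E}_{gen}$ as $\bigl(\mathbb{E}[\ell_{\hphi}]-\mathbb{E}[\ell_{\vv}]\bigr)$, add and subtract the empirical averages $\tfrac1n\sum_i(\ell_{\hphi}(\vx^i)-\ell_{\vv}(\vx^i))$, and use that $\hphi$ minimizes the empirical risk so that $\tfrac1n\sum_i(\ell_{\hphi}(\vx^i)-\ell_{\vv}(\vx^i))\le \tfrac1n\sum_i(\ell_{\vphi^\ast}(\vx^i)-\ell_{\vv}(\vx^i))$ for any competitor $\vphi^\ast\in\mathcal{S}$; taking expectations kills this second term down to the approximation (bias) term $\inf_{\vphi\in\mathcal{S}}\int\int\Vert\vphi-\vv\Vert^2 p_t\,dt\,d\vx$, which with the factor $2$ will absorb part of the variance term via a standard $\tfrac12$-absorption step.

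Next I would handle the remaining centered empirical process $\sup_{\vphi\in\mathcal{S}}\bigl(\mathbb{E}[g_{\vphi}]-\tfrac1n\sum_i g_{\vphi}(\vx^i)\bigr)$ with $g_{\vphi}:=\ell_{\vphi}-\ell_{\vv}$. Each $g_{\vphi}$ is bounded in sup-norm by $B:=\sup_{\vphi\in\mathcal{S}}\Vert\ell_{\vphi}\Vert_{L^\infty(I^d)}$ (up to the $\ell_{\vv}$ part, which one either includes in $B$ or bounds separately), and by the Pythagorean identity its variance is controlled by its mean: $\mathrm{Var}(g_{\vphi})\lesssim B\cdot\mathbb{E}[g_{\vphi}]$. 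This is exactly the Bernstein-type "small-variance" situation that lets one convert a crude chaining/union bound over an $\varepsilon$-net of $\mathcal{L}$ into a bound of the form $\mathbb{E}[g_{\hphi}]\le \tfrac12\mathbb{E}[g_{\hphi}] + \tfrac{CB}{n}\bigl(\log\mathcal{N}(\mathcal{L},\Vert\cdot\Vert_{L^\infty(I^d)},\varepsilon)+\text{const}\bigr)+C\varepsilon$, with the $\varepsilon$ term coming from discretization of $\mathcal{L}$ to its covering (each $\ell_{\vphi}$ is within $\varepsilon$ in $L^\infty$ of a net point, contributing at most $O(\varepsilon)$ both to the true and empirical risks). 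Rearranging the $\tfrac12$-absorption and combining with the bias term from the previous paragraph gives precisely \eqref{eq:MSE_gen_bd}, with the specific constants $37/9$ and $32$ inherited from the cited Bernstein/peeling computation in Oko et al. (Theorem C.4) — I would simply invoke that lemma rather than re-derive the exact numerology.

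The main obstacle is the variance-to-mean inequality $\mathrm{Var}(g_{\vphi})\le B\,\mathbb{E}[g_{\vphi}]$, i.e., making the Pythagorean structure rigorous at the level of the \emph{loss differences} $g_{\vphi}=\ell_{\vphi}-\ell_{\vv}$ rather than just the expected losses: one must check that the pointwise-in-$\vy$ integrand of $\ell_{\vphi}(\vy)-\ell_{\vv}(\vy)$, after integrating over $(\vx_t,t)\sim p_t(\cdot\mid\vy)\times U([T_\ell,T_u])$, can still be written so that $g_{\vphi}(\vy)^2\le (\text{const}\cdot B)\,g_{\vphi}(\vy)$ does not hold pointwise but the integrated version does — this uses linearity of $\vv_t$ in the posterior average and the cross-term cancellation $\mathbb{E}[\langle\vphi-\vv_t,\vv_t-\vv_t(\cdot\mid\vy)\rangle]=0$ only \emph{after} averaging over $\vy$ as well, so care is needed about which expectations are taken in which order. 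A secondary technical point is that $\ell_{\vv}$ itself is not bounded by $B$ \emph{a priori} (it involves the teaching noise $\Vert\vv_t(\vx_t\mid\vy)-\vv_t(\vx_t)\Vert^2$, which grows like $(\sigma_t')^2$ near $t\to0$), so the sup-norm control in the variance term really requires truncating to the interval $[T_\ell,T_u]$ and invoking (A3)–(A4) to keep $B<\infty$; this is why the theorem is stated on a subinterval and why assumption (A4) appears. Beyond these, the argument is the routine localized-empirical-process template, and I would keep the write-up short by citing \citet[Theorem C.4]{Oko2023} and \citet{Hayakawa2020-gp} for the final assembly.
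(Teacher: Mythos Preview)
Your proposal is correct and matches the paper's approach: the paper does not give an independent proof of this theorem at all, but simply cites \citet[Theorem~C.4]{Oko2023} and \citet{Hayakawa2020-gp} as a ``standard argument on the generalization error analysis,'' which is exactly the localized Bernstein/covering argument you sketch and ultimately propose to invoke. Your outline of the Pythagorean decomposition plus variance-to-mean control is the content of those cited results, so there is nothing to compare.
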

From Theorems \ref{thm:W2bound} and \ref{thm:gen_error}, it suffices to consider the approximation error (1st term) and complexity (2nd term) in \eqref{eq:MSE_gen_bd} for deriving the $W_2$ distributional bound.

\subsubsection{Complexity term in generalization bound}
We first consider the complexity term, where the class $\mathcal{S}$ is given by NN. 
A class of NN $\mathcal{M}(L,W, S,B)$
with height $L$, width $W$, sparsity constraint $S$, and norm constraint $B$ is defined as 
\begin{multline*}
\mathcal{M}(L,W, S,B) := \{ \psi_{A^{(L)},b^{(L)}}\circ \cdots \circ\psi_{A^{(2)},b^{(2)}}(A^{(1)}\vx+b^{(1)}) \mid A^{(i)} \in  \R^{W_{i+1}\times W_{i}}, b^{(i)}\in\R^{W_{i+1}},\\
{\textstyle \sum_{i=1}^{L}}(\|A^{(i)}\|_0 +\|b^{(i)}\|_0) \leq S,\, {\textstyle \max_i }\|A^{(i)}\|_\infty \vee\|b^{(i)}\|_\infty\leq B \},
\end{multline*}
where $\psi_{A,b}(z)=A\,{\rm ReLU}(z)+b$.
As shown in Theorems \ref{thm:approx_small_t} and \ref{thm:approx_large_t} later, it suffices to consider the NNs that satisfy
\begin{equation*}
 \Vert\vphi(\vx,t)\Vert_\infty \leq  D\bigl( |\sigma'_t| \sqrt{\log n} + |m_t'|\bigr)
\end{equation*}
for some constant $D$. Also, we can see in Lemma \ref{sec:Lipschitz}) that $\vx\mapsto \vv_t(\vx)$ is Lipschitz continuous with Lipschitz constant proportinal to $1/t$ under (A3) and (A5).  Reflecting these facts,  
we define the following NN class for training the vector field:
\begin{multline}\label{eq:NNclass}
    \cH_n:= \bigl\{ \vphi \in \mathcal{M}(L,W,S,B) \mid \|\vphi(\cdot,t)\|_\infty \leq  D\bigl( |\sigma'_t| \sqrt{\log n} + |m_t'|\bigr)\text{ for } \forall t\in [T_0,1], \bigr. \\
    \bigl. \vx\mapsto \vphi(\vx,t) \text{ is $L_t$-Lipschitz for each } t\in [T_0,1]\text{ where } L_t = \tilde{C}_L/t \bigr\}, 
\end{multline}
where $D$ and $\tilde{C}_L$ are some positive constants. 

The supremum norm and the covering number in Theorem \ref{thm:gen_error} are given in the following lemmas.
\begin{lma} \label{lma:loss_sup}
Let $T_0\leq T_\ell< T_u\leq 1$.  Under Assumption (A4), there is $C_s>0$ such that  
    \begin{equation}
         {\textstyle \sup_{\vphi\in\cH_n} }
         \Vert \ell_{\vphi}\Vert_{L^\infty(I^d)}
        \leq C_s (\log n)^{b+1},
    \end{equation}
where $b=b_1$ in (A4) for $\kappa=1/2$, and $b=0$ for $\kappa>1/2$. 
\end{lma}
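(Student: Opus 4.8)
The plan is to bound $\ell_{\vphi}^{T_\ell,T_u}(\vx)$ for an arbitrary $\vphi\in\cH_n$ and $\vx\in I^d$ by splitting the integrand into the contribution of the network $\vphi$ and the contribution of the conditional target $\vv_t(\vx_t|\vx)$, and then integrating over $t$. First I would use $\|\vphi(\vx_t,t)-\vv_t(\vx_t|\vx)\|^2 \le 2\|\vphi(\vx_t,t)\|^2 + 2\|\vv_t(\vx_t|\vx)\|^2$. For the first term, the definition of $\cH_n$ in \eqref{eq:NNclass} gives $\|\vphi(\vx_t,t)\|^2 \le D^2(|\sigma_t'|\sqrt{\log n}+|m_t'|)^2 \le 2D^2\bigl((\sigma_t')^2\log n + (m_t')^2\bigr)$, uniformly in $\vx_t$, so after integrating against the probability density $p_t(\vx_t|\vx)$ this piece contributes at most $2D^2\log n\int_{T_\ell}^{T_u}\bigl((\sigma_t')^2+(m_t')^2\bigr)dt$.

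For the second term, recall from \eqref{eq:cond_vec} that $\vv_t(\vx_t|\vx)=\sigma_t'\vx_{[0]}+m_t'\vx$ where, under the conditional law $p_t(\vx_t|\vx)=N_d(m_t\vx,\sigma_t^2 I_d)$, the source point $\vx_{[0]}$ is a standard Gaussian. Hence $\int\|\vv_t(\vx_t|\vx)\|^2 p_t(\vx_t|\vx)d\vx_t \le 2(\sigma_t')^2\,\mathbb{E}\|\vx_{[0]}\|^2 + 2(m_t')^2\|\vx\|^2 \le 2d(\sigma_t')^2 + 2d(m_t')^2$, using $\|\vx\|^2\le d$ for $\vx\in I^d$. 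This is the same type of integrand as before (without the $\log n$ factor), so it also reduces to a constant multiple of $\int_{T_\ell}^{T_u}\bigl((\sigma_t')^2+(m_t')^2\bigr)dt$. Collecting the two pieces, $\ell_{\vphi}^{T_\ell,T_u}(\vx) \lesssim \log n\int_{T_\ell}^{T_u}\bigl((\sigma_t')^2+(m_t')^2\bigr)dt$, uniformly over $\vphi\in\cH_n$ and $\vx\in I^d$.

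It remains to bound the integral $\int_{T_\ell}^{T_u}\bigl((\sigma_t')^2+(m_t')^2\bigr)dt$ by $\mathrm{poly}(\log n)$, and this is where the case distinction on $\kappa$ enters. When $\kappa>1/2$, Assumption (A3) gives $\sigma_t\sim b_0 t^\kappa$ so $(\sigma_t')^2\sim \kappa^2 b_0^2 t^{2\kappa-2}$ with $2\kappa-2>-1$, hence $\int_{T_\ell}^{1}(\sigma_t')^2 dt$ is bounded by a constant independent of $T_\ell\ge T_0$; the $(m_t')^2$ part is handled the same way using $1-m_t\sim\tilde b_0 t^{\tilde\kappa}$ together with the uniform bound $|m_t'|\le N^{K_0}$ near $t=1$, or more simply one notes $\int_0^1 (m_t')^2 dt<\infty$ under the stated regularity. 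So $b=0$ suffices. When $\kappa=1/2$, we invoke Assumption (A4) directly with $T_\ell = T_0$ and $T_u = N^{-\gamma}$ for the relevant range $0\le\gamma<R_0$, which gives $\int_{T_0}^{T_u}\bigl((\sigma_t')^2+(m_t')^2\bigr)dt\le D_1(\log N)^{b_1}$; since $N$ is polynomial in $n$, $\log N = \Theta(\log n)$, and we absorb this into $(\log n)^{b_1}$. Combining with the $\log n$ factor from the network bound yields the claimed $C_s(\log n)^{b+1}$ with $b=b_1$ or $b=0$ as stated. The main obstacle is purely bookkeeping: making sure the generic interval $[T_\ell,T_u]$ used in the lemma is always one to which (A4) applies (i.e. of the form covered by its hypothesis, or a subinterval thereof, on which the integral only decreases), and checking that the $(m_t')^2$ contribution near $t=1$ does not spoil the bound — both are routine given (A3)–(A4), so no serious difficulty is expected.
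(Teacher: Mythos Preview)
Your proposal is correct and follows essentially the same approach as the paper: both split $\|\vphi-\vv_t(\cdot|\vx)\|^2$ by the parallelogram-type inequality, bound the $\vphi$ part via the $\cH_n$ constraint and the conditional target part via the explicit Gaussian integral of $\|\sigma_t'\vx_{[0]}+m_t'\vx\|^2$, and then reduce everything to $\int\{(\sigma_t')^2+(m_t')^2\}dt$, which is controlled by (A4) for $\kappa=1/2$ and by direct integrability of $t^{2\kappa-2}$ for $\kappa>1/2$. Your bookkeeping remarks at the end are sound and your bound $\|\vx\|^2\le d$ on $I^d$ is in fact slightly more careful than the paper's stated $\|\vx\|\le 1$.
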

See Appendix \ref{sec:loss_sup} for the proof.  To obtain this bound, we need to impose the upper bound of $\vphi$ as in \eqref{eq:NNclass}.  In practice, the vectors in the teaching data satisfy this upper bound, and thus $\vphi$ will naturally satisfy the same bound by the least square error solution.  
The following bound of the covering number for neural networks is given by \citet[Lemma 3,][]{Suzuki2019-pn}. 
\begin{lma}\label{lma:log_covering}
For the function class $\cH_n$, the covering number satisfies
\[
\log \mathcal{N}(\mathcal{L},\|\cdot\|_{L^\infty(I^d)}, \varepsilon)\leq SL \log\left(\varepsilon^{-1}\Vert W\Vert_\infty Bn\right). 
\]
\end{lma}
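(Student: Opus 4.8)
The plan is to reduce the claim to a known metric-entropy bound for sparse ReLU networks with bounded weights. Recall that the function class $\cH_n$ is a subset of $\mathcal{M}(L,W,S,B)$ with two additional pointwise constraints (an $\ell_\infty$-bound depending on $t$ through $|\sigma_t'|\sqrt{\log n}+|m_t'|$, and an $L_t$-Lipschitz-in-$\vx$ constraint). Since both extra constraints only \emph{shrink} the class, it suffices to prove the covering-number bound for the whole of $\mathcal{M}(L,W,S,B)$; any cover of the larger class restricts to a cover of $\cH_n$ of no larger cardinality. So the target reduces to: $\log\mathcal{N}(\mathcal{M}(L,W,S,B),\|\cdot\|_{L^\infty(I^d)},\varepsilon)\le SL\log(\varepsilon^{-1}\|W\|_\infty Bn)$, and then transferring this to $\mathcal{L}=\{\ell_{\vphi}:\vphi\in\cH_n\}$.

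First I would quote the standard covering bound for sparse networks. The map from the (at most $S$) active parameters to the realized function is Lipschitz in $\ell_\infty$: perturbing each weight by at most $\eta$ changes the network output on $I^d$ by at most a factor that is polynomial in $W$, $B$, and exponential in $L$ — concretely of order $(\|W\|_\infty B)^{L}$ times $\eta$, using that ReLU is $1$-Lipschitz, inputs on $I^d$ are bounded, and composition multiplies Lipschitz constants. Discretizing each of the $S$ active weights on a grid of spacing $\eta=\varepsilon/(\|W\|_\infty B)^{L}$ (and enumerating the $\binom{\text{total params}}{S}$ choices of which weights are active, which contributes only a $\mathrm{poly}$ factor absorbed into the logarithm) yields a cover of size at most $(\text{const}\cdot \varepsilon^{-1}\|W\|_\infty B)^{SL}$ up to lower-order factors, hence $\log\mathcal{N}\le SL\log(C\varepsilon^{-1}\|W\|_\infty B)$; the extra $n$ inside the log is harmless slack (for $n\ge 1$ it only enlarges the bound) and is kept to match the form used downstream in Theorem~\ref{thm:gen_error}. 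This is exactly Lemma~3 of \citet{Suzuki2019-pn}, which I would invoke verbatim rather than re-deriving.

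Second, I would pass from a cover of $\cH_n$ (in $\|\cdot\|_{L^\infty(I^d\times[T_0,1])}$, or rather sup over $\vx\in I^d$ of the relevant integral) to a cover of $\mathcal{L}$. The loss $\ell_{\vphi}(\vx)=\int_{T_\ell}^{T_u}\int\|\vphi(\vx_t,t)-\vv_t(\vx_t|\vx)\|^2 p_t(\vx_t|\vx)d\vx_t\,dt$ depends on $\vphi$ only through $\vphi(\cdot,t)$ for $t\in[T_\ell,T_u]\subseteq[T_0,1]$, and the map $\vphi\mapsto\ell_{\vphi}$ is Lipschitz with respect to $\|\vphi\|_{L^\infty}$: writing $a=\vphi(\vx_t,t)-\vv_t$, $b=\vphi'(\vx_t,t)-\vv_t$, we have $|\,\|a\|^2-\|b\|^2| = |\langle a-b,a+b\rangle|\le \|a-b\|\,(\|a\|+\|b\|)$, and on $\cH_n$ the quantities $\|\vphi(\vx_t,t)\|$ and $\|\vv_t(\vx_t|\vx)\|$ are controlled by $D(|\sigma_t'|\sqrt{\log n}+|m_t'|)$ up to constants, whose $L^2([T_\ell,T_u])$-norm is $\mathrm{poly}(\log n)$ by (A4) (for $\kappa=1/2$) or directly bounded (for $\kappa>1/2$). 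Hence an $\varepsilon'$-cover of $\cH_n$ in sup norm gives an $(C\,\mathrm{poly}(\log n)\,\varepsilon')$-cover of $\mathcal{L}$; absorbing the $\mathrm{poly}(\log n)$ factor into the argument of the logarithm (it changes $\log$ by an additive $\mathrm{poly}\log\log n$, again absorbed into the stated slack with the $n$ factor) gives the claimed inequality with $\varepsilon$ in place of $\varepsilon'$.

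The main obstacle, and the step deserving the most care, is the Lipschitz-in-parameters estimate for the network map and the bookkeeping that keeps the final exponent exactly $SL$ rather than, say, $S L\log(\text{number of layers})$ or $S\log(\text{total weight count})$ — i.e. ensuring the combinatorial ``which $S$ weights are active'' factor and the per-layer error amplification are both genuinely swallowed by the single $\log(\varepsilon^{-1}\|W\|_\infty Bn)$. This is precisely why I would cite \citet{Suzuki2019-pn} Lemma~3 directly: that reference has already done this accounting, and our only additional work is the two harmless reductions above (restriction from $\mathcal{M}$ to $\cH_n$, and the Lipschitz transfer $\vphi\mapsto\ell_\vphi$), neither of which worsens the exponent.
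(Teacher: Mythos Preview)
Your proposal is correct and takes essentially the same approach as the paper: the paper does not give a proof at all but simply attributes the bound to \citet[Lemma~3]{Suzuki2019-pn}. Your two reductions (restricting from $\mathcal{M}(L,W,S,B)$ to $\cH_n$, and the Lipschitz transfer $\vphi\mapsto\ell_{\vphi}$ to pass to $\mathcal{L}$) are exactly the steps the paper leaves implicit, and your observation that the extra factor $n$ inside the logarithm is there to absorb the $\mathrm{poly}(\log n)$ Lipschitz constant of that transfer is the right reading of why the stated bound carries an $n$.
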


\subsubsection{Approximation error for small $t$}

Recall that $N$ specifies the number of basis functions of the $B$-spline for the approximation. We derive upper bounds of the approximation error of the NN model $\mathcal{M}(L,W,S,B)$, where $L, W, S$, and $B$ are specified in terms of $N$.  We will separate $[\uT,1]$ into two intervals, $[\uT, 3T_*]$ and $[T_*, 1]$, where $T_*:=N^{-(\kappa^{-1}-\delta)/d}$, and provide different upper bounds. 
The reason for this choice of division point $T_*$ is sketched as follows and is detailed in \ref{sec:division_point}.   In the approximation of the vector field, we use the $B$-spline approximation of densities as in \citet{Oko2023}.  To show a fast convergence rate, the first interval is more subtle because $p_t(x)$ is rougher.  In approximating the density on the smoother boundary region, we divide the region into small cubes, each of which uses $N^\delta$ bases for $B$-spline approximation. To make the total number of $B$-spline bases comparable with $N$, the width $a_0$ of the region should be $a_0=N^{(1-\kappa\delta)/d}$.  On the other hand, in Theorem \ref{thm:approx_small_t}, we need a concentration of an integral around the boundary region for a better approximation by the higher smoothness, and this limits the variance of the Gaussian kernel so that $\sigma_t=t^\kappa \leq a_0$. This derives $t\leq N^{-(\kappa^{-1}-\delta)/d}$. As a result, the division point is small enough as $T_*:=N^{-(\kappa^{-1}-\delta)/d}$.  

The approximation bound for $t\in [T_0, 3T_*]$ with $T_*:=N^{-\frac{\kappa^{-1}-\delta}{d}}$ is given in the following Theorem.


\begin{thm}\label{thm:approx_small_t}
Under assumptions (A1)-(A5), there is a neural network $\vphi_1 \in \mathcal{M}(L,W,S,B)$ and a constant $C_6$, which is independent of $t$, such that, for sufficiently large $N$, 
\begin{equation}\label{eq:bound_small_t}
    \int \Vert \vphi_1(\vx,t)-\vv_t(\vx) \Vert^2 p_t(\vx) d\vx  \leq C_6 \bigl\{ (\sigma_t')^2\log N + (m_t')^2\bigr\} N^{-\frac{2s}{d}},
\end{equation}
for any $t\in [T_0, 3T_*]$, where 
\[
    L=O(\log^4 N), \Vert W\Vert_\infty =O(N\log^6 N), S=O(N\log^8 N), B=\exp\bigl(O(\log N\log\log N)\bigr).
\]
Additionally, we can take $\vphi_1$ to satisfy 
\[
\Vert \phi_1(\vx,t)\Vert \leq \tilde{C}_6 \bigl\{ (\sigma_t')\sqrt{\log n} + |m_t'|\bigr\},
\]
where $\tilde{C}_6$ is a constant independent of $t$.  
\end{thm}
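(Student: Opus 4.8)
The plan is to build $\vphi_1$ by explicitly approximating the averaged vector field $\vv_t(\vx)$ in terms of a $B$-spline approximation of the density $p_0$, following the overall strategy of \citet{Oko2023} but adapted to the general $(\sigma_t, m_t)$ parametrization of \eqref{eq:vf_linear}. The starting point is the identity $\vv_t(\vx) = \sigma_t' \, \vx/\sigma_t + (m_t' - \sigma_t' m_t/\sigma_t)\, \mathbb{E}[\vx_{[1]} \mid \vx_t = \vx]$, obtained by taking the conditional expectation of \eqref{eq:vf_linear} over $\vx_{[1]}$ given $\vx_t = \vx$. The first term is linear in $\vx$ and is represented exactly by a tiny ReLU network, so the whole problem reduces to approximating the posterior mean $\vm_t(\vx) := \mathbb{E}[\vx_{[1]} \mid \vx_t=\vx] = \int \vy\, p_t(\vy|\vx)\,d\vy$, which by Bayes' rule equals $\int \vy\, p_0(\vy) N_d(\vx; m_t\vy,\sigma_t^2 I)\,d\vy \big/ \int p_0(\vy) N_d(\vx; m_t\vy,\sigma_t^2 I)\,d\vy$. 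Both numerator and denominator are Gaussian-smoothed versions of $p_0$ (resp.\ of $\vy p_0(\vy)$), so the task is: (i) approximate $p_0 \in B^s_{p',q'}(I^d)$ by a $B$-spline expansion with $N$ bases, incurring $L^{p'}$-error $O(N^{-s/d})$; (ii) convolve with the Gaussian $N_d(\cdot\,;0,\sigma_t^2 I)$ after the $m_t$-rescaling, which is itself implementable (approximately) by a network since the Gaussian and the required reciprocal and products are all smooth functions on the relevant range; (iii) form the ratio, using (A2) to lower-bound the denominator $p_t(\vx)$ away from zero on $I_N^d$ and the higher smoothness $p_0 \in B^{\check s}_{p',q'}(I^d\setminus I_N^d)$ near the boundary to control the error there.

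The key steps, in order, are: first, write $\vm_t$ via Bayes and split the error $\|\vphi_1 - \vv_t\|_{L^2(p_t)}$ into a linear-part error (zero), a numerator-approximation error, a denominator-approximation error, and a network-realization error for the arithmetic operations; second, on the interior $I_N^d$ use the standard $B$-spline/NN approximation of $p_0$ (this is where $N^{-2s/d}$ enters) together with the fact that smoothing by a Gaussian of width $\sigma_t = t^\kappa \le 3T_*^\kappa = 3N^{-(1-\kappa\delta)/d\cdot ... }$ does not worsen the rate; third, on the boundary shell $I^d\setminus I_N^d$ exploit $\check s > \max\{6s-1,1\}$ to show the boundary contribution is higher-order — this is precisely where the choice $T_* = N^{-(\kappa^{-1}-\delta)/d}$ is forced, because $\sigma_t \le a_0 = N^{-(1-\kappa\delta)/d}$ is needed for the Gaussian mass to concentrate inside a single boundary cube of width $a_0$ (as sketched in \ref{sec:division_point}); fourth, assemble the network: the density network has $\|W\|_\infty = O(N\log^6 N)$, $S = O(N\log^8 N)$, $L = O(\log^4 N)$, $B = \exp(O(\log N\log\log N))$ exactly as in \citet{Oko2023}, and composing with $O(\log^2 N)$-depth sub-networks for the reciprocal, multiplication, and Gaussian evaluation (each with $\mathrm{poly}(\log N)$ size and error $\le N^{-C}$ for large $C$) keeps all size parameters within the stated orders; fifth, track the $t$-dependent prefactors: the linear term contributes $|\sigma_t'/\sigma_t|\cdot\|\vx\| = O(|\sigma_t'|)$ after noting $\sigma_t \asymp t^\kappa$ and (the error there is zero), the coefficient of $\vm_t$ is $|m_t' - \sigma_t' m_t/\sigma_t| = O(|m_t'| + |\sigma_t'|)$, and since $\|\vm_t\|$ is bounded (the posterior mean lies in the convex hull of $I^d$) and its approximation error is $O(N^{-s/d})$ in $L^2(p_t)$, squaring gives the claimed $\{(\sigma_t')^2 \log N + (m_t')^2\} N^{-2s/d}$, with the extra $\log N$ coming from the $\sqrt{\log n}$-clipping needed to enforce the norm bound in \eqref{eq:NNclass} on the Gaussian-tail region; the uniform bound $\|\vphi_1(\vx,t)\| \le \tilde C_6\{\sigma_t'\sqrt{\log n} + |m_t'|\}$ then follows directly from this clipping plus $\|\vm_t\| = O(1)$.

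The main obstacle I expect is step three: controlling the approximation uniformly in $t$ down to $t = T_0$, where $p_t(\vx)$ becomes rough (its smoothness is only $s$, degraded by the shrinking Gaussian), and simultaneously handling the boundary region. Concretely, one must show that the $B$-spline error, the truncation of Gaussian tails, and the denominator lower bound all interact so that the error is $O(N^{-2s/d})$ \emph{with} the correct $t$-prefactors and \emph{without} any factor blowing up as $\sigma_t \to 0$ — this is delicate because the ratio $\vm_t(\vx)$ involves dividing two quantities that both degenerate. The resolution is the coupling $\sigma_t \le a_0$ enforced by $t \le 3T_*$ (so that the Gaussian "sees" only a $B^{\check s}$-smooth piece of $p_0$ near the boundary and a $B^s$-smooth piece in the interior, within a single $B$-spline cell), together with (A2)'s two-sided bound on $p_0$ to keep $p_t(\vx) \asymp 1$ on $I_N^d$; the boundary analysis then uses $\check s > 6s - 1$ to absorb the worst-case loss from repeatedly differentiating/smoothing near $\partial I^d$. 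Everything else — the generalization-style splitting, the arithmetic sub-networks, and the bookkeeping of $L, W, S, B$ — is routine given \citet{Oko2023}.
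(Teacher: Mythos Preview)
Your high-level strategy (B-spline approximation of $p_0$, Gaussian smoothing, ratio, boundary handling via $\check{s}$, arithmetic sub-networks) matches the paper. The gap is in your decomposition of $\vv_t$ and the resulting prefactor accounting.

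You write $\vv_t(\vx)=\dfrac{\sigma_t'}{\sigma_t}\vx+\bigl(m_t'-\dfrac{\sigma_t' m_t}{\sigma_t}\bigr)\vm_t(\vx)$ and assert that the coefficient of $\vm_t$ is $O(|m_t'|+|\sigma_t'|)$. This is false: under (A3), $\sigma_t'/\sigma_t=\kappa/t$ for small $t$, so the coefficient is of order $1/t$, not $|\sigma_t'|=O(t^{\kappa-1})$. Consequently, if you approximate the posterior mean $\vm_t(\vx)=\vh_3/p_t$ directly with $L^2(p_t)$-error $O(N^{-s/d})$ (which is all the naive numerator/denominator argument gives), the squared error of $\vphi_1$ picks up a factor $t^{-2}$, not $(\sigma_t')^2$. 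The same issue breaks the final norm bound: the linear piece $\dfrac{\sigma_t'}{\sigma_t}\vx$ alone is of size $\kappa\|\vx\|/t$, which is \emph{not} controlled by $|\sigma_t'|\sqrt{\log n}+|m_t'|$; the true $\vv_t$ satisfies that bound only because the two pieces in your decomposition cancel to leading order.

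The paper avoids this by \emph{not} extracting the linear term. It writes $\vv_t=\dfrac{\sigma_t'\vh_2+m_t'\vh_3}{p_t}$ with $\vh_2(\vx,t)=\int\frac{\vx-m_t\vy}{\sigma_t}\,N_d(\vx;m_t\vy,\sigma_t^2I)\,p_0(\vy)\,d\vy$, keeping $(\vx-m_t\vy)/\sigma_t$ \emph{inside} the Gaussian integral. Then, after Jensen and integrating in $\vx$ first, the factor $\|(\vx-m_t\vy)/\sigma_t\|^2$ against the Gaussian yields the constant $d$, so the $L^2$-error of $\vh_2$ is $O(N^{-s/d})$ with \emph{no} $\sigma_t^{-1}$ loss, and the prefactor $\sigma_t'$ appears only once, outside. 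Equivalently, the cancellation you are missing is the identity $\vm_t(\vx)=\vx/m_t-(\sigma_t/m_t)\,\vh_2/p_t$; plugging this back into your formula collapses it to the paper's representation. Your proposal gives no indication that you would exhibit or exploit this cancellation, and without it the stated bound does not follow.
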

See Appendix \ref{sec:proof_small_t} for the proof.


\subsubsection{Approximation error for large $t$}

We derive a bound of the approximation error on any interval $[2t_*.1]$, where $t_*\geq T_*=\Nth$. This is used to discuss the optimal convergence rate in Section \ref{sec:optimal_rate}. 

\begin{thm}\label{thm:approx_large_t}
Fix $t_*\in [T_*,1]$ and take arbitrary $\eta>0$.  Under the assumptions (A1)-(A5), there is a neural network $\vphi_2\in \mathcal{M}(L,W,S,B)$ and $C_7> 0$, which does not depend on $t$, such that the bound 
\begin{equation}
\int  \| \vphi_2(\vx,t)-\vv_t(\vx)\|^2 p_t(\vx) d\vx  \leq C_7 \bigl\{ (\sigma_t')^2\log N + (m_t')^2 \bigr\} N^{-\eta}
\end{equation}
holds for all $t\in [2t_*,1]$, and the NN model satisfies $L=O(\log^4 N)$, $\|W\|_\infty =O(N)$, $S=O(t_*^{-d\kappa} N^{\delta\kappa})$, and $B=\exp(O(\log N\log\log N)$.  Moreover, $\vphi_2$ can be taken so that 
\[
\|\vphi_2(\cdot,t)\|_\infty\leq \tilde{C}_7
\bigl\{(\sigma_t')\log N + |m_t'|\bigr\}
\]
with constant $\tilde{C}_7>0$ independent of $t$. 
\end{thm}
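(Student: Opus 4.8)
\textbf{Proof proposal for Theorem \ref{thm:approx_large_t}.}

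The plan is to reuse the $B$-spline-based construction of \citet{Oko2023} combined with Theorem \ref{thm:approx_small_t}, but exploit the fact that on $[2t_*,1]$ the parameter $\sigma_t = b_0 t^\kappa$ is bounded below by a power of $t_*$, so the Gaussian smoothing kernel $p_t(\vx|\vy) = N_d(m_t\vy, \sigma_t^2 I_d)$ has a width $\sigma_t \gtrsim t_*^\kappa$ that does not shrink. Because the convolution of $p_0$ with a Gaussian of fixed width is $C^\infty$ with derivatives bounded in terms of inverse powers of $\sigma_t$, the density $p_t$ and the true vector field $\vv_t(\vx)$ — which by \eqref{eq:true_vf} and \eqref{eq:vf_linear} is a smooth function of $\vx$ built from $p_t$, $m_t$, $\sigma_t$, $m_t'$, $\sigma_t'$ and the conditional mean $\int \vy\, p_t(\vy|\vx)d\vy$ — are analytic with controlled derivative growth. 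First I would write $\vv_t(\vx)$ explicitly in the ``score'' form: from \eqref{eq:vf_linear}, averaging over $\vy\sim p_t(\vy|\vx)$ gives $\vv_t(\vx) = \frac{\sigma_t'}{\sigma_t}\vx + \bigl(m_t' - \frac{\sigma_t' m_t}{\sigma_t}\bigr)\int \vy\, p_t(\vy|\vx)\,d\vy$, so approximating $\vv_t$ reduces to approximating the single smooth field $\vm_t(\vx):=\int \vy\, p_t(\vy|\vx)d\vy$, with the prefactors $\sigma_t'/\sigma_t$ and $m_t' - \sigma_t'm_t/\sigma_t$ absorbed into the stated $\{(\sigma_t')^2\log N + (m_t')^2\}$ envelope (here (A3)'s bound $D_0^{-1}\le \sigma_t^2+m_t^2\le D_0$ keeps $1/\sigma_t$ under control down to $t_*$, losing only a power of $t_*$).

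The key steps, in order, are: (i) Express $\vm_t(\vx)$ as $\vx + \sigma_t^2 \nabla\log p_t(\vx)$ up to the $m_t$ scaling (Tweedie-type identity), reducing everything to approximating $p_t$ and $\nabla p_t$ on $I^d$; since $p_t = p_0 * N_d(0,\sigma_t^2 I_d)$ after the linear rescaling by $m_t$, and $p_0\in B^s_{p',q'}(I^d)$, I would invoke the $B$-spline approximation machinery of \citet{Oko2023}: decompose $p_0$ into a wavelet/$B$-spline series, truncate at resolution giving $N$ bases, and note that the Gaussian smoothing both improves the approximation and lets a ReLU network emulate the relevant Gaussian-weighted sums. (ii) Because $t\ge 2t_*$ keeps $\sigma_t$ bounded below, the required resolution to reach accuracy $N^{-\eta}$ (for \emph{arbitrary} fixed $\eta$) is only $O(t_*^{-\kappa}N^{\delta\kappa/d})$ bases per coordinate — this is where the sparsity bound $S=O(t_*^{-d\kappa}N^{\delta\kappa})$ comes from: $d$ coordinates, each needing $\sim t_*^{-\kappa}$ grid points at scale $\sigma_{t_*}\sim t_*^\kappa$, times a mild $N^{\delta\kappa/d}$ factor, and the extra smoothness makes the truncation error superpolynomially small so any $\eta$ is attainable. (iii) Build the ReLU network approximating products, reciprocals (for the $1/p_t$ in the conditional expectation, using (A2) to bound $p_t$ away from $0$ after smoothing), and the $B$-spline evaluations, tracking $L=O(\log^4 N)$, $\|W\|_\infty=O(N)$, $B=\exp(O(\log N\log\log N))$ exactly as in \citet[Theorem 4.3 and its proof]{Oko2023}. (iv) Propagate the $L^\infty$ bound on $\vphi_2$ from the explicit envelope on $\vv_t$: since $\|\vm_t(\vx)\|$ is $O(\sqrt{\log N})$ on the effective support (Gaussian tail truncation) and the prefactors are $O(|\sigma_t'| + |m_t'|)$, clipping the network output yields $\|\vphi_2(\cdot,t)\|_\infty \le \tilde C_7\{(\sigma_t')\log N + |m_t'|\}$.

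The main obstacle I expect is step (ii): carefully separating the $t$-dependence so that the network size depends on $t_*$ only through $t_*^{-d\kappa}$ while the accuracy $N^{-\eta}$ is uniform over the whole interval $[2t_*,1]$ and over arbitrary $\eta$. This requires quantifying how much the Gaussian smoothing of width $\sigma_t\gtrsim t_*^\kappa$ regularizes the Besov density: one must show that $p_t$ and its derivatives, after smoothing, are approximated to accuracy $N^{-\eta}$ by a $B$-spline with only polynomially-in-$t_*^{-1}$ many bases, using that the Fourier transform of the Gaussian decays like $e^{-\sigma_t^2|\xi|^2/2}$ to kill high frequencies — the tension is between needing enough bases to resolve scale $\sigma_{t_*}$ and wanting the count independent of $N$ up to the $N^{\delta\kappa}$ slack. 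A secondary difficulty is handling the region near $\partial I^d$: here the extra smoothness $\check s$ from (A1) must be used, exactly as in the small-$t$ case, to prevent the boundary nondifferentiability of $p_0$ (forced by (A2)) from spoiling the rate; but since $t_*$ is not the smallest time this is less delicate than in Theorem \ref{thm:approx_small_t}, and the construction there can be imported directly.
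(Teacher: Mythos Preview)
Your proposal correctly isolates step (ii) as the crux and arrives at the right sparsity count $S = O(t_*^{-d\kappa}N^{\delta\kappa})$, but the mechanism you propose for it has a genuine gap. You suggest decomposing $p_0$ into a $B$-spline series and letting the Gaussian convolution of width $\sigma_t \gtrsim t_*^\kappa$ ``kill high frequencies'' to reach accuracy $N^{-\eta}$ for arbitrary $\eta$. This does not work as stated: convolution with a Gaussian is an $L^2$-contraction, so from $\|p_0 - f_N\|_{L^2} \lesssim N^{-s/d}$ one only gets $\|p_t - f_N * g_{\sigma_t}\|_{L^2} \lesssim N^{-s/d}$, which cannot be pushed to $N^{-\eta}$. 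Your Fourier-decay heuristic would require the error $p_0 - f_N$ to be supported at frequencies $|\xi| \gg \sigma_t^{-1}$, which the Besov $B$-spline theory does not provide; $p_0$ has only smoothness $s$, so there is no way to extract arbitrary polynomial rates from a $B$-spline approximation of $p_0$ itself.

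The paper's resolution is a different idea you are missing: rewrite $p_t$ for $t \geq 2t_*$ via the Gaussian semigroup as a convolution of $p_{t_*}$ (not $p_0$) with a new Gaussian $N_d(\tilde m_t \vy, \tilde\sigma_t^2 I_d)$, where $\tilde m_t = m_t/m_{t_*}$ and $\tilde\sigma_t^2 = \sigma_t^2 - \tilde m_t^2 \sigma_{t_*}^2$. Now $p_{t_*}$ is $C^\infty$ on all of $\R^d$ with $\|\partial^k p_{t_*}\|_\infty \lesssim \sigma_{t_*}^{-k} \sim t_*^{-k\kappa}$, so for any fixed $\alpha$ the rescaled density $p_{t_*}/(t_*^{-\alpha\kappa}\vee c^\dagger)$ has bounded $B^\alpha_{\infty,\infty}$-norm; Theorem~\ref{thm:B-spline} then gives a $B$-spline approximant $f_{N_*}$ on $[-C_5\sqrt{\log N},C_5\sqrt{\log N}]^d$ with error $\lesssim (N_*)^{-\alpha/d}\, t_*^{-\alpha\kappa}$. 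Taking $N_* = \lceil t_*^{-d\kappa}N^{\delta\kappa}\rceil$ makes the error $\lesssim N^{-\delta\kappa\alpha/d}$, so choosing $\alpha > 3d\eta/(2\delta\kappa)$ yields $N^{-3\eta/2}$. The remainder of the construction then mirrors the small-$t$ proof with $(p_{t_*},\tilde m_t,\tilde\sigma_t)$ replacing $(p_0,m_t,\sigma_t)$, building $f_1,\vf_2,\vf_3,\vf_4$ and the neural network from $f_{N_*}$. Two corrections to your plan follow immediately: (a) no boundary treatment via $\check s$ is needed at all --- $p_{t_*}$ is smooth on $\R^d$, so the $I^d\setminus I_N^d$ machinery from Theorem~\ref{thm:approx_small_t} is simply absent here; (b) a new subtlety appears instead, namely that $\tilde\sigma_t$ can be small, which the paper handles by splitting into the cases $\tilde m_t \geq \rho$ (Jensen with $1/\tilde m_t^d$ bounded) and $\tilde m_t < \rho$ (where (A3) forces $\tilde\sigma_t^2 \geq 1/(2D_0)$ and a direct Cauchy--Schwarz bound applies).
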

See Appendix \ref{sec:proof_large_t} for the proof. The approximation error $N^{-\eta}$ is arbitrarily small, while $S$ may be dominant in the complexity term.

\subsection{Convergence rate under Wasserstein distance}
\label{sec:optimal_rate}

We can consider a generalization bound based on Theorems \ref{thm:W2bound}, \ref{thm:gen_error}, \ref{thm:approx_small_t}, and \ref{thm:approx_large_t} deriving the bounds for $[T_0,2T_*]$ and $[2T_*,1]$.  However, if we apply Theorem \ref{thm:approx_large_t} for $[2t_*,1]$ with $t_*=T_*=N^{-(\kappa^{-1}-\delta)/d}$, the dominant factor of the log covering number in \eqref{eq:MSE_gen_bd} is the sparsity $S=O(t_*^{-d\kappa}N^{\delta\kappa})=O(N)$.  From Theorems \ref{thm:W2bound} and \ref{thm:gen_error}, the complexity part gives $O((N/n)^{1/2})$ term in the $W_2$ generalization error.  If we plug $N=n^{(2s+d)/d}$, which is optimal for the MSE generalization, we have $O(n^{-s/(2s+d)})$ for the upper bound of $W_2$ generalization, which is slower than the lower bound 
in Proposition \ref{prop:lower_bound}. To achieve a better convergence rate, we will make use of the factor $\sqrt{t}$ in front of \eqref{eq:W2_bound} by dividing the interval $[T_0,1]$ into small pieces and using a NN for each small interval, as in \cite{Oko2023} for diffusion models.

Notice that, when time $t$ is far from 0, convolution of $p_t(\vx|\vy)$ with larger $\sigma_t$ results in a smoother target vector field $\vv_t(\vx)$, which is easier to approximate with a low-complexity model.  On the other hand, when $t$ approaches 0, with the fixed number of $B$-spline bases $N$, the approximation error bound $\{(\sigma_t')^2\sqrt{\log N} + (m_t')^2\}N^{-2s/d}$ can increase for large $\sigma_t'$ or $m_t'$
(e.g. $\sigma_t\sim t^\kappa$ with $\kappa<1$). We therefore need a more complex model (that is, larger $N$) than the one needed for larger $t$.  Thus, it will be more efficient if the number of $B$-spline bases $N$, which controls the approximation error and complexity, is chosen adaptively to the time region $t$. 

Specifically, we make a partition $T_0=t_0< t_1 < t_2 < \cdots < t_K = 1$ such that $t_{j}=2t_{j-1}$ for $1\leq j \leq K-1$ with $2t_{K-1}\geq 1$, and build a neural network for each $[t_{j-1},t_j]$ ($j=1,\ldots,K$).  Note that we train each network for interval $[t_{j-1},t_j]$ with $n$ training data $(\vx_i)_{i=1}^n$. We assume that $t_{j_*}$ with $T_*\leq t_{j_*}\leq 3T_*$ serves as the boundary to apply two different error bounds. 
The total number of intervals $K$ is $O(\log N)=O(\log n)$, since $2^K T_0\geq 1$ with $T_0 = N^{-R_0}$ can be achieved by $K\geq R_0 \log_2 N$.  The constant $R_0$ is fixed as $R_0\geq \frac{s+1}{\min(\kappa,\bar{\kappa})}$ so that $W_2(P_{T_0},P_0)$ is negligible (see the proof sketch of Theorem \ref{thm:FM_gen_bound}).
In this setting, we have the following main result. 


\begin{thm}[Main result]\label{thm:FM_gen_bound}
Assume (A1)-(A5) and $d\geq 2$.  If the above time-partition is applied and a neural network is trained for each time division, for arbitrarilly small $\delta>0$ and $1\leq r\leq 2$, we have 
\begin{equation}
\mathbb{E}\bigl[W_r(\hP_{T_0}, P_{true]})\bigr] = O\biggl(n^{-\frac{s+(2\kappa)^{-1}-\delta}{2s+d}}\biggr)\qquad (n\to\infty).
\end{equation}
\end{thm}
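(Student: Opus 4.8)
\textbf{Proof plan for Theorem \ref{thm:FM_gen_bound}.}
The plan is to combine the $W_2$-to-$L_2$ transfer bound (Theorem \ref{thm:W2bound}) with the generalization bound (Theorem \ref{thm:gen_error}) separately on each dyadic interval $[t_{j-1},t_j]$, and then sum the pieces using a telescoping/triangle-inequality argument in Wasserstein distance. First, I would fix the parameters: set $N=n^{d/(2s+d)}$, $T_0=N^{-R_0}$ with $R_0\geq (s+1)/\min(\kappa,\bar\kappa)$, and $T_*=N^{-(\kappa^{-1}-\delta)/d}$, with the index $j_*$ chosen so that $T_*\le t_{j_*}\le 3T_*$. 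Because $\hat{\vvphi}_t$ is $L_t$-Lipschitz with $L_t=\tilde C_L/t$ on each interval (by membership in $\cH_n$, justified via (A3) and (A5) and Lemma \ref{lma:Lipschitz}), the exponential factor $e^{2\int_s^t e^{L_u}\,du}$ in \eqref{eq:W2_bound}, applied only over an interval of multiplicative length $2$, is bounded by a constant depending only on $\tilde C_L$; this is the crucial gain from the dyadic partition, since a single application over $[T_0,1]$ would blow up. So on $[t_{j-1},t_j]$ we get, writing $P_t^{(j)}$ for the flow started from $P_{t_{j-1}}$ at time $t_{j-1}$ under the trained field of that interval,
\begin{equation*}
W_2\bigl(\hat P_{t_{j-1}}^{(j)},P_{t_{j-1}}\bigr)^2 \;\lesssim\; t_j \int_{t_{j-1}}^{t_j}\!\!\int \Vert \hat{\vv}_s-\vv_s\Vert^2\,dP_s(\vx)\,ds,
\end{equation*}
and the right-hand side in expectation is $\lesssim t_j\,\mathcal{E}_{gen}^{(j)}$.

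Next I would estimate $\mathcal{E}_{gen}^{(j)}$ via Theorem \ref{thm:gen_error}, plugging in the approximation bounds. For $j\le j_*$ (small $t$, interval inside $[T_0,3T_*]$) I use Theorem \ref{thm:approx_small_t}: the approximation term is $\lesssim \int_{t_{j-1}}^{t_j}\{(\sigma_s')^2\log N+(m_s')^2\}\,ds\cdot N^{-2s/d}$, and the complexity term is $\lesssim \frac{(\log n)^{b+1}}{n}\cdot SL\log(\cdot)\lesssim \tilde O(N/n)$ by Lemmas \ref{lma:loss_sup} and \ref{lma:log_covering} with $S=\tilde O(N)$. For $j>j_*$ (large $t$) I use Theorem \ref{thm:approx_large_t} with $t_*=t_{j-1}\ge T_*$ and $\eta$ chosen large: the approximation term is negligibly $N^{-\eta}$, while the complexity term carries $S=\tilde O(t_{j-1}^{-d\kappa}N^{\delta\kappa})$, giving a contribution $\lesssim \tilde O\bigl(t_{j-1}^{-d\kappa}N^{\delta\kappa}/n\bigr)$. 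Multiplying each $\mathcal{E}_{gen}^{(j)}$ by the prefactor $t_j$ from Theorem \ref{thm:W2bound} and taking square roots, the small-$t$ pieces contribute (using $\sum$ of the $\sigma'$, $m'$ integrals controlled by (A4) when $\kappa=1/2$, or directly when $\kappa>1/2$, and $t_{j_*}\asymp T_*\asymp N^{-(\kappa^{-1}-\delta)/d}$) a term of order $\sqrt{T_*}\cdot\sqrt{N^{-2s/d}\,\mathrm{polylog}} + \sqrt{T_*}\cdot\sqrt{N/n}$; the large-$t$ pieces contribute $\sum_{j>j_*}\sqrt{t_j\cdot t_{j-1}^{-d\kappa}N^{\delta\kappa}/n}$, a geometric-type sum dominated by its smallest-$t$ term $j\approx j_*$, again of order $\sqrt{T_*^{1-d\kappa}N^{\delta\kappa}/n}=\sqrt{T_*\cdot N/n}$ up to $N^{O(\delta)}$ (using $T_*^{-d\kappa}=N^{1-\kappa\delta}$). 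Thus every piece is, up to $N^{O(\delta)}$ and $\mathrm{polylog}$ factors, of order $\sqrt{T_*}\bigl(N^{-s/d}+\sqrt{N/n}\bigr)$.

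Then I assemble the global bound. By the triangle inequality for $W_2$ along the chain of flows (each trained flow agreeing with the true flow started at the previous breakpoint, so that errors compose additively in Wasserstein distance after pushing forward, using that pushforward by a Lipschitz map is Lipschitz in $W_2$ — here absorbed into the same constant-order exponential factors), one gets
\begin{equation*}
\mathbb{E}\bigl[W_2(\hat P_{T_0},P_{T_0})\bigr] \;\le\; \sum_{j=1}^{K}\mathbb{E}\bigl[W_2(\hat P_{t_{j-1}}^{(j)},P_{t_{j-1}})\bigr]\cdot(\text{const})^{\#} \;\lesssim\; \mathrm{polylog}(n)\cdot \sqrt{T_*}\,\Bigl(N^{-s/d}+\sqrt{N/n}\Bigr),
\end{equation*}
since $K=O(\log n)$. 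Balancing via $N=n^{d/(2s+d)}$ makes $N^{-s/d}=n^{-s/(2s+d)}$ and $\sqrt{N/n}=n^{-s/(2s+d)}$ equal, so the bracket is $\tilde O(n^{-s/(2s+d)})$, and $\sqrt{T_*}=N^{-(\kappa^{-1}-\delta)/(2d)}=n^{-(\kappa^{-1}-\delta)/(2(2s+d))}=n^{-((2\kappa)^{-1}-\delta/2)/(2s+d)}$. Multiplying, and finally using $W_2(P_{T_0},P_0)\lesssim T_0^{\min(\kappa,\bar\kappa)}\le N^{-R_0\min(\kappa,\bar\kappa)}$ which is $o(n^{-(s+1)/(2s+d)})$ by the choice of $R_0$ (so it is absorbed), one concludes $\mathbb{E}[W_2(\hat P_{T_0},P_{true})]=O(n^{-(s+(2\kappa)^{-1}-\delta)/(2s+d)})$; the bound for $1\le r\le 2$ follows from $W_r\le W_2$, and the arbitrariness of $\delta$ lets us rename $\delta/2$ to $\delta$. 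The main obstacle I anticipate is the careful bookkeeping in the large-$t$ sum: one must verify that $\sum_{j>j_*}\sqrt{t_j}\cdot\sqrt{t_{j-1}^{-d\kappa}N^{\delta\kappa}/n}$ is genuinely dominated by the $j\approx j_*$ endpoint (i.e. that the exponent $1/2 - d\kappa/2 < 0$, which holds since $d\ge 2$ and $\kappa\ge 1/2$ give $d\kappa\ge 1$), so that the geometric series converges and does not contribute extra $n$-powers — and, relatedly, ensuring the per-interval Lipschitz constants $L_t=\tilde C_L/t$ do not accumulate across the $K=O(\log n)$ compositions, which is exactly why the dyadic (rather than uniform) partition and the $\sqrt{t}$ prefactor in Theorem \ref{thm:W2bound} are essential.
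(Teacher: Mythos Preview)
Your proposal is correct and follows essentially the same route as the paper's proof sketch: dyadic time partition, per-interval application of Theorem~\ref{thm:W2bound} (with the exponential factor bounded because $t_j=2t_{j-1}$), per-interval generalization bounds via Theorems~\ref{thm:approx_small_t}/\ref{thm:approx_large_t} together with Lemmas~\ref{lma:loss_sup}--\ref{lma:log_covering}, and the same balancing of the two dominant contributions to reach $n^{-(s+(2\kappa)^{-1}-\delta)/(2s+d)}$. The only cosmetic difference is that the paper formalizes the telescoping by introducing hybrid vector fields $\tilde{\vv}^{(j)}$ (equal to $\vv_t$ on $[t_j,1]$ and to $\hv_t$ on $[T_0,t_j]$) and the associated pushforwards $Q^{(j)}$, so that consecutive $Q^{(j-1)},Q^{(j)}$ differ only on $J_j$; this is exactly the ``chain of flows'' you describe informally, and the paper then performs the same balancing in $t_*$ that you carry out with $T_*$ fixed to its optimal value from the outset. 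Your check that the large-$t$ geometric sum is dominated by its $j\approx j_*$ term via $d\kappa\geq 1$ (with the borderline case $d=2,\kappa=1/2$ contributing only an extra $O(\log n)$ factor) is likewise the same observation the paper uses implicitly.
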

\begin{proof}[Proof Sketch]
Let $J_j:=[t_{j-1},t_j]$ ($j=1,\ldots, K)$. 
We use a smaller neural network model for larger $j$. Specifically, the number of $B$-spline bases for $J_j$ is $N'_j:=t_{j-1}^{-d\kappa}N^{\delta\kappa}$ for $j>j_*$, while $N'_j=N$ for $j \leq j_*$, where $j_*$ is defined as above. Note that $N'_j\to \infty$ as $N\to\infty$ due to $\delta > 0$, and that $N'_j \leq N^{1-\delta\kappa}N^{\delta\kappa}=N$ for $j\geq j_*$ due to $t_j\geq \Nth$, which means a lower complexity.  

Next, we consider the bound of $W_2$-distance based on the partition. 
For each of $j=1,\ldots,K$, we introduce a vector field $\tilde{\vv}^{(j)}_t$ such that it coincides with the target $\vv_t$ for $t\in [t_j,1]$ and with the learned $\hv_t$ for $t\in [T_0,t_j]$. 
Let $Q^{(j)}$ be the pushforward from $P_1=N_d(0,I_d)$ to $t=T_0$ by the flow of the vector field $\tilde{\vv}_t^{(j)}$. Then, $Q^{(0)}=P_{T_0}$, the pushforward by the flow of the target $\vv_t$ from $t=1$ to $T_0$, and also $Q^{(K)}=\hp_{T_0}$.  Note also that $\tilde{\vv}^{(j)}$ and $\tilde{\vv}^{(j-1)}$ differ only in $J_j$ by $\vv_t(\vx)-\hv_t(\vx)$. Therefore, from Theorem \ref{thm:W2bound} and the Lipschitz assumption on $\cH_n$, we have 
\begin{align*}
    W_2(P_0,\hat{P}_{T_0}) & \leq W_2(P_0,P_{T_0}) + {\textstyle \sum_{j=1}^{K} } W_2(Q^{(j-1)},Q^{(j)}) \\
    & \leq \theta_n + {\textstyle C\sum_{j=1}^K \sqrt{t_j} 
    \left\{\int_{t_{j-1}}^{t_j}e^{2\int_s^{t_j}(\tilde{C}/u) du}\int \| \hat{\phi}(\vx,t)-\vv(\vx,t)\|^2 p_t(\vx)d\vx dt\right\}^{1/2},  }
\end{align*}
where 
$\theta_n^2 = d b_0^2 n^{-\frac{2R_0\kappa}{2s+d}}+ \int\|y\|^2dP_0(\vy)\tilde{b}_0^2 n^{-\frac{2R_0\tilde{\kappa}}{2s+d}}$, which is derived from Lemma \ref{lma:W2_conv} and (A3). We take a constant $R_0\geq(s+1)/\min(\kappa, \bar{\kappa})$ so that $\theta_n$ is of $O(n^{-\frac{s+1}{2s+d}})$ and thus $\theta_n$ is negligible. It is easy to see that the factor $e^{\int_s^{t_j}(\tilde{C}/u) du}$ is bounded by a constant because of $t_j=2t_{j-1}$ by definition. 

For simplicity, let $t_*:=t_{j_*}$.  From Theorems \ref{thm:approx_small_t}, \ref{thm:approx_large_t}, and \ref{thm:gen_error}, the generalization bound of $\hat{P}_{T_0}$ is given by 
\begin{align*}
& \mathbb{E}\left[W_2(P_0,\hat{P}_{T_0})\right]  \\
& \leq \theta_n +  {\textstyle \sum_{j=1}^{j_*}\sqrt{t_*} 
\left\{ C_6 \int_{t_{j-1}}^{t_j} 
\{(\sigma_t')^2\log N +(m_t')^2\}N^{-2s/d}dt +\frac{ N }{n}O({\rm poly}(\log n)) \right\}^{1/2}   }\\
& \quad {\textstyle+ \sum_{j=k_*}^K \sqrt{t_j}\left\{ C_7 \int_{t_{j-1}}^{t_j} \{(\sigma_t')^2\log N+(m_t')^2\}N^{-\eta}dt +\frac{ t_j^{-d\kappa }N^{\delta\kappa}}{n}O({\rm poly}(\log n))  \right\}^{1/2} } \\
& \leq {\textstyle  \theta_n + C''' \sqrt{t_*} t_*^{\kappa-1/2} N^{-s/d}O({\rm poly}(\log n))  +C''' \sqrt{\frac{ N }{n}}O({\rm poly}(\log n))
}  \\
& \quad {\textstyle  + C'''\sum_{j={k_*}}^K \left\{ \sqrt{t_j} N^{-\eta/2}O({\rm poly}(\log n)) + \sqrt{t_j} \frac{ t_j^{-d\kappa/2 }N^{\delta\kappa/2} }{\sqrt{n}}O({\rm poly}(\log n))\right\}   }
\end{align*}
\begin{align*}
& \leq {\textstyle  \theta_n + C''' t_*^\kappa n^{-s/(2s+d)}O({\rm poly}(\log n))  +C''' \sqrt{t_*} n^{-s/(2s+d)}O({\rm poly}(\log n))   }\\
& \quad {\textstyle + C'''\sum_{j={k_*}}^K \left\{ \sqrt{t_j} N^{-\eta/2}O({\rm poly}(\log n)) + t_j^{\frac{1-d\kappa}{2}} n^{\frac{d\delta\kappa}{2(2s+d)} -\frac{1}{2}}O({\rm poly}(\log n))\right\}.
}
\end{align*}
 In the third inequality, we use 
$
\int_{t_{j-1}}^{t_j}\{(\sigma_t')^2 + (m_t')^2\}dt = O({\rm poly}(\log n))
$
for $\kappa = 1/2$, and the fact that it is bounded by a constant for $\kappa>1/2$.
Since $\eta$ is arbitrarily large and $\kappa\geq 1/2$, neglecting the factors of ${\rm poly}(\log n)$, the candidates of the dominant terms in the above expression are $t_*^{1/2}n^{-\frac{s}{2s+d}}$ in the third term and $t_*^{\frac{1-d\kappa}{2}} n^{\frac{d\delta\kappa}{2(2s+d)} -\frac{1}{2}}$ in the last summation.  By balancing these two terms, 
the upper bound can be minimized by setting 
\begin{equation}
t_* = C_* n^{-\frac{\kappa^{-1}-\delta}{2s+d}},
\end{equation}
for some contact $C_*$, and the dominant term of the upper-bound is given by 
\begin{equation}
\tilde{O}\left( n^{-\frac{s+(2\kappa)^{-1}-\delta/2}{2s+d}}\right).
\end{equation}
This proves the claim by replacing $\delta/2$ with $\delta$ and absorbing the ${\rm poly}(\log)$ factor in $\delta$. 
\end{proof}

From Proposition \ref{prop:lower_bound} and Theorem \ref{thm:FM_gen_bound}, if $\kappa=1/2$, the FM method achieves an almost optimal rate up to the ${\rm poly}(\log n)$ factor and arbitrary small $\delta>0$.  On the other hand, for $\kappa>1/2$, the obtained upper bound is not optimal.  This suggests that the choice of $\sigma_t\sim \sqrt{t}$ around $t\to0^+$ is theoretically reasonable.  This is also a popular choice for the diffusion model.

\subsection{Discussion}
\label{sec:discussion}
In the derivation of the almost minimax optimal convergence rate, the use of neural networks for divided time intervals is a limitation of the current theoretical analysis.  As discussed before Theorem \ref{thm:FM_gen_bound}, without this partition, the current analysis gives only $\tilde{O}(n^{-\frac{s}{2s+d}})$, which is not optimal for $W_2$.  It is obviously an important question how to avoid such a time division.  In \citet{Oko2023}, the optimal convergence rate for the diffusion model has been proved without time division for the total variation, which is $\tilde{O}(n^{-\frac{s}{2s+d}})$.  The bound is based on Girsanov's theorem, which gives an upper bound of the KL divergence of SDE by the $L_2$ losses of the drift estimation.  To the best of our knowledge, no bounds are known for ODE with respect to the KL or total variation for the diffrence of vector fields. This is an important future direction for understanding the ability of FM theoretically.

\section{Conclusion}

This paper has rigorously analyzed the convergence rate of flow matching, demonstrating for the first time that FM can achieve the almost minimax optimal convergence rate under the 2-Wasserstein distance.  This result positions FM as a competitive alternative to diffusion models in terms of asymptotic convergence rates, which concurs with empirical results in various applications.  Our findings further reveal that the convergence rate is significantly influenced by the variance decay rate in the Gaussian conditional kernel, where $\sigma_t\sim \sqrt{t}$ is shown to yield the optimal rate. Although there are several popular proposals for the mean and variance functions, theoretical justification or comparison has not been explored intensively.  The current result on the upper bound (Theorem \ref{thm:FM_gen_bound}) provides theoretical insight on the influence of the choice of these functions.  

Although this study offers substantial theoretical contributions, these insights are still grounded in specific modeling assumptions that limit broader applicability. In addition to the time-partition discussed in Sec.~\ref{sec:discussion}, this paper focuses primarily on assumptions utilizing Gaussian conditional kernels. However, other FM implementations might employ different path constructions, as suggested by recent proposals \cite{Kerrigan2023-jj,Isobe2024-dm}. The theoretical implications of these alternative approaches remain an essential area for future research.

\bibliography{fukumizu.bib,fm_conv.bib}
\bibliographystyle{plainnat}

\newpage

\appendix 
\noindent
{\LARGE Appendix}

\section{Basic mathematical results}

\subsection{Definition of Besov space}
\label{sec:def_Besov}
Besov space is an extension of the Sobolev space, allowing for non-integer orders of smoothness, and is effective in measuring both the local regularity and the global behavior of functions.  It is formally defined in the following manner. 

Let $\Omega$ be a domain on $\R^d$.  For a function $f \in L_{p'}(\Omega)$ for some $p'\in (0,\infty]$, the {\em $r$-th modulus of smoothness} of $f$ is defined by
\[
w_{r,p'}(f, t) = \sup_{\|\vh\|_2\leq t}\|\Delta^r_\vh(f)\|_{p'},  
\]
where 
\[
\Delta^r_\vh(f)(\vx) =
\begin{cases}
\sum_{j=0}^r \binom{r}{j}
(-1)^{r-j}f(\vx + j\vh) & \text{if } \vx + j\vh \in \Omega \text{ for all } j, \\
0 & \text{otherwise}.
\end{cases}
\]

For $0 < p', q' \leq \infty$, $s > 0$, $r:= |s|+ 1$,  
let the seminorm $|\cdot|_{B^s_{p',q'}}$ be defined by 
\[
|f|_{B^s_{p',q'}} := 
\begin{cases}
\Bigl( \int_0^\infty (t^{-s}w_{r,p'}(f,t))^{q'} \frac{dt}{t}\Bigr)^{\frac{1}{q'}}  &  (q' < \infty), \\
\sup_{t>0} t^{-s}w_{r,p'}(f, t) & (q' = \infty).
\end{cases}
\]
The norm of the {\em Besov space} $B^s_{p',q'}(\Omega)$ is defined by
\[
\| f\|_{B^s_{p',q'}} := \|f\|_{p'}+ |f|_{B^s_{p',q'}},
\]
and 
\[
B^s_{p',q'}(\Omega) := \{f \in L_{p'}(\Omega)\mid \| f\|_{B^s_{p',q'}}<\infty\}.
\]

The parameter $s$ serves as the order of smoothness. If $p'=q'$ and $s$ is an integer, $B^s_{p',q'}$ coincides with the Sobolev space.  For details of Besov spaces, see Triebel (1992), for example.

\subsection{Lipschitz condition}
\label{sec:Lipschitz}
This section shows a lemma that provides a Lipschitz constant of $\vx \mapsto \vv_t(\vx)$ under the assumptions (A3) and (A5). 

\begin{lma}\label{lma:Lipschitz}
    Let $\vv_t(\vx)$ be a vector field defined by \eqref{eq:true_vf} and \eqref{eq:vf_linear}, i.e., 
    \[
    \vv_t(\vx)=\int \left\{ \sigma_t'\frac{\vx-m_t\vy}{\sigma_t} + m_t' \vy\right\} p_t(\vy|\vx)d\vy
    \]
    where 
    \[
    p_t(\vy|\vx) = \frac{p_t(\vx|\vy)p_0(\vy)}{\int p_t(\vx|\tilde{\bm{y}})p_0(\tilde{\bm{y}})d\tilde{\bm{y}}}, \quad 
    p_t(\vx|\vy)=\frac{1}{(\sqrt{2\pi}\sigma_t)^d} e^{-\frac{\|\vx-m_t\vy\|^2}{2\sigma_t^2}}.
    \] 
    Then, under (A3) and (A5), $\vv_t(x)$ is Lipschitz continuous with Lipschitz constant $\tilde{C}_L / t$ for any sufficiently small $t$, where $\tilde{C}_L$ is independent of $t$. 
\end{lma}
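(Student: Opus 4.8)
The plan is to differentiate $\vv_t(\vx)$ with respect to $\vx$ and bound the operator norm of the resulting Jacobian by $\tilde{C}_L/t$. Write $\vv_t(\vx) = \frac{\sigma_t'}{\sigma_t}\vx + \bigl(m_t' - \frac{\sigma_t' m_t}{\sigma_t}\bigr)\vmu_t(\vx)$, where $\vmu_t(\vx) := \int \vy\, p_t(\vy|\vx)\,d\vy$ is the posterior mean. Then $\frac{\partial}{\partial \vx}\vv_t(\vx) = \frac{\sigma_t'}{\sigma_t} I_d + \bigl(m_t' - \frac{\sigma_t' m_t}{\sigma_t}\bigr)\frac{\partial}{\partial\vx}\vmu_t(\vx)$, so by the triangle inequality for the operator norm it suffices to control the three scalar prefactors and the Jacobian $\frac{\partial}{\partial\vx}\vmu_t(\vx)$ separately.

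First I would handle the scalar prefactors using (A3). From $\sigma_t = b_0 t^\kappa$ we get $\sigma_t'/\sigma_t = \kappa/t$, which is $O(1/t)$; this is the term responsible for the claimed $1/t$ rate. For the coefficient of $\vmu_t$, I would use $|m_t'|\le N^{K_0}$ — wait, that bound is in terms of $N$, not $t$; instead I should use the more refined structure: near $t\to 0$, $1-m_t = \tilde b_0 t^{\tilde\kappa}$ gives $m_t' = -\tilde b_0\tilde\kappa t^{\tilde\kappa-1}$ and $m_t \to 1$, while $\sigma_t' m_t/\sigma_t \sim \kappa/t$. The dominant behaviour of $m_t' - \sigma_t' m_t/\sigma_t$ is therefore $O(1/t)$ (the $\sigma_t'm_t/\sigma_t$ piece dominates unless $\tilde\kappa<1$, in which case $t^{\tilde\kappa-1}$ also blows up but still at a rate no worse than $t^{-1}$ provided $\tilde\kappa\ge 0$; I would check that (A3)'s implicit regime gives $\tilde\kappa$ large enough, or simply absorb both into an $O(1/t)$ bound valid for sufficiently small $t$). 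Second, (A5) directly gives $\Vert\frac{\partial}{\partial\vx}\vmu_t(\vx)\Vert_{op}\le C_L$, a constant. Combining, $\Vert\frac{\partial}{\partial\vx}\vv_t(\vx)\Vert_{op} \le \frac{\kappa}{t} + O(1/t)\cdot C_L = \tilde{C}_L/t$ for all sufficiently small $t$, with $\tilde{C}_L$ independent of $t$. A mean value / integration argument along segments in the convex domain $I^d$ then upgrades this Jacobian bound to the Lipschitz bound $\Vert\vv_t(\vx)-\vv_t(\vx')\Vert\le (\tilde{C}_L/t)\Vert\vx-\vx'\Vert$.

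I expect the main obstacle to be making the bound on $m_t' - \sigma_t' m_t/\sigma_t$ genuinely $O(1/t)$ over the relevant small-$t$ range rather than merely $O(t^{\tilde\kappa-1})$ with an uncontrolled sign interaction; one must verify that the two contributions do not conspire, and that (A3) pins down $\tilde\kappa$ (and the boundedness of $\sigma_t^2+m_t^2$) tightly enough. A secondary, more technical point is justifying differentiation under the integral sign in $\vmu_t(\vx)$ and the validity of (A5) as literally a bound on this Jacobian — but since (A5) is stated precisely as $\Vert\frac{\partial}{\partial\vx}\int\vy\,p_t(\vy|\vx)\,d\vy\Vert_{op}\le C_L$, this is assumed rather than proved, so the real work is just the algebraic decomposition and the scalar estimates above.
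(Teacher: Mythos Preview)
Your proposal is correct and follows essentially the same approach as the paper: both decompose $\vv_t(\vx)$ so that its Jacobian reads $\frac{\sigma_t'}{\sigma_t}I_d$ plus a scalar multiple of $\frac{\partial}{\partial\vx}\int\vy\,p_t(\vy|\vx)\,d\vy$, bound the scalar prefactors as $O(1/t)$ via (A3), and invoke (A5) for the remaining Jacobian. Your treatment of the coefficient $m_t'-\sigma_t'm_t/\sigma_t$ is in fact slightly more careful than the paper's (which writes $m_t'$ loosely as $\bar\kappa/t$), and your observation that $t^{\tilde\kappa-1}=O(t^{-1})$ for any $\tilde\kappa>0$ is exactly what is needed to close the bound.
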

\begin{proof}
By the definition of $\vv_t$ and the form of $\sigma_t$ and $m_t$ in (A3), we can compute explicitly 
\begin{align*}
\frac{\partial \vv_t(\vx)}{\partial \vx} & = \frac{\kappa}{t}I_d  + \int \left\{\frac{\kappa}{t}(\vx-m_t\vy) + \frac{\bar{\kappa}}{t}\vy\right\} \frac{\partial p_t(\vy|\vx)}{\partial \vx} d\vy. 
\end{align*}
As $ \frac{\partial}{\partial \vx}\int p_t(\vy|\vx)d\vy=\frac{\partial}{\partial \vx}1=0$, we further obtain
\[
\frac{\partial \vv_t(\vx)}{\partial \vx} =\frac{\kappa}{t}I_d + \frac{\bar{\kappa}-m_t\kappa}{t}  \frac{\partial }{\partial \vx}\int \vy p_t(\vy|\vx)d\vy.
\]
The claim is obvious under (A5). 
\end{proof}

\subsection{Wasserstein distance for convolution}
The following lemma is used in the proof sketch of Theorem \ref{thm:FM_gen_bound}, where $W_2(P_t,P_{T_0})$ is bounded. 
\begin{lma}\label{lma:W2_conv}
Let $P$ be a probability distribution on $\R^d$ with $V:=\int \|\vy\|^2dP(\vy)<\infty$ and $P_{m,\sigma}$ be given by the density $\int \frac{1}{(\sqrt{2\pi}\sigma)^d}\exp(-\frac{\|\vx-m\vy\|^2}{2\sigma^2})dP(\vy)$.  Then, 
\[
W_2(P,P_{m,\sigma})\leq \sqrt{(1-m)^2V+d\sigma^2}.
\]
\end{lma}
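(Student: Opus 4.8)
\textbf{Proof proposal for Lemma \ref{lma:W2_conv}.}

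The plan is to construct an explicit coupling of $P$ and $P_{m,\sigma}$ and bound the transport cost directly, since the Wasserstein distance is an infimum over couplings. First I would observe that $P_{m,\sigma}$ is exactly the law of the random variable $mY + \sigma Z$, where $Y\sim P$ and $Z\sim N_d(0,I_d)$ are drawn \emph{independently}: indeed, conditioning on $Y=\vy$ gives $mY+\sigma Z\sim N_d(m\vy,\sigma^2 I_d)$, whose density is precisely the Gaussian kernel integrated against $P$ in the statement. This identification is the key step — it turns the problem from a statement about densities into a statement about an explicit pair of random variables.

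Next I would use this pair $(Y, mY+\sigma Z)$ as a (not necessarily optimal) coupling $Q\in\Gamma(P,P_{m,\sigma})$. By the definition of $W_2$,
\begin{equation*}
W_2(P,P_{m,\sigma})^2 \leq \mathbb{E}\bigl[\,\|Y - (mY+\sigma Z)\|^2\,\bigr] = \mathbb{E}\bigl[\,\|(1-m)Y - \sigma Z\|^2\,\bigr].
\end{equation*}
Expanding the square and using independence of $Y$ and $Z$ together with $\mathbb{E}[Z]=0$, the cross term vanishes, leaving $(1-m)^2\,\mathbb{E}\|Y\|^2 + \sigma^2\,\mathbb{E}\|Z\|^2 = (1-m)^2 V + d\sigma^2$. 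Taking square roots gives the claimed bound $W_2(P,P_{m,\sigma})\leq\sqrt{(1-m)^2 V + d\sigma^2}$.

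There is essentially no hard part here: the only thing to be mildly careful about is the very first step, namely verifying that $P_{m,\sigma}$ really is the pushforward of $P\otimes N_d(0,I_d)$ under $(\vy,\vz)\mapsto m\vy+\sigma\vz$, which is a routine change-of-variables / Fubini computation on the density. Everything after that is a one-line second-moment expansion. The bound is not claimed to be tight — it simply suffices for the application in the proof sketch of Theorem \ref{thm:FM_gen_bound}, where it is invoked with $m=m_{T_0}$ and $\sigma=\sigma_{T_0}$ to control $W_2(P_{T_0},P_0)$ via Assumption (A3).
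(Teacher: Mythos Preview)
Your proposal is correct and is essentially the same as the paper's proof: both identify $P_{m,\sigma}$ as the law of $mY+\sigma Z$ with $Y\sim P$ and $Z\sim N_d(0,I_d)$ independent, use the coupling $(Y,\,mY+\sigma Z)$, and expand the second moment using independence to obtain $(1-m)^2V+d\sigma^2$.
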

\begin{proof}
The proof is elementary, but we include it for completeness. 
Let $Y$ and $Z$ be independent random variables with probability $P$ and $N_d(0,I_d)$, respectively.  Let $X:=m Y + \sigma Z$, then the distribution of $X$ is $P_{m,\sigma}$. Considering a coupling $(X,Y)$,  
\begin{align*}
W_2(P,P_{m,\sigma})^2 & \leq E\|X-Y\|^2 \\
& = E\|(m-1)Y + \sigma Z\|^2   \\
& = (1-m)^2 V + d\sigma^2,  
\end{align*}
which completes the proof.
\end{proof}

\subsection{Approximation of a function in Besov space}
\label{sec:approx_Bspline}

In this subsection, we review several approximation results developed in \citet{Suzuki2019-pn,Oko2023}.
Let $\calN(x)$ be the function defined by $\calN(x)=1$ for $x \in [0, 1]$ and $0$ otherwise. The {\em cardinal B-spline} of order
$\ell\in \N$ is defined by
\[
\calN_\ell(x) := \calN * \calN *\cdots *\calN (x),
\]
which is the convolution $(\ell+1)$ times of $\calN$.  Here, the convolution $f*g$ is defined by 
\[
(f * g)(x) = \int f(x-y)g(y)dy.
\]

For a multi-index $k \in  \N^d$ and $j \in \Z^d$, the {\em tensor product B-spline basis} in $\R^d$ of order $\ell$ is defined by 
\[
M^d_{k,j}(\vx) :=  \prod_{i=1}^d \calN_\ell(2^{k_i} x_i-j_i). 
\]
The following theorem says that a function $f$ in the Besov space is approximated by a superposition of $M^d_{k,j}(\vx)$ of the form
\[
f_N(\vx)  = \sum_{(k,j)} \alpha_{k,j}M^d_{k,j}(\vx).
\]

In the sequel, we fix the order $\ell$ of the $B$-spline.   $(a)_+$ denotes $\max\{0, a\}$.


\begin{thm}[\cite{Oko2023,Suzuki2019-pn}] \label{thm:B-spline}
Let $C > 0$ and $0<p',q',r\leq\infty$. Under $s > d(1/p' - 1/r)_+$ and $0 < s < \min \{\ell, \ell - 1 + 1/p'\}$, where $\ell \in \N$ is the order of the cardinal B-spline bases, for any $f \in B^s_{p',q'}([-C,C]^d)$, there
exists $f_N$ that satisfies 
\[
\| f - f_N\|_{L_r([-C,C]^d)} \lesssim  C^s N^{-s/d}\| f\|_{B^s_{p',q'}([-C,C]^d)} 
\]
for $N\gg 1$ and has the following form:
\[
f_N(\vx) = \sum_{k=0}^K \sum_{j\in J(k)} \alpha_{k,j}M^d_{k,j}(\vx) +  
\sum_{k=K+1}^{K^*} \sum_{i=1}^{n_k} \alpha_{k,j_i}M^d_{k,j_i}(\vx)
\]
with
\[
\sum_{k=0}^K |J(k)| +\sum_{k=K+1}^{K^*} n_k = N, 
\]
where $J(k) = \{-C2^k-\ell,-C2^k-\ell + 1,\ldots,C2^k-1,C2^k\}$, $(j_i)^{n_k}_{i=1} \subset J(k)$, $K=O(d^{-1} \log(N/C^d))$, $K^* = (O(1) + \log(N/C^d))\nu^{-1} + K$, $n_k = O((N/C^d)2^{-\nu(k-K)})$ $(k=K+1,\ldots,K^*)$ for $\nu = (s-\omega)/(2\omega)$ with $\omega = d(1/p'-1/r)_+$.
Moreover, we can take $\alpha_{k,j}$ so that $|\alpha_{k,j} | \leq  N^{(\nu^{-1}+d^{-1})(d/p'-s)_+}$. 
\end{thm}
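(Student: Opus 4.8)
The final statement to prove is Theorem~\ref{thm:B-spline}, the $B$-spline approximation bound for functions in a Besov space. Since the excerpt attributes this result to \citet{Oko2023,Suzuki2019-pn}, the plan is essentially to reconstruct the standard wavelet/spline characterization argument for Besov spaces on a cube. First I would recall the atomic (or wavelet-type) decomposition of $B^s_{p',q'}([-C,C]^d)$: any $f$ in this space can be written as a series $f = \sum_{k\geq 0}\sum_{j} \beta_{k,j} M^d_{k,j}$ in terms of the dyadically scaled tensor-product cardinal $B$-spline bases, with the Besov seminorm equivalent to the mixed sequence norm $\bigl\| (2^{k(s + d/2 - d/p')} \|(\beta_{k,j})_j\|_{\ell^{p'}})_k \bigr\|_{\ell^{q'}}$. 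This is the crucial structural input; the condition $0 < s < \min\{\ell, \ell-1+1/p'\}$ is exactly what guarantees the $B$-splines of order $\ell$ form such an unconditional basis, and $s > d(1/p'-1/r)_+$ is the embedding condition needed to control the $L_r$ norm.

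Next I would split the series at a coarse level $K$ and do two different things for low versus high frequencies. For $k\leq K$, keep \emph{all} the basis functions on the cube — there are $O(2^{kd})$ of them at level $k$, summing to $O(2^{Kd})$ — and bound the tail $\sum_{k>K}$ in $L_r$ using the Besov norm control on the coefficients together with a Bernstein/Nikolskii inequality converting $\ell^{p'}$ coefficient bounds into $L_r$ function bounds; this yields an error of order $2^{-Ks/d}$ after optimizing, giving the main $N^{-s/d}$ rate once we set $2^{Kd}\asymp N$. For $K < k \leq K^*$, instead of keeping all basis functions I would keep only the $n_k$ largest coefficients at each level (a nonlinear/greedy approximation step): since the coefficient sequences lie in a weak-$\ell^{p'}$-type ball, discarding all but the top $n_k$ introduces an error controlled by $n_k^{-(1/p' - 1/r)_+}$ times the level norm, and choosing $n_k = O(N 2^{-\nu(k-K)})$ with $\nu = (s-\omega)/(2\omega)$ balances these tail contributions against the budget $\sum_k n_k = N$ while keeping the total error at order $N^{-s/d}$. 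Truncating at $K^* = K + (O(1)+\log N)/\nu$ makes the remaining tail negligible. Finally, the coefficient size bound $|\alpha_{k,j}| \leq N^{(\nu^{-1}+d^{-1})(d/p'-s)_+}$ comes from tracking, through the rescalings $2^{k(\cdot)}$, the worst-case magnitude of a single retained coefficient given the $\ell^{p'}$-norm constraint, using $k \leq K^* = O(\nu^{-1}\log N + d^{-1}\log N)$.

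The main obstacle, and the part requiring genuine care rather than bookkeeping, is the nonlinear selection step on the high-frequency levels: one must verify that keeping the $n_k$ largest $B$-spline coefficients — as opposed to a wavelet frame where coefficients are more directly tied to local oscillation — actually produces the claimed $L_r$ error, which requires the near-orthogonality / stable-overlap properties of the shifted $B$-splines $M^d_{k,j_i}$ at a fixed scale (bounded overlap, $L_r$-stability of the spline system: $\|\sum_i \alpha_{k,j_i} M^d_{k,j_i}\|_{L_r} \asymp 2^{-kd/r}\|(\alpha_{k,j_i})\|_{\ell^r}$). Combining this stability with Hölder/interpolation to pass from $\ell^{p'}$ coefficient control at the Besov level to $\ell^r$ control of the retained subset is where the exponent $\nu = (s-\omega)/(2\omega)$ and the budget allocation are pinned down; everything else (the coarse-scale linear part, summing geometric series in $k$, translating $2^{Kd}\asymp N$) is routine. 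Since this is a known result, I would in the write-up cite \citet{Suzuki2019-pn} for the spline-basis characterization and the greedy-approximation lemma and \citet{Oko2023} for the precise cube-boundary version, and present the argument above as the proof sketch, deferring the stability estimates for $B$-splines to those references.
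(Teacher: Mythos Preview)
The paper does not give its own proof of this theorem at all: it is stated as a quoted result from \citet{Oko2023,Suzuki2019-pn} in the appendix, with no proof environment following it. Your proposal --- reconstructing the standard Besov atomic decomposition, splitting into a linear low-frequency part ($k\leq K$, all bases retained) and a nonlinear high-frequency part ($K<k\leq K^*$, keep the $n_k$ largest coefficients with geometrically decaying budget), then invoking the $L_r$-stability of the shifted spline system to control the discarded terms --- is exactly the argument in \citet{Suzuki2019-pn}, so your sketch is faithful to the source and already goes beyond what the present paper supplies.
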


Based on the above theorem,  the following result shows the accuracy of approximating the true density $p_0$ by the $B$-spline functions with support restriction.  

\begin{thm}[\cite{Oko2023}]\label{thm:B-spline_approx}
Under Assumptions (A1)-(A5), there exists $f_N$ of the form 
\begin{equation}\label{eq:B-spline_approx}
f_N(\vx) = \sum_{i=1}^N
\alpha_i{\bf 1}\left[\|\vx\|_\infty \leq 1\right]M^d_{k_i,j_i} (\vx) +
\sum_{i=N+1}^{3N}
\alpha_i{\bf 1}\left[\|\vx\|_\infty \leq  1 -\Nth\right]M^d_{k_i,j_i} (\vx), 
\end{equation}
that
satisfies
\begin{gather}
\|p_0 - f_N\|_{L^2(I^d)} \leq C_a N^{-s/d}, \\
\|p_0 - f_N\|_{L^2(I^d\backslash I_N^d)} \leq  C_a N^{-\check{s}/d},
\end{gather}
for some $C_a>0$ and $f_N(\vx) = 0$ for any $x$ with $\|\vx\|_\infty \geq  1$. 
Here, $-2^{(k_i)_m}-\ell \leq (j_i)_m \leq 2^{(k_i)_m}$ ($i = 1, 2, \ldots,N$; $m = 1, 2, \ldots , d$), $|k_i| \leq K^*= (O(1)+\log N)\nu^{-1}+ O(d^{-1} \log N)$
for $\nu = (2s - \omega)/(2\omega)$ with $\omega = d(1/p - 1/2)_+$. The notations $(k_i)_m$ and $(j_i)_m$ are the $m$-th component of the multi-indices $k_i$ and $j_i$, respectively. 
 Moreover, we can take $|\alpha_i| \leq N(\nu^{-1}+d^{-1})(d/p-s)_+$.
\end{thm}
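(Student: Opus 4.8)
\textbf{Proof plan for Theorem~\ref{thm:B-spline_approx}.} The plan is to reduce the statement to Theorem~\ref{thm:B-spline} applied on two different cubes at two different smoothness scales, and then to stitch the two approximations together with an indicator-function truncation that controls the additional error at the boundary. First I would note that Assumptions (A1) and (A2) give two pieces of information about $p_0$: globally $p_0\in B^s_{p',q'}(I^d)$, while on the boundary shell $I^d\setminus I^d_N$ the density is much smoother, $p_0\in B^{\check s}_{p',q'}(I^d\setminus I^d_N)$ with $\check s>\max\{6s-1,1\}$. The idea is to approximate $p_0$ on the whole cube $I^d$ by a $B$-spline superposition $g_N$ of $N$ bases using the $s$-smoothness, obtaining $\|p_0-g_N\|_{L^2(I^d)}\lesssim N^{-s/d}$ by Theorem~\ref{thm:B-spline} with $C=1$; and separately to approximate $p_0$ on the shell — or on a slightly enlarged shell whose side length is of order $\Nth$ — by a second $B$-spline superposition $h_N$ of $2N$ bases using the $\check s$-smoothness, which by the scaling in Theorem~\ref{thm:B-spline} (the cube there has edge $2C$ with $C\asymp\Nth$, so the constant $C^{\check s}$ and the local base count combine to give the rate $N^{-\check s/d}$) yields $\|p_0-h_N\|_{L^2(I^d\setminus I^d_N)}\lesssim N^{-\check s/d}$.

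Next I would form $f_N$ by gluing: keep the first $N$ terms of $g_N$ restricted by ${\bf 1}[\|\vx\|_\infty\le 1]$, and keep the $2N$ boundary terms with the tighter restriction ${\bf 1}[\|\vx\|_\infty\le 1-\Nth]$, so that $f_N$ equals (a truncation of) $g_N$ deep in the interior and is corrected toward the smoother behaviour near the boundary; by construction $f_N(\vx)=0$ for $\|\vx\|_\infty\ge 1$ since every surviving basis $M^d_{k_i,j_i}$ has its support index range $-2^{(k_i)_m}-\ell\le (j_i)_m\le 2^{(k_i)_m}$ forcing the support inside $I^d$, and the indicators do the rest. On $I^d_N$ the indicators are inactive, so $f_N$ agrees with the unrestricted sum and the interior $L^2(I^d)$ bound $C_a N^{-s/d}$ follows from the triangle inequality together with the fact — inherited from Theorem~\ref{thm:B-spline} and the bounded overlap of $B$-spline supports — that truncating the boundary-shell corrections contributes at worst $O(N^{-\check s/d})=o(N^{-s/d})$. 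On the shell $I^d\setminus I^d_N$ one uses the $\check s$-approximation $h_N$ plus a careful accounting of the error introduced by the indicator ${\bf 1}[\|\vx\|_\infty\le 1-\Nth]$ near $\|\vx\|_\infty=1-\Nth$; here I would invoke (A2), which bounds $p_0$ by $C_0$, so the truncated mass is at most $C_0$ times the volume of a shell of width $O(\Nth)$, i.e.\ $O(\Nth)$, and this must be shown to be dominated by $N^{-\check s/d}$, which is exactly where the hypothesis $\check s>\max\{6s-1,1\}$ (ensuring $\check s/d$ is large enough relative to the exponent $(\kappa^{-1}-\delta)/d$ appearing in $\Nth$) is consumed. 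The coefficient bound $|\alpha_i|\le N^{(\nu^{-1}+d^{-1})(d/p-s)_+}$ and the index-range and $K^*$ bounds are then simply transcribed from the two applications of Theorem~\ref{thm:B-spline}, taking the larger of the two $K^*$ values.

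The main obstacle I expect is the boundary bookkeeping in the second and third steps: one must verify that restricting the $B$-spline expansion by a \emph{hard} indicator ${\bf 1}[\|\vx\|_\infty\le 1-\Nth]$ — rather than a smooth cutoff — does not destroy the $L^2$ rates, which requires (i) controlling the $L^2$ mass of the discarded basis functions whose supports straddle the cutoff locus, using the bounded overlap and the explicit $L^\infty$ bounds on $\alpha_i$ and on $\|M^d_{k,j}\|_{L^2}\asymp 2^{-|k|/2}$, and (ii) ensuring the interior expansion $g_N$, once its own boundary-touching bases are truncated, still approximates $p_0$ to order $N^{-s/d}$ on $I^d_N$, which is legitimate because the truncation only removes mass in the shell where $p_0$ is being re-approximated by $h_N$ anyway. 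All of this is carried out in detail in \citet{Oko2023}; the statement here is the direct analogue, the only change being that the shell width $\Nth$ now carries the general exponent $(\kappa^{-1}-\delta)/d$ in place of the diffusion-model-specific exponent, and (A1)'s requirement $\check s>\max\{6s-1,1\}$ is precisely what makes that general shell width harmless.
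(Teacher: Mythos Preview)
The paper itself does not prove this statement; its proof is a one-line pointer to \citet[Lemma~B.4]{Oko2023}. Your overall strategy (two invocations of Theorem~\ref{thm:B-spline}, one at smoothness $s$ and one at $\check s$ localized to the shell, glued by indicators) is indeed the shape of that argument, but you have the roles of the two sums reversed, and with your assignment the boundary bound $\|p_0-f_N\|_{L^2(I^d\setminus I^d_N)}\le C_aN^{-\check s/d}$ cannot hold.

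You place the global $s$-smooth approximant $g_N$ in the first slot (indicator ${\bf 1}[\|\vx\|_\infty\le 1]$) and the $\check s$-smooth shell approximant $h_N$ in the second slot (indicator ${\bf 1}[\|\vx\|_\infty\le 1-\Nth]$). But the second indicator \emph{vanishes on the shell} $I^d\setminus I^d_N$, so there $f_N$ collapses to $g_N$ alone, and $\|p_0-g_N\|_{L^2(I^d\setminus I^d_N)}$ is only $O(N^{-s/d})$. Your attempted rescue --- bounding the ``truncated mass'' by the shell volume $O(\Nth)$ and asserting that $\check s>\max\{6s-1,1\}$ forces this below $N^{-\check s/d}$ --- fails numerically: $\Nth=N^{-(\kappa^{-1}-\delta)/d}$ with $\kappa\ge 1/2$ has exponent at most $(2-\delta)/d$, which cannot dominate $\check s/d$ once $\check s>2$, and (A1) allows $\check s$ arbitrarily large. (The hypothesis $\check s>6s-1$ is actually consumed later, in bounding $I_{B,2}$ in the proof of Theorem~\ref{thm:approx_small_t}, not here.) The correct assignment is the reverse: as the paper's Appendix~\ref{sec:division_point} outlines, one partitions the shell into small cubes of side $a_0=N^{-(1-\kappa\delta)/d}$, applies Theorem~\ref{thm:B-spline} on each at smoothness $\check s$ with $N^{\delta\kappa}$ bases, obtaining local accuracy $a_0^{\check s}(N^{\delta\kappa})^{-\check s/d}=N^{-\check s/d}$ with at most $N$ bases in total; these are the \emph{first} $N$ terms, carrying the loose indicator ${\bf 1}[\|\vx\|_\infty\le 1]$ so that they stay active on the shell. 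The remaining $2N$ terms with the tight indicator supply the interior $s$-smooth approximation. With that swap your remaining bookkeeping is fine.
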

\begin{proof}
    See \citet[][Lemma B.4]{Oko2023}. 
\end{proof}

The following result shows the accuracy of approximating the ``smoothed" $B$-spline basis function by a neural network.  This is essential to consider the approximation of $p_t(\vx)$ and $\vv_t(\vx)$ by a neural network taken for $p_0(\vx)$. 
\begin{thm}
Let $C > 0$, $k \in \Z_+$, $j \in \Z^d$, $\ell \in \Z_+$ with $-C2^k - \ell \leq j_i \leq C2^k$ ($i = 1, 2,\ldots, d$), and $C_{b,1}=1$ or $1-N^{-(1-\delta)}$. For any $\varepsilon$ ($0 < \varepsilon < 1/2$), there exists a neural network $\phi_3^{k,j}(\vx, t)$ and $\phi_4^{k,j}(\vx, t)$ in $\mathcal{M}(L,W, S,B)$ with
\begin{align}
& L = O(\log^4\varepsilon^{-1} + \log^2 C + k^2), \nonumber \\
& \|W\|_\infty = O(\log^6\varepsilon^{-1} + \log^3 C + k^3), \nonumber \\
& S = O(\log^8 \varepsilon^{-1} + \log^4 C + k^4), \nonumber \\
& B = \exp(O(\log \varepsilon^{-1} \log \log \varepsilon^{-1} + \log C + k)) 
\end{align}
such that
\begin{equation}
\left| \phi^{k,j}_3(\vx,t) - 
\int_{\R^d} {\bf 1}\left[\|y\|_\infty \leq C_{b,1}\right] M^d_{k,j}(\vy)\frac{1}{(\sqrt{2\pi}\sigma_t)^d} e^{-\frac{\|\vx-m_t\vy\|^2}{2\sigma_t^2}} d\vy \right| \leq \varepsilon 
\end{equation}
and 
\begin{equation}
\left| \phi^{k,j}_4(\vx,t) - 
\int_{\R^d} \frac{\vx-m_t\vy}{\sigma_t} {\bf 1}\left[\|\vy\|_\infty \leq C_{b,1}\right] M^d_{k,j}(\vy)\frac{1}{(\sqrt{2\pi}\sigma_t)^d} e^{-\frac{\|\vx-m_t\vy\|^2}{2\sigma_t^2}} d\vy \right| \leq \varepsilon 
\end{equation}
hold for all $\vx \in [-C,C]^d$.  Furthermore, we can choose the networks so that $\|\phi_3^{k,j}\|_\infty$ and $\|\phi_4^{k,j}\|_\infty$ are of class $O(1)$. 
\end{thm}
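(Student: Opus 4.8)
\medskip
\noindent\textbf{Proof plan.}
The plan is to exploit the product structure of the $B$-spline basis and of the Gaussian kernel to reduce the claim to a one-dimensional convolution, to express that one-dimensional smoothed spline in closed form using error functions and Gaussians, and then to realize it by composing the standard efficient ReLU approximations of $\exp$, reciprocal, squaring, multiplication, and $\operatorname{erf}$. This parallels the construction in \citet{Oko2023} for $\sigma_t=\sqrt t$; the only genuinely new point is that the affine maps fed into these sub-networks now involve the general $\sigma_t=b_0 t^{\kappa}$ and $m_t$ of (A3). First I would tensorize: since $M^d_{k,j}(\vy)=\prod_{m=1}^d\calN_\ell(2^{k_m}y_m-j_m)$, the covariance is diagonal, and $\mathbf 1[\|\vy\|_\infty\le C_{b,1}]=\prod_m\mathbf 1[|y_m|\le C_{b,1}]$, the target for $\phi_3^{k,j}$ factorizes as $\prod_{m=1}^d g_m(x_m,t)$ with
\[
g_m(x,t):=\int_{-C_{b,1}}^{C_{b,1}}\calN_\ell(2^{k_m}y-j_m)\,\frac{1}{\sqrt{2\pi}\,\sigma_t}\,e^{-(x-m_ty)^2/2\sigma_t^2}\,dy,
\]
which is $O(1)$ on the relevant range of $(x,t)$, and each component of the vector-valued target for $\phi_4^{k,j}$ is a product of $d-1$ factors $g_m$ and one ``moment-weighted'' factor carrying the extra weight $\frac{x-m_ty}{\sigma_t}$ inside the integral. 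Using that $(u_1,\dots,u_d)\mapsto\prod_m u_m$ on a bounded box is approximated to accuracy $\varepsilon/2$ by a ReLU network of depth and size polynomial in $d$ and $\log(1/\varepsilon)$, it then suffices to build one-dimensional networks approximating $g_m$ (and the moment-weighted factor) to accuracy $O(\varepsilon/d)$ and pipe them into this multiplication gadget.

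Next I would put the one-dimensional factor in closed form. Because $\ell$ is a fixed integer, $y\mapsto\calN_\ell(2^{k_m}y-j_m)\mathbf 1[|y|\le C_{b,1}]$ is a piecewise polynomial of degree $\le\ell$ with $O(1)$ breakpoints $a_0<\dots<a_q$ (the points $2^{-k_m}(j_m+r)$, $r=0,\dots,\ell+1$, clipped to $[-C_{b,1},C_{b,1}]$); on each piece the integral of $y^r e^{-(x-m_ty)^2/2\sigma_t^2}$ against $dy$ is evaluated by the usual integration-by-parts recursion, giving a bounded algebraic combination of error-function values $\operatorname{erf}\!\big(\frac{m_ta_i-x}{\sqrt2\,\sigma_t}\big)$, boundary Gaussians $e^{-(x-m_ta_i)^2/2\sigma_t^2}$, and polynomials in $x,a_i,m_t,\sigma_t,\sigma_t^{-1}$. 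For the moment-weighted factor the identity $\frac{x-m_ty}{\sigma_t}e^{-(x-m_ty)^2/2\sigma_t^2}=-\frac{\sigma_t}{m_t}\frac{d}{dy}e^{-(x-m_ty)^2/2\sigma_t^2}$ lets one integrate by parts once more and reach the same list of building blocks. Thus each one-dimensional factor is an explicit $O(1)$-term composition of the maps $\{\operatorname{erf},\exp,(\cdot)^2,\times,\text{reciprocal}\}$ applied to affine functions of $(x,t)$ through $\sigma_t,m_t$.

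Then I would assemble the network. Each building block is replaced by its ReLU approximant --- $\exp$, reciprocal, squaring, and multiplication on bounded domains, and $\operatorname{erf}$, which is analytic and bounded and hence $\varepsilon'$-approximable on $[-M,M]$ by a network of depth $O(\log^2(M/\varepsilon'))$ and polylogarithmic size, as in \citet{Suzuki2019-pn,Oko2023} --- precomposed with the exact affine maps $x\mapsto\frac{m_ta_i-x}{\sqrt2\,\sigma_t}$, and the $O(1)$ blocks are composed in series and in parallel. Each block contributes an additive amount polynomial in $k_m$ (from the magnitude of the argument $2^{k_m}y-j_m$), in $\log C$ (from $|x|\le C$), and in $\log(1/\varepsilon)$; summing over the $O(1)$ blocks and the $d$ coordinates yields the stated $L=O(\log^4\varepsilon^{-1}+\log^2C+k^2)$, $\|W\|_\infty=O(\log^6\varepsilon^{-1}+\log^3C+k^3)$, $S=O(\log^8\varepsilon^{-1}+\log^4C+k^4)$, $B=\exp(O(\log\varepsilon^{-1}\log\log\varepsilon^{-1}+\log C+k))$. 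Composing the final output with a ReLU clamp to a fixed constant interval keeps $\|\phi_3^{k,j}\|_\infty$ and $\|\phi_4^{k,j}\|_\infty$ of order $O(1)$ without increasing the approximation error, since the exact targets are $O(1)$.

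The hard part will be controlling the construction \emph{uniformly in $t$}: as $t\to0^+$ the variance $\sigma_t^2=b_0^2t^{2\kappa}$ is tiny, so $\sigma_t^{-1}$ is large, the arguments of the $\operatorname{erf}$ sub-networks can be of order $\sigma_t^{-1}$, and the Gaussian is sharply peaked. One must verify that the resulting enlargement of the relevant input ranges costs only a logarithmic amount, so that it is absorbed into the $\log(1/\varepsilon)$ and constant factors on the range of $t$ over which the network is actually used, and --- crucially --- that the cancellations which render $g_m$ and the moment-weighted factor $O(1)$ are preserved under discretization; this is exactly why the closed form of the second step is used rather than a naive quadrature of the peaked integrand. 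The remaining items --- the explicit integration-by-parts recursion, the composition rules for the network sizes, and the clamping argument --- are routine given the cited approximation lemmas.
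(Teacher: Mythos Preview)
Your proposal is correct and follows essentially the same approach as the paper, which does not give an independent proof but simply cites \citet[Lemma B.3]{Oko2023}; the tensorization into one-dimensional factors, the closed-form expression of each factor via $\operatorname{erf}$ and boundary Gaussians, and the subsequent composition of the standard ReLU sub-networks for $\exp$, reciprocal, squaring, multiplication, and $\operatorname{erf}$ is exactly the construction carried out there. The only adaptation needed---replacing the specific $\sigma_t\sim\sqrt{t}$ of the diffusion setting by the general $\sigma_t=b_0t^\kappa$, $m_t$ of assumption (A3)---is the one you identify, and your remark that (A3) keeps the relevant input ranges polynomially bounded on $[T_0,1]$ (so that the cost is absorbed into the $\log(1/\varepsilon)$ budget) is precisely the point that makes the transfer go through.
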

\begin{proof}
See \citet[][Lemma B.3]{Oko2023}
\end{proof}

\subsection{Approximation of Gaussian integrals}

The following lemma is used in the proof of Theorem \ref{thm:approx_small_t} in Section \ref{sec:proof_small_t}. 

\begin{lma}\label{lma:Gauss_clip}
Let $\vx\in\R^d$, $0<\varepsilon<1/2$, and $\alpha\in\{0,1\}$.  For any function $F(\vy)$ supported on $I^d$, there is $C_b>0$ that depends only on $d$ such that 
\begin{multline}
\left| \int_{I^d} \frac{1}{(\sqrt{2\pi}\sigma_t)^d} e^{-\frac{\|\vx-m_t\vy\|^2}{2\sigma_t^2}} F(\vy) d\vy 
- 
 \int_{A_\vx} \frac{1}{(\sqrt{2\pi}\sigma_t)^d} e^{-\frac{\|\vx-m_t\vy\|^2}{2\sigma_t^2}} F(\vy) d\vy \right|\leq \varepsilon,
\end{multline}
where 
\[
A_\vx :=\left\{ \vy\in I^d \mid \left\| \vy - \frac{\vx}{m_t}\right\|_\infty \leq C_b \frac{\sigma_t\sqrt{\log N}}{m_t}\right\}.
\]
\end{lma}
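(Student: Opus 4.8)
\textbf{Proof proposal for Lemma~\ref{lma:Gauss_clip}.}
The plan is to estimate the discarded part of the integral, namely the contribution from $I^d \setminus A_{\vx}$, and show it is at most $\varepsilon$ for an appropriate choice of the constant $C_b$ (depending only on $d$, and using that $N$ is large). First I would bound $|F|$: since $F$ is supported on $I^d$, which has Lebesgue measure $2^d$, and is otherwise arbitrary, the cleanest route is to absorb $\sup_{I^d}|F|$ into the estimate — but note the statement as written has no explicit dependence on $\|F\|_\infty$, so in fact the intended reading is that the Gaussian tail decays fast enough in $N$ to beat any fixed polynomial-size bound on $F$ coming from the $B$-spline coefficients; concretely, in the application $F$ is a (clipped) $B$-spline basis function times a coefficient $\alpha_{k,j}$ of size at most $\mathrm{poly}(N)$, so it suffices to produce a bound of the form (Gaussian tail) $\times \mathrm{vol}(I^d) \times \mathrm{poly}(N) \le \varepsilon$, and the $\sqrt{\log N}$ in the definition of $A_{\vx}$ is exactly what converts a Gaussian tail into an arbitrary inverse polynomial in $N$.

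The key computation is the Gaussian tail bound. Writing $\vy = \vx/m_t + \vw$, the density $\frac{1}{(\sqrt{2\pi}\sigma_t)^d} e^{-\|\vx - m_t\vy\|^2/(2\sigma_t^2)}$ equals $\frac{1}{(\sqrt{2\pi}\sigma_t)^d} e^{-m_t^2\|\vw\|^2/(2\sigma_t^2)}$, which is (up to the factor $m_t^d$, bounded below by (A3)) the density of $N_d(\vx/m_t, (\sigma_t/m_t)^2 I_d)$. The complement of $A_{\vx}$ is contained in the event $\|\vw\|_\infty > C_b (\sigma_t\sqrt{\log N})/m_t$, i.e. at least one coordinate of a centered Gaussian with standard deviation $\sigma_t/m_t$ exceeds $C_b (\sigma_t/m_t)\sqrt{\log N}$. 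By a union bound over the $d$ coordinates and the standard one-dimensional Gaussian tail inequality $\mathbb{P}(|Z| > a) \le e^{-a^2/2}$ for $a \ge 1$, the probability is at most $2d\, e^{-C_b^2 (\log N)/2} = 2d\, N^{-C_b^2/2}$. Multiplying by the supremum of $|F|$ over $I^d$ (or, in the application, by $\mathrm{poly}(N)$) and choosing $C_b$ large enough as a function of $d$ — and, when $F$ carries a polynomial factor, large enough to dominate that polynomial degree — makes the whole thing at most $\varepsilon$ once $N$ is large; for the fully general $F$ one simply notes the bound scales with $\|F\|_{L^\infty(I^d)}$, which is the honest content of the lemma.

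The one genuine subtlety — and the step I expect to need the most care — is that the tail bound must hold \emph{uniformly in $t \in [T_0,1]$}: the ratio $\sigma_t/m_t$ varies with $t$, and near $t \to 0^+$ we have $m_t \to 1$ while $\sigma_t \to 0$, so $\sigma_t/m_t \to 0$, which only helps (the Gaussian concentrates), but I must make sure $m_t$ stays bounded away from $0$ uniformly so that the rescaling $\vy \mapsto \vx/m_t + \vw$ is well-behaved and the factor $m_t^d$ does not blow up or vanish. This is exactly guaranteed by the lower bound $D_0^{-1} \le \sigma_t^2 + m_t^2$ together with monotonicity of $m_t$ in (A3): for $t$ bounded below by $T_0$, and using $m_t \to 1$ as $t\to 0$, one gets $m_t \ge c > 0$ on the relevant range, so $C_b$ can be chosen once and for all depending only on $d$ (and on $c$, hence on the fixed constants in (A3)). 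With that uniformity in hand the estimate above closes the proof.
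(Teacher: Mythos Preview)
Your proposal is correct and is exactly the standard Gaussian tail argument the paper defers to by citing \citet[Lemma F.9]{Oko2023}; the paper gives no independent proof, so there is nothing to compare. Your change of variables $\vy=\vx/m_t+\vw$, the coordinate-wise union bound yielding a $2d\,N^{-C_b^2/2}$ tail, and the observation that the $\sqrt{\log N}$ radius is precisely what turns this into an arbitrary inverse polynomial in $N$ (so that any fixed $\|F\|_{L^\infty}$ or $\mathrm{poly}(N)$ factor is absorbed) are the intended mechanism, and your caveat that the stated bound tacitly scales with $\|F\|_{L^\infty(I^d)}$ is well taken.
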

\begin{proof}
    See \citet[][Lemma F.9]{Oko2023}.
\end{proof}

\section{Proof of Theorem \ref{thm:W2bound}}
\label{sec:W2bound}

Although the proof of Theorem \ref{thm:W2bound} is basically the same as the proof of \citet[Theorem 1]{Benton2023-lh}, a slight difference appears since the current bound shows for arbitrary time $t$.  We include the proof here for completeness. 

Let $\vv(\vx,t)$ and $\hv(\vx;t)$ be smooth vector fields and $\vvphi_{s,t}$ and $\hvphi_{s,t}$ be respective flows;  
\begin{gather*}
\frac{d}{dt}\vvphi_{s,t}(\vx) = \vv(\vvphi_{s,t}(\vx),t), \quad \vvphi_{s,s}(\vx)=\vx, \\
\frac{d}{dt}\hvphi_{s,t}(\vx) = \hv(\hvphi_{s,t}(\vx),t), \quad \hvphi_{s,s}(\vx)=\vx
\end{gather*}

\begin{lma}[Alekseev-Gr\"{o}bner] \label{lma:Alekseev–Grobner}
    Under the above notations, for any $T\geq 0$, 
    \begin{equation}
        \hvphi_{0,T}(\vx_0) - \vvphi_{0,T}(\vx_0)=\int_{0}^T 
        \bigl(\nabla_x \hvphi_{s,T}(\vx)\bigr|_{x=\vvphi_{0,s}(\vx_0)}\bigr) \bigl( \hv(\vvphi_{0,s}(\vx_0), s) - \vv(\vvphi_{0,s}(\vx_0), s) \bigr) ds.
    \end{equation}
\end{lma}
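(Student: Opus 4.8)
The plan is to prove the Alekseev–Gröbner lemma (also called the nonlinear variation-of-constants formula) by a standard telescoping/fundamental-theorem-of-calculus argument: compare the two flows by sliding the switchover time $s$ from $0$ to $T$, running the perturbed dynamics $\hvphi$ on $[s,T]$ and the reference dynamics $\vvphi$ on $[0,s]$, and differentiating in $s$. Concretely, define the interpolation
\[
\Psi(s) := \hvphi_{s,T}\bigl(\vvphi_{0,s}(\vx_0)\bigr), \qquad s\in[0,T].
\]
At the endpoints $\Psi(0) = \hvphi_{0,T}(\vx_0)$ and $\Psi(T) = \hvphi_{T,T}(\vvphi_{0,T}(\vx_0)) = \vvphi_{0,T}(\vx_0)$, so $\hvphi_{0,T}(\vx_0) - \vvphi_{0,T}(\vx_0) = \Psi(0) - \Psi(T) = -\int_0^T \Psi'(s)\,ds$. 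The whole content is the computation of $\Psi'(s)$.

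For that derivative I would apply the chain rule, differentiating $\Psi(s)=\hvphi_{s,T}(\vy(s))$ with $\vy(s):=\vvphi_{0,s}(\vx_0)$ in both the first ("initial time") slot and the spatial slot. The spatial-slot contribution is $\bigl(\nabla_x \hvphi_{s,T}(\vx)\big|_{x=\vy(s)}\bigr)\,\vy'(s)$, and since $\vy'(s) = \vv(\vvphi_{0,s}(\vx_0),s)$ by definition of the reference flow, this gives $\bigl(\nabla_x \hvphi_{s,T}\big|_{\vy(s)}\bigr)\vv(\vy(s),s)$. The initial-time-slot contribution is $\partial_s \hvphi_{s,T}(\vx)\big|_{x=\vy(s)}$; here I would invoke the flow identity $\hvphi_{s,T} = \hvphi_{u,T}\circ\hvphi_{s,u}$, differentiate at $u=s$ using $\partial_u\hvphi_{s,u}(\vx)\big|_{u=s} = \hv(\vx,s)$ and $\hvphi_{s,s}=\mathrm{id}$, to obtain $\partial_s\hvphi_{s,T}(\vx) = -\bigl(\nabla_x\hvphi_{s,T}(\vx)\bigr)\hv(\vx,s)$. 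Adding the two pieces, the $\nabla_x\hvphi_{s,T}\big|_{\vy(s)}$ factors out and
\[
\Psi'(s) = \bigl(\nabla_x \hvphi_{s,T}(\vx)\big|_{x=\vvphi_{0,s}(\vx_0)}\bigr)\bigl(\vv(\vvphi_{0,s}(\vx_0),s) - \hv(\vvphi_{0,s}(\vx_0),s)\bigr).
\]
Substituting into $\hvphi_{0,T}(\vx_0)-\vvphi_{0,T}(\vx_0) = -\int_0^T\Psi'(s)\,ds$ yields exactly the claimed identity.

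The main obstacle is not any single estimate but the careful justification of the differentiations: one needs the vector fields to be smooth enough (which is assumed — $\vv,\hv$ are stated to be smooth) so that the flow maps $\hvphi_{s,t}$ are $C^1$ jointly in the initial time $s$, the terminal time $t$, and the space variable $\vx$, and so that $\nabla_x\hvphi_{s,T}$ exists and is continuous. This is the classical smooth-dependence-on-data theorem for ODEs; I would cite it rather than reprove it, and then the flow semigroup identity $\hvphi_{s,T}=\hvphi_{u,T}\circ\hvphi_{s,u}$ plus the chain rule does the rest. A minor point to state cleanly is that the variational equation gives $\partial_u\hvphi_{s,u}(\vx)|_{u=s}=\hv(\vx,s)$ and $\nabla_x\hvphi_{s,s}=I_d$, which together produce the sign in $\partial_s\hvphi_{s,T}$.
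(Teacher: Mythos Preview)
Your argument is correct and is exactly the standard proof of the Alekseev--Gr\"obner (nonlinear variation-of-constants) formula: interpolate via $\Psi(s)=\hvphi_{s,T}(\vvphi_{0,s}(\vx_0))$, identify the endpoints, and compute $\Psi'(s)$ by the chain rule together with the initial-time derivative $\partial_s\hvphi_{s,T}(\vx)=-\bigl(\nabla_x\hvphi_{s,T}(\vx)\bigr)\hv(\vx,s)$ obtained from the semigroup identity. The paper itself does not prove this lemma; it is stated as the classical Alekseev--Gr\"obner result and then invoked directly in the proof of Theorem~\ref{thm:W2bound}, so there is no ``paper's proof'' to compare against --- you have supplied the (correct, standard) argument the paper omits.
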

Note that on the right-hand side, the vector fields $\hv$ and $\vv$ are evaluated at the same point $\vvphi_{0,s}(\vx_0)$.

\begin{proof}[Proof of Theorem \ref{thm:W2bound}]
Let $\hP_t:=(\hvphi_{0,t})_\# P_0$ and $P_t:=(\vvphi_{0,t})_\# P_0$ be pushforwards.  By the definition of the 2-Wasserstein metric,
\begin{equation}\label{eq:W2}
W_2(\hP_t,P_t) \leq \left(\int \Vert \hvphi_{0,t}(\vx_0)-\vvphi_{0,t}(\vx_0)\Vert^2 dP_0(\vx_0)\right)^{1/2}.
\end{equation}
From Lemma \ref{lma:Alekseev–Grobner}, 
\[
\Vert \hvphi_{0,t}(\vx_0)-\vvphi_{0,t}(\vx_0)\Vert \leq \int_0^t \left\| \bigr.
\nabla_x \hvphi_{s,t}(\vx)\bigl|_{x=\vvphi_{0,s}(\vx_0)}\right\|_{op} 
\left\| \hv(\vvphi_{0,s}(\vx_0), s) - \vv(\vvphi_{0,s}(\vx_0), s) 
\right\| ds.
\]
As a general relation of the largest singular value, we have 
\[
\frac{\partial}{\partial t}\bigl\| \nabla_x \hvphi_{s,t}(\vx) \bigr\|_{op} \leq \bigl\|\frac{\partial}{\partial t} \nabla_x \hvphi_{s,t}(\vx) \bigr\|_{op}.
\]
Then, it follows from $\frac{\partial}{\partial t} \nabla_x \hvphi_{s,t}(\vx) = \nabla_y \hv(y,t)|_{y=\vvphi_{s,t}(\vx)} \nabla_x \hvphi_{s,t}(\vx)$ that the inequality 
\[
\frac{\partial}{\partial t} \left\| \bigr.
\nabla_x \hvphi_{s,t}(\vx)\bigl|_{x=\vvphi_{0,s}(\vx_0)}\right\|_{op} \leq L_t \left\| \nabla_x \vvphi_{s,t}(\vx)|_{x=\vvphi_{0,s}(\vx_0)}\right\|
\]
holds by the $L_t$-Lipschitzness of $\hv(\cdot,t)$.  Accordingly, noting that
$\|\nabla_x \hvphi_{s,s}(\vx)\|_{op}=\|\nabla_x \vx\|_{op}=1$, the standard ODE argument leads to  
\[
\left\| \bigr.
\nabla_x \hvphi_{s,t}(\vx)\bigl|_{x=\vvphi_{0,s}(\vx_0)}\right\|_{op} \leq e^{\int_s^t L_udu},
\]
and therefore 
\[
\Vert \hvphi_{0,t}(\vx_0)-\vvphi_{0,t}(\vx_0)\Vert \leq  
 \int_0^t e^{\int_s^t L_udu} \Vert \hv(\vvphi_{0,s}(\vx_0),s) - \vv(\vvphi_{0,s}(\vx_0),s)\Vert ds. 
\]
From Cauchy-Schwarz inequality, 
\begin{align*}
& \Vert \hvphi_{0,t}(\vx_0)-\vvphi_{0,t}(\vx_0)\Vert^2  \\
& \leq 
\int_0^t 1^2 ds  
 \int_0^t  e^{2 \int_s^t L_udu}\Vert \hv(\vvphi_{0,s}(\vx_0),s) - \vv(\vvphi_{0,s}(\vx_0),s)\Vert^2 ds  \\
& = t \int_0^t  e^{2 \int_s^t L_udu}\Vert \hv(\vvphi_{0,s}(\vx_0),s) - \vv(\vvphi_{0,s}(\vx_0),s)\Vert^2 ds . 
\end{align*}
Since the distribution of $\vvphi_{0,s}(\vx_0)$ with $x_0\sim P_0$ is given by $P_s$, combining the above bound with \eqref{eq:W2} completes the proof. 
\end{proof}

\section{Proof of main theorems}

\subsection{Proof of Lemma \ref{lma:loss_sup}: sup of loss functions}
\label{sec:loss_sup}
\begin{proof}
\begin{align*}
 \ell_{\vphi}(\vx) & = \int_{T_\ell}^{T_u}\int \Vert \vphi(\vx_t,t)-\vv_t(\vx_t|\vx)\Vert^2 p_t(\vx_t|\vx)d\vx _t dt \\
 & \leq 2 \int_{T_\ell}^{T_u}\int  \Vert \vphi(\vx_t,t)\Vert^2 p_t(\vx_t|\vx)d\vx _t dt + 2 \int_{T_\ell}^{T_u}\int \Vert \vv_t(\vx_t|\vx)\Vert^2 p_t(\vx_t|\vx)d\vx _t dt  
\end{align*}
From the definition of $\cH_n$ and Assumptions (A3-A4), the first term is bounded by 
\begin{equation*}
4 C^2 \int_{T_0}^1\left( (\sigma_t')^2\log n + (m_t')^2 \right) dt
\leq 8\tilde{C} \left( (\log n)^{b+1} + (\log n)^{b} \right),
\end{equation*}
where $\tilde{C}>0$ is a constant, and $b=0$ for $\kappa>1/2$ and $b=b_0$ for $\kappa=1/2$.  
From the definition 
\[
    \vv_t(\vx_t|\vx)=\sigma_t'\frac{\vx_t-m_t \vx}{\sigma_t} + m_t' \vx
\]
and the fact $\|\vx\|\leq 1$, the second term is upper bounded by 
\begin{align*}
& 4\int_{T_0}^{1}\int \left\{ (\sigma_t')^2 \frac{\Vert \vx_t -m_t \vx\Vert^2}{\sigma_t^2}  + (m_t')\|\vx\|^2 \right\} p_t(\vx_t|\vx)d\vx _t dt   \\
& \leq  4\int_{T_0}^{1}\left\{ d(\sigma_t')^2  + (m_t')^2 \right\}  dt  \quad \leq \quad 4d \tilde{D}(\log n)^{b}
\end{align*}
for some constants $\tilde{D}>0$. Note that the first inequality is given by $p_t(\vx_t|\vx) = N_d(m_t \vx, \sigma_t^2 I_d)$.  This completes the proof.
\end{proof}

\subsection{Division point}
\label{sec:division_point}
The appropriate division point of the time interval $[T_0,1]$ arises from the width $a_0$ of the smoother region around the boundary of $I^d$, which is assumed in (A1).  Suppose that the smoothness of $p_0(x)$ is $\check{s}$, higher than $s$, in the region $I^d\backslash [-1+a_0,1-a_0]^d$.  We should make the smoother region as small as possible to ensure that $p_0$ is essentially in $B^s_{p,1}(I^d)$.   We set $a_0 = N^{-\gamma}$ and consider the partition of $I^d\backslash [-1+a_0,1-a_0]^d$ by $N^{d\gamma}-(N^\gamma-2)^d$ cubes of size $a_0=N^{-\gamma}$ (see Figure \ref{fig:division}).  Suppose that we use $N^{\delta'}$ bases ($N\gg 1$) of $B$-spline for each small cube with arbitrarily small $\delta'>0$.  Condition $\delta'>0$ guarantees that the approximation can be arbitrarily accurate in each small cube. Since $p_0$ restricted to each cube has a smooth degree $\check{s}$, Theorem \ref{thm:B-spline} tells that it can be approximated by a $B$-spline function with the accuracy 
\[
a_0^{-\check{s}} N^{-\check{s}\delta'/d}.
\]
The total number of $B$-spline bases is then
\[
(N^{d\gamma}-(N^\gamma-2)^d) N^{\delta'} \sim N^{(d-1)\gamma+\delta'},
\]
To make the number of bases equal to or less than $N$, which is the number used for the $B^s_{p',q'}$-region of $p_0$, we set $\gamma = (1-\delta\kappa)/d$, i.e., $a_0 = N^{-(1-\delta\kappa)/d}$ (we set $\delta'=\delta\kappa$ for notational simplicity).  

As seen in the proof of Theorem \ref{thm:approx_small_t}, to obtain the desired bound, the deviation $\sigma_t$ should satisfy $\sigma_t\leq a_0$ to bound the integral around the boundary.  When $t$ is small so that $\sigma_t\sim t^\kappa$, $\sigma_t\leq a_0$ means $t\lesssim T_*:=N^{-(\kappa^{-1}-\delta)/d}$, which gives a constant on $T_*$.  Consequently, we divide the time interval into $[T_0, T_*]$ and $[T_*,1]$, and show the different bounds for the approximation error. 

\begin{figure}
    \centering
    \includegraphics[width=4cm]{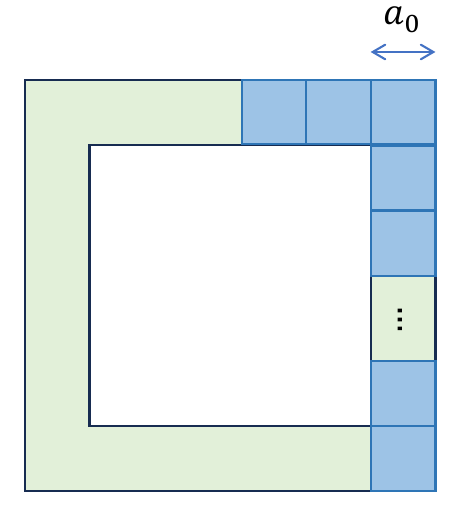}
    \caption{Division of the cube into smoother small regions and the general region.}
    \label{fig:division}
\end{figure}

\subsection{Basic bounds of $p_t(x)$ and $v_t(x)$}
Recall that
\[
p_t(\vx) = \int_{[-1,1]^d} \frac{1}{(2\pi\sigma_t)^d} e^{-\frac{\|\vx-m_t \vy\|^2}{2\sigma_t^2}}p_0(\vy)d\vy.
\]

\begin{lma}\label{lma:pt_bound}
There exists $C_1 = C_1(d,C_0)>0$ such that 
\begin{equation}
    C_1^{-1}\exp\Bigl(-\frac{(\Vert  \vx\Vert_\infty - m_t)_+^2}{\sigma_t^2}\Bigr) \leq p_t(\vx) \leq C_1 \exp\Bigl( -\frac{( \Vert \vx\Vert_\infty - m_t)_+^2}{2\sigma_t^2}\Bigr) 
\end{equation}
for any $\vx\in\R^d$ and $t\in[\uT,1]$.
\end{lma}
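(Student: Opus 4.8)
The statement to prove is Lemma~\ref{lma:pt_bound}: upper and lower Gaussian-tail bounds for the marginal density $p_t(\vx)$ in terms of $(\|\vx\|_\infty - m_t)_+$. The plan is to sandwich $p_t(\vx) = \int_{I^d} (2\pi\sigma_t)^{-d} e^{-\|\vx-m_t\vy\|^2/(2\sigma_t^2)} p_0(\vy)\,d\vy$ using Assumption (A2), which gives $C_0^{-1} \le p_0(\vy) \le C_0$ on $I^d$ (and $p_0 = 0$ outside). Thus up to the constant $C_0$ we only need two-sided estimates on the pure Gaussian integral $G_t(\vx) := \int_{I^d} (2\pi\sigma_t)^{-d} e^{-\|\vx-m_t\vy\|^2/(2\sigma_t^2)}\,d\vy$. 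Since everything factorizes over coordinates, I would reduce to the one-dimensional integral $g_t(x_i) := \int_{-1}^{1} (2\pi\sigma_t^2)^{-1/2} e^{-(x_i - m_t y)^2/(2\sigma_t^2)}\,dy$ and prove $c^{-1} e^{-(|x_i| - m_t)_+^2/\sigma_t^2} \le g_t(x_i) \le c\, e^{-(|x_i|-m_t)_+^2/(2\sigma_t^2)}$ for a constant $c = c(d)$ (noting $m_t \le D_0^{1/2}$ is bounded by (A3), so $\|\vx\|_\infty \le m_t + \text{something}$ behaves uniformly), then take the product over $i=1,\dots,d$ and absorb $C_0^d$, $D_0$ into $C_1$. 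A subtlety: the exponent in the claimed bound uses $\|\vx\|_\infty$, not a per-coordinate quantity, so for the upper bound I would pick the coordinate $i$ achieving the max and bound the other $d-1$ factors $g_t(x_j)$ by their trivial maximum $\int_{\mathbb{R}} (2\pi\sigma_t^2)^{-1/2} e^{-\cdot}\,dy \cdot m_t^{-1}$ — wait, more carefully, by the crude bound $g_t(x_j) \le m_t^{-1}$ (substituting $u = m_t y$) times a constant, which is $O(1)$ since $m_t$ is bounded below away from $0$ for $t$ bounded away from $1$...

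\textbf{Handling the $m_t \to 0$ regime.} Actually $m_t \to 0$ as $t \to 1$, so $m_t^{-1}$ is not uniformly bounded; instead I would use the cleaner bound $g_t(x) \le \int_{\mathbb{R}} (2\pi\sigma_t^2)^{-1/2} e^{-(x-m_t y)^2/(2\sigma_t^2)} \cdot \mathbf{1}[|y|\le 1]\,dy$, and observe that the integrand is at most $(2\pi\sigma_t^2)^{-1/2}$ while the integration region where the exponent is non-negligible has length $O(\sigma_t/m_t \wedge 1)$... this still has the same issue. The robust fix: bound each of the non-extremal factors by $g_t(x_j) \le \int_{\mathbb{R}} (2\pi\sigma_t^2)^{-1/2} e^{-(x_j - m_t y)^2/(2\sigma_t^2)}\,dy$ only after \emph{first} restricting; actually since $|y| \le 1$ the change of variables $u = m_t y$ gives $g_t(x) = m_t^{-1}\int_{-m_t}^{m_t}(2\pi\sigma_t^2)^{-1/2}e^{-(x-u)^2/(2\sigma_t^2)}\,du \le m_t^{-1} \cdot \min\{1, 2m_t/(\sqrt{2\pi}\sigma_t)\}$, so $g_t(x_j) \le \min\{m_t^{-1}, 2/(\sqrt{2\pi}\sigma_t)\}$; combined with $G_t(\vx) \ge$ (lower bound) the ratios telescope. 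The cleanest route is probably to directly observe that the whole $d$-dimensional Gaussian normalization already supplies $(2\pi\sigma_t^2)^{-d/2}$, and that $G_t(\vx) \le (2\pi\sigma_t^2)^{-d/2} \cdot \mathrm{vol}(\{y \in I^d : \|x - m_t y\| \le C\sigma_t\sqrt{\log(\cdot)}\}) \cdot e^{-(\|x\|_\infty - m_t)_+^2/(2\sigma_t^2)} + (\text{negligible tail})$, but honestly the reference \citet[Lemma A.1 or similar]{Oko2023} proves essentially this statement, so I expect the paper to cite it; if a self-contained proof is wanted, the per-coordinate sandwich above is the way.

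\textbf{The lower bound.} For the lower bound I would restrict the integral to a small ball: if $\|\vx\|_\infty \le m_t$, then $\vx/m_t \in I^d$ and taking $\vy$ in a $\sigma_t$-neighborhood of $\vx/m_t$ intersected with $I^d$ (which has volume $\gtrsim \sigma_t^d$, using that $m_t$ is bounded so the neighborhood isn't forced entirely outside the cube — here one needs $\|\vx/m_t\|_\infty \le 1$ plus a geometric argument that at least a fixed fraction of the $\sigma_t$-ball around a point of $I^d$ lies in $I^d$) gives $p_t(\vx) \gtrsim C_0^{-1} \sigma_t^{-d} \cdot \sigma_t^d = C_0^{-1}$, consistent with the exponent being $0$. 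If $\|\vx\|_\infty > m_t$, say coordinate $i$ realizes the max with $x_i > 0$, then I choose $\vy$ near the projection of $\vx/m_t$ onto $I^d$, i.e., $y_i$ near $1$ (or $-1$), picking up $e^{-(x_i - m_t)^2/(2\sigma_t^2)}$ from the exponent with the extra constant slack $C_1$ absorbing the difference between $\sigma_t^2$ and $2\sigma_t^2$ in the exponent as well as factors from the cube-geometry and the other coordinates.

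\textbf{Main obstacle.} The routine Gaussian estimates are straightforward; the one genuine care-point is the uniformity over $t \in [T_0, 1]$ as $m_t$ and $\sigma_t$ degenerate — in particular, ensuring the constant $C_1$ depends only on $d$ and $C_0$ and not on $t$ when $m_t \to 0$ (the $\|\vx\|_\infty - m_t$ shift becomes just $\|\vx\|_\infty$, and $p_t$ approaches a standard Gaussian, which is fine) and checking the cube-geometry lemma (a fixed positive fraction of any small ball centered in $[-1,1]^d$ stays inside $[-1,1]^d$) for the lower bound. I expect the proof to invoke $\|\vx\|_\infty \le m_t$ versus $> m_t$ casework exactly as above and to cite \citet{Oko2023} for the analogous diffusion-model computation, adapting the normalization to the present $(m_t,\sigma_t)$ parametrization.
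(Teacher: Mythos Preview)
Your anticipation is exactly right: the paper does not give a proof at all, stating only ``The proof is elementary and the same as \citet[Lemma A.2]{Oko2023}.  We omit it.'' Your sketch is essentially the standard argument that reference carries out, so there is no disagreement in approach.

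Two brief remarks on the loose ends in your sketch. First, the uniformity worry as $m_t\to 0$ is resolved by Assumption~(A3): from $\sigma_t^2+m_t^2\geq D_0^{-1}$ one has $\max\{m_t,\sigma_t\}\geq (2D_0)^{-1/2}$, so your one-dimensional bound $g_t(x_j)\leq \min\{m_t^{-1},\,\sqrt{2/\pi}\,\sigma_t^{-1}\}$ is uniformly bounded by a constant depending on $D_0$; this handles the $d-1$ ``non-extremal'' factors in the upper bound without further case analysis. Second, this means the constant $C_1$ in fact depends on $D_0$ as well as on $d$ and $C_0$; the lemma's stated dependence $C_1=C_1(d,C_0)$ is a slight imprecision inherited from the diffusion setting of \citet{Oko2023}, where $\sigma_t^2+m_t^2=1$ identically and $D_0$ does not appear. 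Your lower-bound argument via restricting to a $\sigma_t$-neighborhood of the projection of $\vx/m_t$ onto $I^d$ is the right idea and goes through without difficulty once one uses that at least a $2^{-d}$ fraction of any axis-aligned cube centered at a point of $I^d$ lies in $I^d$.
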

\begin{proof}
The proof is elementary and the same as \citet[Lemma A.2]{Oko2023}.  We omit it. 
\end{proof}

\begin{lma}\label{lma:vt_bound}
\begin{enumerate}
    \item[(i)] Let $k\in \N$ be arbitrary. There is $C_2 = C_2(d,k,C_0)>0$ such that 
    \[
    \left\Vert \partial_{x_{i_1}}\cdots\partial_{x_{i_k}} p_t(\vx)\right\Vert \leq \frac{C_2}{\sigma_t^k} \quad (\forall t\in [\uT,1]).
    \]
    \item[(ii)] There is $C_3 = C_3(d,C_0)>0$ such that 
    \[
    \Vert \vv_t(\vx)\Vert \leq C_3 \Bigl\{ \sigma_t' \Bigl( \frac{(\Vert \vx\Vert_\infty - m_t)_+}{\sigma_t} \lor 1\Bigr) + |m_t'| \Bigr\}
    \]
    for any $\vx\in\R^d$ and $t\in [\uT,1]$.
\end{enumerate}

\end{lma}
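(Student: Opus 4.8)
\textbf{Proof proposal for Lemma \ref{lma:vt_bound}.}

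The plan is to prove part (i) by directly differentiating the Gaussian kernel and using the bounds on $p_0$ from (A2), and to prove part (ii) by combining the explicit formula \eqref{eq:vf_linear} for the conditional vector field with the concentration of the conditional posterior $p_t(\vy\mid\vx)$ established (implicitly) via Lemma \ref{lma:pt_bound}.

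For part (i), I would write $p_t(\vx)=\int_{I^d} g_t(\vx-m_t\vy)\,p_0(\vy)\,d\vy$ with $g_t$ the density of $N_d(0,\sigma_t^2 I_d)$, and differentiate under the integral sign. Each application of $\partial_{x_i}$ brings down a Hermite-type factor of the form (polynomial in $(\vx-m_t\vy)/\sigma_t$)$\,/\,\sigma_t$, so $\partial_{x_{i_1}}\cdots\partial_{x_{i_k}}p_t(\vx)=\sigma_t^{-k}\int_{I^d} H_{k,t}\!\left(\tfrac{\vx-m_t\vy}{\sigma_t}\right) g_t(\vx-m_t\vy)\,p_0(\vy)\,d\vy$, where $H_{k,t}$ is a fixed (dimension- and $k$-dependent) polynomial. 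Using (A2) to bound $p_0(\vy)\leq C_0$ and then extending the integral to all of $\R^d$, the remaining quantity $\int_{\R^d} |H_{k,t}(\vz/\sigma_t)|\,g_t(\vz)\,d\vz$ is, after the substitution $\vz=\sigma_t\vu$, a finite Gaussian moment independent of $t$. This yields the claimed $C_2/\sigma_t^k$ bound with $C_2=C_2(d,k,C_0)$. This part is routine and follows \citet[Lemma A.2]{Oko2023} closely.

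For part (ii), recall from \eqref{eq:vf_linear} that
\[
\vv_t(\vx)=\int\left\{\sigma_t'\,\frac{\vx-m_t\vy}{\sigma_t}+m_t'\vy\right\}p_t(\vy\mid\vx)\,d\vy,
\]
so $\|\vv_t(\vx)\|\leq |\sigma_t'|\,\frac{1}{\sigma_t}\int\|\vx-m_t\vy\|\,p_t(\vy\mid\vx)\,d\vy+|m_t'|\int\|\vy\|\,p_t(\vy\mid\vx)\,d\vy$. The second integral is $\leq \sqrt{d}$ since $\vy\in I^d=[-1,1]^d$, giving the $|m_t'|$ term. For the first integral, the key point is that $p_t(\vy\mid\vx)\propto g_t(\vx-m_t\vy)p_0(\vy)$ concentrates $m_t\vy$ near $\vx$ at scale $\sigma_t$: writing $\vw=\vx-m_t\vy$, one splits the posterior mass according to whether $\|\vx\|_\infty\leq m_t$ (so $\vx/m_t$ lies inside the unit cube and the Gaussian can concentrate fully, yielding $\int\|\vw\|\,p_t(\vy\mid\vx)d\vy=O(\sigma_t)$) or $\|\vx\|_\infty>m_t$ (so the nearest feasible point has $\|\vw\|\gtrsim m_t(\|\vx\|_\infty-m_t)/m_t=(\|\vx\|_\infty-m_t)$, and a Gaussian tail/Laplace-type estimate bounds the conditional mean of $\|\vw\|$ by $O((\|\vx\|_\infty-m_t)_+ + \sigma_t)$). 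Feeding this into $\frac{1}{\sigma_t}\int\|\vw\|\,p_t(\vy\mid\vx)\,d\vy = O\!\left(\frac{(\|\vx\|_\infty-m_t)_+}{\sigma_t}\vee 1\right)$ gives exactly the stated bound, with the ratio $p_t(\vx)$ in the normalization controlled below and above by Lemma \ref{lma:pt_bound}. The main obstacle is making the outside-the-cube tail estimate clean and uniform in $t$: one must show that conditioning a Gaussian of scale $\sigma_t$ (centered at $m_t\vy$ ranging over $m_t I^d$) on being near $\vx$ with $\|\vx\|_\infty>m_t$ produces a conditional first moment of $\|\vx-m_t\vy\|$ that is $O((\|\vx\|_\infty-m_t) + \sigma_t)$ rather than something blowing up; this is handled by the change of variables $\vy\mapsto \vy-\vx/m_t$ combined with the Gaussian tail bounds underlying Lemma \ref{lma:pt_bound}, together with (A3) which keeps $m_t$ bounded away from degeneracy for small $t$. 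As before, this mirrors the corresponding estimate in \citet{Oko2023}, so I would cite that lemma for the routine Gaussian computations and focus the written proof on the algebraic reduction via \eqref{eq:vf_linear} and the case split on $\|\vx\|_\infty$ versus $m_t$.
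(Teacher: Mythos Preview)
Your plan for part (i) is correct and matches the paper, which simply defers to \cite{Oko2023}. For part (ii) your argument is sound but organized differently from the paper's. You work with the posterior $p_t(\vy\mid\vx)$ and bound its first moment of $\|\vx-m_t\vy\|$ via a case split on whether $\vx/m_t$ lies inside $I^d$, invoking Gaussian concentration for the inside case and a tail/Laplace estimate for the outside case. The paper instead bounds the \emph{numerator} $\int \vv_t(\vx\mid\vy)\,p_t(\vx\mid\vy)p_0(\vy)\,d\vy$ in one shot by a Gaussian truncation: it sets $\varepsilon$ equal to the lower bound $C_1^{-1}\exp\bigl(-(\|\vx\|_\infty-m_t)_+^2/\sigma_t^2\bigr)$ of $p_t(\vx)$ from Lemma \ref{lma:pt_bound}, uses the clipping result (\cite[Lemma F.9]{Oko2023}, essentially Lemma \ref{lma:Gauss_clip} here) to restrict the $\vy$-integral to $A_\vx=\{\vy:\|\vy-\vx/m_t\|_\infty\leq C\sigma_t\sqrt{\log(1/\varepsilon)}/m_t\}$ at cost $\varepsilon$, observes that on $A_\vx$ one has $|x_j-m_ty_j|/\sigma_t\leq C\sqrt{\log(1/\varepsilon)}$, and then divides through by $p_t(\vx)\geq\varepsilon$. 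Since $\sqrt{\log(1/\varepsilon)}\asymp (\|\vx\|_\infty-m_t)_+/\sigma_t\vee 1$ by the choice of $\varepsilon$, the bound follows with no case split. Your route is more probabilistic and makes the scale $(\|\vx\|_\infty-m_t)_+ +\sigma_t$ transparent; the paper's is terser and automatically uniform in $t$, which matters because your proposed change of variables $\vy\mapsto\vy-\vx/m_t$ and the remark that ``(A3) keeps $m_t$ bounded away from degeneracy for small $t$'' need a separate argument when $t$ is near $1$ (there it is $\sigma_t$, not $m_t$, that (A3) keeps large).
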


\begin{proof}
(i) is standard and the same as \cite[Lemma A.3][]{Oko2023}.  We omit it. 

For (ii), let $g(\vx;m_t \vy,\sigma_t^2)=p_t(\vx|\vy)=\frac{1}{(2\pi \sigma_t)^{d/2}}\exp\bigl\{-\frac{\Vert x-m_t\vy\Vert^2}{2\sigma_t^2}\bigr\}$.  Note that
\[
v_t(\vx) = \frac{\int v_t(\vx|\vy)g(\vx;m_t \vy,\sigma_t^2)p_0(\vy)d\vy}{p_t(\vx)}.
\]
The norm of the numerator is upper bounded by 
\begin{align*}
   &  \left\| \int \Bigl\{\sigma_t'\frac{\vx-m_t\vy}{\sigma_t} + m_t' y\Bigr\} g(\vx; m_t \vy, \sigma_t^2 ) p_0(\vy) d\vy \right\|  \\
  \leq   & \sigma_t'\Bigl\| \int \frac{\vx-m_t\vy}{\sigma_t}g(\vx; m_t \vy, \sigma_t^2 )p_0(\vy)  d\vy\Bigr\| + |m_t'|\int \|\vy\| g(\vx; m_t \vy, \sigma_t^2 ) p_0(\vy) d\vy. 
\end{align*}
Since $\text{Supp}(p_0)=[-1,1]^d$ by assumption (A1), the second term in the last line is upper bounded by 
\begin{equation}
    |m_t'|\int  g(\vx; m_t \vy, \sigma_t^2 ) p_0(\vy) d\vy
    = |m_t'|p_t(\vx).
\end{equation}

To bound the first term, we use the restriction of the integral region as in Lemma F.9, \cite{Oko2023}. 
Namely, letting $\varepsilon:=C_1^{-1}\exp(-\frac{(\|\vx\|_\infty-m_t)_+^2 )}{2\sigma_t^2}$, the lower bound of $p_t(\vx)$, the integral is approximated as for any $j=1,\ldots,d$,  
\[
\left| \int_{\R^d} \Bigl(\frac{x_j-m_t y_j}{\sigma_t}\Bigr)^\alpha g(\vx; m_t \vy, \sigma_t^2 ) p_0(\vy) d\vy -  \int_{A_x} \Bigl(\frac{x_j-m_t y_j}{\sigma_t}\Bigr)^\alpha g(\vx; m_t \vy, \sigma_t^2 ) p_0(\vy) d\vy \right| \leq \varepsilon,
\]
where $A_x:=\prod_{i=1}^d\bigl[\frac{x_i}{m_t}-\frac{C\sigma_c}{m_t}\sqrt{\log(1/\varepsilon)}, \frac{x_i}{m_t}+\frac{C\sigma_t}{m_t}\sqrt{\log(1/\varepsilon)}\bigr]$, $C$ is a positive constant depending only on $d$, and $\alpha\in\{0,1\}$. Then, the first term is upper-bounded by 
\begin{align*}
    \sigma_t' \Bigl\| \int_{A_x} \frac{x-m_t\vy}{\sigma_t}g(\vx; m_t \vy, \sigma_t^2 )p_0(\vy)  d\vy +\varepsilon {\bf 1} \Bigr\|.
\end{align*}
Noting that $\vy\in A_x$ is equivalent to $|x_i-m_t y_i|/\sigma_t \leq C\sqrt{\log(1/\varepsilon)}$, the above quantity is further upper-bounded by 
\[
    \sqrt{d}C \sigma_t'\sqrt{\log(1/\varepsilon)} \int_{A_x} g(\vx; m_t \vy, \sigma_t^2 )p_0(\vy)  d\vy +\sqrt{d}\varepsilon  \leq C' \Bigl(\sigma_t'\sqrt{\log(1/\varepsilon)} p_t(\vx) + \varepsilon\Bigr).
\]

Noting that $\varepsilon = C_1^{-1}\exp(-\frac{(\|\vx\|_\infty-m_t)_+^2 )}{\sigma_t^2})$ and $p_t(\vx)\geq \varepsilon$,  we obtain 
\begin{align*}
 \|v_t(\vx)\|   & \leq \frac{C'  (\sigma_t'\sqrt{\log(1/\varepsilon)} p_t(\vx) + \varepsilon ) + |m_t'|p_t(\vx)}{p_t(\vx)}     \\
\leq  & C''\left\{ \sigma_t'\Bigl( \frac{(\|\vx\|_\infty-m_t)_+ }{\sigma_t}\Bigr) + \sigma_t'+ |m_t'|\right\}
\\
\leq  & C'''\left\{ \sigma_t'\Bigl( \frac{(\|\vx\|_\infty-m_t)_+}{\sigma_t} \vee 1 \Bigr) + |m_t'|\right\}.
\end{align*}
This completes the proof. 
\end{proof}

The following lemma shows an upper bound of $\vv_t(\vx)$ when $\vx$ is in a bounded region of $\sigma_t\sqrt{1/\varepsilon}$, and presents bounds of relevant integrals. 

\begin{lma}\label{lma:vt_bounded}
Let $\varepsilon>0$ is sufficiently small.  
\begin{enumerate}
\item[(i)]
For any $C_4>0$, we have 
\begin{equation}
    \|\vv_t(\vx)\| \leq C_4 \bigl( \sigma_t'\sqrt{\log(1/\varepsilon)} + |m_t'|\bigr)
\end{equation}
for any $\vx$ with $\|\vx\|_\infty \leq m_t + C_4\sigma_t \sqrt{\log(1/\varepsilon)} $ and $t\in [\uT,1]$. 
\item[(ii)]
For any $C_5>0$, there is $\tilde{C}>0$ such that 
\begin{equation}\label{eq:int_v2_out}
    \int_{\|\vx\|_\infty \geq m_t + C_5\sigma_t \sqrt{\log(1/\varepsilon)}} p_t(\vx) \|v_t(\vx)\|^2d\vx  \leq \tilde{C} \bigl\{ (\sigma_t')^2 \log^{\frac{d}{2}}\bigl(\varepsilon^{-1}\bigr) + (m_t')^2 \log^{\frac{d-2}{2}}\bigl(\varepsilon^{-1}\bigr)\bigr\}\varepsilon^{\frac{C_5^2}{2}}
\end{equation}
and 
\begin{equation}\label{eq:int_pt_out}
    \int_{\|\vx\|_\infty \geq m_t + C_5\sigma_t \sqrt{\log(1/\varepsilon)}} p_t(\vx) d\vx | \leq \tilde{C}  \log^{\frac{d-2}{2}}\bigl(\varepsilon^{-1}\bigr)\varepsilon^{\frac{C_5^2}{2}}
\end{equation}
hold for any $\varepsilon>0$ and $t\in [\uT,1]$. 
\end{enumerate}
\end{lma}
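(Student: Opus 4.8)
\textbf{Proof plan for Lemma \ref{lma:vt_bounded}.}
The plan is to deduce both parts directly from the pointwise bound on $\|\vv_t(\vx)\|$ in Lemma \ref{lma:vt_bound}(ii), combined with the two-sided Gaussian-type envelope for $p_t(\vx)$ in Lemma \ref{lma:pt_bound}. Part (i) is the genuinely easy half: if $\|\vx\|_\infty \leq m_t + C_4\sigma_t\sqrt{\log(1/\varepsilon)}$, then $(\|\vx\|_\infty - m_t)_+ / \sigma_t \leq C_4\sqrt{\log(1/\varepsilon)}$, so the factor $\bigl(\tfrac{(\|\vx\|_\infty-m_t)_+}{\sigma_t}\vee 1\bigr)$ in Lemma \ref{lma:vt_bound}(ii) is at most $C_4\sqrt{\log(1/\varepsilon)}\vee 1$; absorbing the constant $C_3$ and (for $\varepsilon$ small) the $\vee 1$ into a single constant $C_4$ gives the stated bound. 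I would write this in one or two lines.

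For part (ii), the main work is bounding the two tail integrals over $R_t := \{\|\vx\|_\infty \geq m_t + C_5\sigma_t\sqrt{\log(1/\varepsilon)}\}$. First I would substitute $\vu := (\vx - m_t\vzero)/\sigma_t$ — more precisely, note that on $R_t$ we have $(\|\vx\|_\infty - m_t)_+ \geq C_5\sigma_t\sqrt{\log(1/\varepsilon)} > 0$, so I can write $\|\vx\|_\infty - m_t = \sigma_t \rho$ with $\rho \geq C_5\sqrt{\log(1/\varepsilon)}$ and change variables coordinatewise so that the region becomes $\{\|\vu\|_\infty \geq C_5\sqrt{\log(1/\varepsilon)} =: \rho_0\}$ with Jacobian $\sigma_t^d$. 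By Lemma \ref{lma:pt_bound}, $p_t(\vx) \leq C_1 \exp(-\rho^2/2)$ where $\rho = (\|\vx\|_\infty - m_t)_+/\sigma_t$, and by Lemma \ref{lma:vt_bound}(ii), $\|\vv_t(\vx)\|^2 \leq 2C_3^2\bigl((\sigma_t')^2(\rho\vee 1)^2 + (m_t')^2\bigr) \leq 2C_3^2\bigl((\sigma_t')^2\rho^2 + (m_t')^2\bigr)$ on $R_t$ (since $\rho \geq \rho_0 \geq 1$ for small $\varepsilon$). Plugging in, the integrand of \eqref{eq:int_v2_out} is controlled by $C\bigl((\sigma_t')^2\rho^2 + (m_t')^2\bigr)e^{-\rho^2/2}$ with $\rho = \|\vu\|_\infty$ on the domain $\|\vu\|_\infty \geq \rho_0$. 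The remaining task is the scalar estimate: for a $d$-dimensional Gaussian-type integral over $\{\|\vu\|_\infty \geq \rho_0\}$, splitting over which coordinate achieves the max and using the one-dimensional tail bound $\int_{\rho_0}^\infty u^k e^{-u^2/2}du \lesssim \rho_0^{k-1} e^{-\rho_0^2/2}$ (for $k \geq 1$; for $k=0$ one gets $\rho_0^{-1}e^{-\rho_0^2/2}$, hence the $\log^{(d-2)/2}$ exponent), one gets $\int_{\|\vu\|_\infty\geq\rho_0}\|\vu\|_\infty^2 e^{-\|\vu\|_\infty^2/2}d\vu \lesssim \rho_0^d\, e^{-\rho_0^2/2}$ and $\int_{\|\vu\|_\infty\geq\rho_0} e^{-\|\vu\|_\infty^2/2}d\vu \lesssim \rho_0^{d-2}e^{-\rho_0^2/2}$. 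Since $\rho_0 = C_5\sqrt{\log(1/\varepsilon)}$, we have $\rho_0^k \sim \log^{k/2}(1/\varepsilon)$ and $e^{-\rho_0^2/2} = \varepsilon^{C_5^2/2}$, which after collecting the $(\sigma_t')^2$ and $(m_t')^2$ contributions with their respective powers of $\log(1/\varepsilon)$ yields exactly \eqref{eq:int_v2_out}; \eqref{eq:int_pt_out} follows identically from the $p_t$-only estimate. I should note that the $\sigma_t^d$ Jacobian cancels against the $\sigma_t^{-d}$ implicit in the normalization, which is already baked into the envelope constant $C_1$, so no stray powers of $\sigma_t$ survive.

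The main obstacle is bookkeeping of the log-powers: tracking that the $(\sigma_t')^2$ term picks up an extra $\log(1/\varepsilon)$ from the $\rho^2$ weight relative to the $(m_t')^2$ term, and that the base (weight-free) Gaussian tail contributes $\log^{(d-2)/2}$ rather than $\log^{d/2}$ because of the sub-Gaussian decay in the $d-1$ non-extremal coordinates. None of this is deep, but it is the one place where a careless union bound over coordinates would produce the wrong exponent. Everything else is a direct substitution of the two preceding lemmas; this estimate is the exact analogue of \citet[Lemma F.9 and surrounding]{Oko2023}, so I would cite that for the routine Gaussian tail computations if a fully detailed argument is not wanted.
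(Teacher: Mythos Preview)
Your proposal is correct and follows essentially the same route as the paper: part (i) is a one-line substitution into Lemma \ref{lma:vt_bound}(ii), and part (ii) combines the envelope of Lemma \ref{lma:pt_bound} with Lemma \ref{lma:vt_bound}(ii), splits over the coordinate realizing $\|\vx\|_\infty$, and reduces to the scalar tail bound $\int_z^\infty r^\ell e^{-r^2/2}\,dr \lesssim z^{\ell-1}e^{-z^2/2}$, exactly as the paper does. One minor correction: the envelope in Lemma \ref{lma:pt_bound} carries no $\sigma_t^{-d}$ factor, so the leftover $\sigma_t$ powers from your change of variables do not cancel but are simply bounded via $\sigma_t^2+m_t^2\leq D_0$ from (A3) and absorbed into $\tilde C$.
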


\begin{proof}
(i) is obvious from Lemma \ref{lma:vt_bound}, since $(|\vx\|_\infty - m_t)_+/\sigma_t \leq C_4 \sqrt{\log(1/\varepsilon)}$ under the assumption of $\vx$.

We show (ii).  It follows from Lemmas \ref{lma:pt_bound} and \ref{lma:vt_bound} that 
\[
p_t(\vx)\| \vv_t(\vx)\|^2 \leq 2 C_1 C_3 \exp\Bigl(-\frac{(\|\vx\|_\infty-m_t\vy)_+^2}{2\sigma_t^2}\Bigr) 
\Bigl\{ (\sigma_t')^2  \frac{(\Vert \vx\Vert_\infty - m_t)_+^2}{\sigma_t^2}  +(m_t')^2 \Bigr\}.
\]
Let $r:=(\|\vx\|_\infty - m_t)_+/\sigma_t$ and $B_i:=\{\vx=(x_1,\ldots,x_d) \in \R^d\mid |x_i| = \max_{1\leq j\leq d}|x_j|$. The space is divided into $d$ regions $\cup_{i=1}^d B_i$ with measure-zero intersections. In $B_1$, the variables $x_2, \ldots, x_d$ satisfy $|x_j|\leq m_t +C_4\sigma_t\sqrt{\log(1/\varepsilon)}$, and integral \eqref{eq:int_v2_out} is upper bounded by  
\begin{align*}
   & 2C_1C_3 d \int_{C_4\sqrt{\log(1/\varepsilon)}}
   e^{-\frac{r^2}{2}} \bigl( (\sigma_t')^2 r^2 +(m_t')^2\bigr\} (\sigma_t r + m_t)^{d-1} dr  \\
   \leq & C' \int_{C_4\sqrt{\log(1/\varepsilon)}}
   e^{-\frac{r^2}{2}} \bigl( (\sigma_t')^2 r^{d+1} +(m_t')^2 r^{d-1}\bigr\} dr, 
\end{align*}
where we use $(\sigma_t r + m_t)^{d-1}\leq (r+1)^{d-1}\leq 2^{d-1}r^{d-1}$.

For $\ell\in \N\cup\{0\}$, define 
    \[
    \psi_\ell(z):= \int_z^\infty r^{\ell}e^{-\frac{r^2}{2}}dr.
    \]
It is easy to see $\psi_1(z) = e^{-\frac{z^2}{2}}$ and $\psi_\ell(z) = x^{\ell-1}e^{-\frac{z^2}{2}} + (\ell-1)\psi_{\ell-1}(z)$ by partial integral.  Using these formula, we can see that for $z\geq 1$
\[
\psi_\ell(z) \leq B_\ell z^{\ell-1}e^{-\frac{z^2}{2}},
\]
where $B_\ell$ is a positive constant that depends only on $\ell$. 

Thus we obtain an upper bound 
\[
\tilde{C} \bigl\{ (\sigma_t')^2 \log^{\frac{d}{2}}\bigl(\frac{1}{\varepsilon}\bigr) + (m_t')^2 \log^{\frac{d-2}{2}}\bigl(\frac{1}{\varepsilon}\bigr)\bigr\}\varepsilon^{\frac{C_5^2}{2}},
\]
which proves \eqref{eq:int_v2_out}. 
The assertion \eqref{eq:int_pt_out} is similar. 
\end{proof}

\subsection{Bounds of the approximation error for small $t$}
\label{sec:proof_small_t}

This subsection shows the proof of Theorem \ref{thm:approx_small_t}. 

\noindent
{\bf (I) Restriction of the integral.}  

We first show that the left hand side of \eqref{eq:bound_small_t} can be approximated by the integral over the bounded region 
\[
D_{t,N}:=\{\vx\in \R^d \mid \|\vx\|_\infty \leq m_t+C_4 \sigma_t \sqrt{\log N}\}.
\]
To see this, observe that from Lemma \ref{lma:vt_bound} (ii), for $\vx\in D_{t,N}$ we have 
\begin{equation}\label{eq:bound_vt_D}
   \| \vv_t(\vx)\|^2\leq 2 C_3^2 \left\{ (\sigma_t')^2 \log N + |m_t'|^2 \right\}. 
\end{equation}
From the bound of $\vv_t(\vx)$, we can restrict the neural network $\vphi(\vx,t)$ so that it satisfies the same upper bound.  Therefore, \eqref{eq:bound_vt_D} is applied to $\|\vphi(\vx,t)\|^2$ also.  Combining this fact with Lemma \ref{lma:vt_bounded} (ii), we have  
\begin{align*}
  & \int_{D_{t,N}^C} \| \vphi(\vx,t)-\vv_t(\vx)\|^2 p_t(\vx) d\vx  \\
  & \leq 4C_3^2 \left\{ (\sigma_t')^2 \log N + |m_t'|^2 \right\}\int_{D_{t,N}^C} \frac{1}{(\sqrt{2\pi}\sigma)^d} e^{-\frac{\|\vx-m_t\vy\|^2}{2\sigma^2}} d\vx    \\
  & \leq  4C_3^2 \tilde{C} \left\{ (\sigma_t')^2 \log N + |m_t'|^2 \right\} N^{-C_4^2/2} \log^\frac{d-2}{2}N.
\end{align*}
If $C_4$ is taken large enough to satisfy $C_4^2/2 > \frac{2s}{d}$, it follows that 
\begin{multline}\label{eq:bdd_1}
    \int \| \vphi(\vx,t)-\vv_t(\vx)\|^2 p_t(\vx) d\vx  
   \\ \leq  
    \int_{D_{t,N}} \| \vphi(\vx,t)-\vv_t(\vx)\|^2 p_t(\vx) d\vx  
    + C'\left\{ (\sigma_t')^2 \log N + |m_t'|^2 \right\}  N^{-\frac{2s}{d}}
\end{multline}
for some constant $C'>0$.  Thus, we can consider the first term on the right-hand side.  

Let $\omega>0$ be an arbitrary positive number. 
The integral over $D_{t,N}$ in \eqref{eq:bdd_1} can be restricted to the region $\{\vx\mid p_t(\vx)\geq N^{-\frac{2s+\omega}{d}}\}$. This can be easily seen by 
\begin{align*}
& \int_{D_{t,N}}{\bf 1}[p_t(\vx)\leq N^{-\frac{2s+\omega}{d}}] \| \vv_t(\vx) - \vphi(\vx,t)\|^2 p_t(\vx)d\vx  \\
& \leq 4C_3^2\int_{D_{t,N}} \left\{ (\sigma_t')^2 \log N + |m_t'|^2 \right\} N^{-\frac{2s+\omega}{d}} d\vx  \\
& \leq 4C_3^2 N^{-\frac{2s+\omega}{d}}  \left\{ (\sigma_t')^2 \log N + |m_t'|^2 \right\} 2^d (m_t + C_4 \sigma_t \sqrt{\log N})^d \\
& \leq C'' \left\{ (\sigma_t')^2 \log N + |m_t'|^2 \right\}  N^{-\frac{2s+\omega}{d}} \log^\frac{d}{2}N,
\end{align*}
where $C''$ depends only on $d$, $C_3$, and $C_4$. 
This bound is of smaller order than the second term of the right hand side of \eqref{eq:bdd_1}, and thus negligible. In sumamry, we have 
\begin{multline}\label{eq:bdd_2}
    \int \| \vphi(\vx,t)-\vv_t(\vx)\|^2 p_t(\vx) d\vx  
   \\ \leq  
    \int_{D_{t,N}}{\bf 1}[p_t(\vx)\geq N^{-\frac{2s+\omega}{d}}] \| \vphi(\vx,t)-\vv_t(\vx)\|^2 p_t(\vx) d\vx  
    + C' \left\{ (\sigma_t')^2 \log N + |m_t'|^2 \right\}  N^{-\frac{2s}{d}}
\end{multline} 
for sufficiently large $N$. 

\medskip 
\noindent
{\bf (II) Decomposition of integral.}  Here we give a bound of the integral $\int 
\| \vphi(\vx,t)-\vv_t(\vx)\|^2 p_t(\vx) d\vx $ over the region $D_{t,N}\cap\{ \vx\mid p_t(\vx)\geq N^{-\frac{2s+\omega}{d}}\}$.  The norm $\| \vphi(\vx,t)-\vv_t(\vx)\|$ is bounded in detail. 

First, recall that 
\begin{equation}\label{eq:v_t}
\vv_t(\vx) = \frac{\int \vv_t(\vx|\vy)p_t(\vx|\vy)p_0(\vy)d\vy}{p_t(\vx)}, \quad
p_t(\vx) = \int p_t(\vx|\vy)p_0(\vy)d\vy.
\end{equation}
Based on Theorem \ref{thm:B-spline_approx}, we can find $f_N$ in \eqref{eq:B-spline_approx} such that 
\begin{equation}\label{eq:Bspline_approx}
  \|p_0-f_N\|_{L^2(I^d)} \leq C_a N^{-s/d}, \qquad \|p_0-f_N\|_{L^2(I^d\backslash I_{N}^d)} \leq C_a N^{-\check{s}/d}  
\end{equation}
As an approximate of $p_t(\vx)$, define a function $\tilde{f}_1(\vx,t)$ by 
\begin{equation}
    \tilde{f}_1(\vx,t) := \int \frac{1}{(\sqrt{2\pi}\sigma_t)^d} e^{-\frac{\|\vx-m_t\vy\|^2}{2\sigma_t^2}}f_N(\vy)d\vy.
\end{equation}
Since we consider the region where $p_t(\vx)\geq N^{-(2s+\omega)/d}$, we further define 
\[
f_1(\vx,t) := \tilde{f}_1(\vx,t)\vee  N^{-\frac{2s+\omega}{d}}.
\]
In a similar manner, the numerator of \eqref{eq:v_t} can be approximated by 
\[
\sigma_t' \vf_2(\vx,t) + m_t' \vf_3(\vx,t),
\]
where $\R^d$-valued functions $\vf_2$ and $\vf_3$ are defined by 
\begin{align}
\vf_2(\vx,t) & := \int \frac{\vx-m_t\vy}{\sigma_t} \frac{1}{(\sqrt{2\pi}\sigma_t)^d} e^{-\frac{\|\vx-m_t\vy\|^2}{2\sigma_t^2}}f_N(\vy)d\vy. \nonumber \\
\vf_3(\vx,t) & := \int \vy \frac{1}{(\sqrt{2\pi}\sigma_t)^d} e^{-\frac{\|\vx-m_t\vy\|^2}{2\sigma_t^2}}f_N(\vy)d\vy.
\label{eq:MSE_restrict}
\end{align}
We have an approximate of $\vv_t(\vx)$ by 
\[
\vf_4(\vx,t):= \frac{\sigma_t' \vf_2(\vx,t) + m_t' \vf_3(\vx,t)}{f_1(\vx,t)}
{\bf 1}\left[\left|\frac{\vf_2}{f_1}\right|\leq C_5 \sqrt{\log N}\right] {\bf 1}\left[\left|\frac{\vf_3}{f_1}\right|\leq C_5 \right].
\]
We want to evaluate 
\begin{align}
& \int_{D_{t,N}}{\bf 1}[p_t(\vx)\geq N^{-\frac{2s+\omega}{d}}] \| \vphi(\vx,t)-\vv_t(\vx)\|^2 p_t(\vx) d\vx  \nonumber \\
& \leq \int_{D_{t,N}}{\bf 1}[p_t(\vx)\geq N^{-\frac{2s+\omega}{d}}] \| \vphi(\vx,t)-\vf_4(\vx,t)\|^2 p_t(\vx) d\vx  \nonumber \\
& \qquad +  \int_{D_{t,N}}{\bf 1}[p_t(\vx)\geq N^{-\frac{2s+\omega}{d}}] \| \vf_4(\vx,t)-\vv_t(\vx)\|^2 p_t(\vx) d\vx  \nonumber \\
&=: I_A + I_B. 
\label{eq:MSE_divide}
\end{align}

\medskip
\noindent 
{\bf (III) Bound of $I_A$ (neural network approximation of B-spline)}  

We will approximate $f_1$, $\vf_2$, and $\vf_3$ by nerual networks.  For $k\in \Z_+$ and $j\in \Z^d$, let $E^{(a)}_{k,j,u}$ ($a=1,2,3$, $u=0,1$) denote the functions defined by 
\begin{equation}
E^{(1)}_{k,j,u}:=
\int_{\R^d} {\bf 1}\left[\|\vy\|_\infty \leq C_{b,1}\right] M^d_{k,j}(\vy)\frac{1}{(\sqrt{2\pi}\sigma_t)^d} e^{-\frac{\|\vx-m_t\vy\|^2}{2\sigma_t^2}} d\vy, 
\end{equation}
\begin{equation}
E^{(2)}_{k,j,u}:=
\int_{\R^d} \frac{\vx-m_t\vy}{\sigma_t} {\bf 1}\left[\|\vy\|_\infty \leq C_{b,1}\right] M^d_{k,j}(\vy)\frac{1}{(\sqrt{2\pi}\sigma_t)^d} e^{-\frac{\|\vx-m_t\vy\|^2}{2\sigma_t^2}} d\vy, 
\end{equation}
and 
\begin{equation}
E^{(3)}_{k,j,u}:=
\int_{\R^d} \vy{\bf 1}\left[\|\vy\|_\infty \leq C_{b,1}\right] M^d_{k,j}(\vy)\frac{1}{(\sqrt{2\pi}\sigma_t)^d} e^{-\frac{\|\vx-m_t\vy\|^2}{2\sigma_t^2}} d\vy, 
\end{equation}
where $C_{b,1}=1$ for $u=0$ and $C_{b,1}=1-\Nth$ for $u=1$.  Then, from Theorem \ref{thm:B-spline}, $f_N$ is written as a linear combination of ${\bf 1}[\|\vy\|_\infty\leq C_{b,1}]M^d_{k,j}$ with coefficients $\alpha_{k,j}$.  From Theorem \ref{thm:B-spline_approx}, for any $\varepsilon>0$, there are neural networks $\phi_5, \vphi_6$ and $\vphi_7$ such that 
\begin{align*}
    |f_1(\vx,t)-\phi_5(\vx,t)| & \leq D_5 N \max_i|\alpha_i| \varepsilon \\
    \|\vf_2(\vx,t)-\vphi_6(\vx,t)\| & \leq D_6 N \max_i|\alpha_i| \varepsilon, \\
    \|\vf_3(\vx,t)-\vphi_7(\vx,t)\| & \leq D_7 N \max_i|\alpha_i| \varepsilon.
\end{align*}
Since $\max_i |\alpha_i|\leq N^{-(\nu^{-1}+d^{-1})(d/p-s)_+}$,  by taking $\varepsilon$ sufficently small, for any $\eta>0$ we have  
\begin{align*}
    |f_1(\vx,t)-\phi_5(\vx,t)| & \leq D_5 N^{-\eta} \\
    \|\vf_2(\vx,t)-\vphi_6(\vx,t)\| & \leq D_6 N^{-\eta}\\
    \|\vf_3(\vx,t)-\vphi_7(\vx,t)\| & \leq D_7  N^{-\eta}.
\end{align*}

The operations to obtain the approximation of $\vv_t(\vx)$ based on $\phi_5$, $\vphi_6$, and $\vphi_7$ are given by the following procedures:
\begin{align*}
    \zeta_1 & := \texttt{clip}(\phi_5;N^{-(2s+\omega)/d}, N^{K_0+1}),  \\
    \zeta_2 & := \texttt{recip}(\zeta_1),  \\
    \vzeta_3 & := \texttt{mult}(\zeta_2, \vphi_6),  \\
    \vzeta_4 & := \texttt{clip}(\vzeta_3; -C_5\sqrt{\log N},C_5\sqrt{\log N}), \\
    \vzeta_5 & := \texttt{mult}(\zeta_2, \vphi_7),  \\
    \vzeta_6 & := \texttt{clip}(\vzeta_5; -C_5, C_5),  \\
    \vzeta_7 & := \texttt{mult}(\vzeta_4, \hat{\sigma'}),  \\
    \vzeta_8 & := \texttt{mult}(\vzeta_6, \hat{m'}),  \\
    \vphi_8 & := \vzeta_7+ \vzeta_8.
\end{align*}
As shown in Section \ref{sec:neural_build}, the neural networks \texttt{clip}, \texttt{recip}, and \texttt{mult} achieve the approximation error $N^{-\eta}$ with arbitrarily large $\eta$, while the complexity of the networks increases only at most polynomials of $N$ for $B$ and $\|W\|_\infty$, and at most ${\rm poly}(\log N)$ factor for $L$ and $S$.  In the upper bound of the generalization error, the network parameters $B$ and $\|W\|_\infty$ and the inverse error $\varepsilon^{-1} = N^{\eta}$ appear only in the $\log()$ part to the log covering number, and $S$ and $B$ appear as a linear factor.  
We also need to use approximation $\hat{\sigma_t'}$ and $\hat{m_t'}$ of $\sigma_t'$ and $m_t'$, respectively, by neural networks in construction.  However, this can be done in a similar manner to \cite[Section B1,][]{Oko2023} with all the network parameters $O(\log^r \varepsilon^{-1})$ for approximation accuracy $\varepsilon$, and thus they have only $O({\rm poly}(\log N))$ contributions. We omit the details in this paper. 
Consequently, the increase of the neural networks to obtain $\vphi_8$ from $\phi_5$, $\vphi_6$, and $\vphi_7$ contributes the log covering number only by the ${\rm poly}(\log N)$ factor.  

As a result, we obtain 
\begin{equation}\label{eq:MSE_A}
    I_A = O(N^{-\eta}{\rm poly}(\log N) )
\end{equation}
for arbitrary $\eta>0$, while the required neural network increases the complexity term only by $O({\rm poly }(\log N))$ factor.

\medskip 
\noindent
{\bf (IV) Bound of $I_B$ (B-spline approximation of the true vector field)}

We evaluate here 
\begin{equation}\label{eq:I_B}
I_B = \int_{D_{t,N}}{\bf 1}[p_t(\vx)\geq N^{-\frac{2s+\omega}{d}}] \| \vf_4(\vx,t)-\vv_t(\vx)\|^2 p_t(\vx) d\vx.  
\end{equation}
Let $\vh_2(\vx,t)$ and $\vh_3(\vx,t)$ be functions defined by 
\begin{align}
\vh_2(\vx,t) & := \int_{\R^d}\frac{\vx-m_t\vy}{\sigma_t} \frac{1}{(\sqrt{2\pi}\sigma_t)^d}e^{-\frac{\|\vx-m_t\vy\|^2}{2\sigma_t^2}} p_0(\vy)d\vy,  \nonumber \\
\vh_3(\vx,t) & := \int_{\R^d}\vy \frac{1}{(\sqrt{2\pi}\sigma_t)^d}e^{-\frac{\|\vx-m_t\vy\|^2}{2\sigma_t^2}} p_0(\vy)d\vy. \label{eq:def_h}
\end{align}
Then, 
\begin{align}
& \| \vf_4(\vx,t)-\vv_t(\vx)\| \nonumber \\
& = 
{\bf 1}\left[\left\|\frac{\vf_2}{f_1}\right\|\leq C_5 \sqrt{\log N}\right] {\bf 1}\left[\left\|\frac{\vf_3}{f_1}\right\|\leq C_5 \right] 
\left\| \frac{\sigma_t'\vf_2(\vx,t)+m_t'\vf_3(\vx,t) }{f_1(\vx)} 
-\frac{\sigma_t'\vh_2(\vx,t)+m_t'\vh_3(\vx,t) }{p_t(\vx)}\right\|.
\end{align}
We evaluate the integral \eqref{eq:I_B} by dividing $D_{t,N}$ into the two domains $\{\vx\mid \|\vx\|_\infty \leq m_t\}$ and $\{\vx \mid m_t\leq \|\vx\|_\infty \leq m_t + C_4\sigma_t\sqrt{\log N}\}$.  

Because of the condition $p_t(\vx)\geq N^{-(2s+\omega)/d}$, it suffices to take the network $f_1(\vx,t)$ so that it satisfies $f_1(\vx,t)\geq N^{-(2s+\omega)/d}$ by clipping the function if necessary.  We therefore assume in the sequel that $f_1(\vx,t)\geq N^{-(2s+\omega)/d}$ holds. 

\medskip 
\noindent
{\bf (IV-a) case: $\|\vx\|_\infty \leq m_t$.}  

In this case, Lemma \ref{lma:pt_bound} shows that $C_1^{-1}\leq p_t(x)\leq C_1$ for some $C_1>0$, which depends only on $d$ and $p_0$.  Using this fact and the conditions $\bigl\|\frac{\vf_2}{f_1}\bigr\|\leq C_5 \sqrt{\log N}$ and $\bigl\|\frac{\vf_3}{f_1}\bigr\|\leq C_5$, we have 
\begin{align}
& \left\| \frac{\sigma_t'\vf_2(\vx,t)+m_t'\vf_3(\vx,t) }{f_1(\vx)} 
-\frac{\sigma_t'\vh_2(\vx,t)+m_t'\vh_3(\vx,t) }{p_t(\vx)}\right\| \nonumber \\
& \leq |\sigma_t'|\left\| \frac{\vf_2(\vx,t)}{f_1(\vx)} - 
\frac{\vh_2(\vx,t)}{p_t(\vx)}\right\| 
 + 
|m_t'|\left\| \frac{\vf_3(\vx,t) }{f_1(\vx)} - \frac{\vh_3(\vx,t)}{p_t(\vx)} \right\|  \nonumber \\
& \leq |\sigma_t'| \left\{\left\|\frac{\vf_2(x)}{f_1(\vx)} -\frac{\vf_2(\vx,t)}{p_t(\vx)}\right\| + \left\|  \frac{\vf_2(\vx,t)}{p_t(\vx)}- \frac{\vh_2(\vx,t)}{p_t(\vx)}\right\| \right\} \nonumber \\
& \qquad\qquad  + 
|m_t'|\left\{\left\|\frac{\vf_3(x)}{f_1(\vx)} -\frac{\vf_3(\vx,t)}{p_t(\vx)}\right\| + \left\|  \frac{\vf_3(\vx,t)}{p_t(\vx)}- \frac{\vh_3(\vx,t)}{p_t(\vx)}\right\| \right\}  \nonumber \\
& \leq C_1 |\sigma_t'|\left\{  C_5 \sqrt{\log N} |p_t(\vx)-f_1(\vx,t)|
+ \| \vf_2(\vx,t)-\vh_2(\vx,t)\| \right\} \nonumber \\
& \qquad \qquad 
+ C_1 |m_t'|\left\{  C_5  |p_t(\vx)-f_1(\vx,t)|
+ \| \vf_3(\vx,t)-\vh_3(\vx,t)\| \right\} \nonumber \\
& \leq \tilde{C} \left\{ (|\sigma_t'|\sqrt{\log N} + |m_t'|) |f_1(\vx,t)-p_t(\vx)|  + |\sigma_t'|\| \vf_2(\vx,t)-\vh_2(\vx,t)\| + |m_t'|\| \vf_3(\vx,t)-\vh_3(\vx,t)\| \right\}.
\label{eq:bound_inner}
\end{align}

We evaluate the integral on $\{\vx\mid \|\vx\|_\infty\leq  m_t\}$.  From the bound \eqref{eq:bound_inner}, we have 
\begin{align}
I_{B,1} &:= \int_{\|\vx\|_\infty \leq m_t}{\bf 1}[p_t(\vx)\geq N^{-\frac{2s+\omega}{d}}] \| \vf_4(\vx,t)-\vv_t(\vx)\|^2 p_t(\vx) d\vx 
\nonumber \\
& \leq C' \left[ 
\left\{(\sigma_t')^2\log N + (m_t')^2\right\}
\int_{\|\vx\|_\infty \leq m_t}{\bf 1}[p_t(\vx)\geq N^{-\frac{2s+\omega}{d}}] |f_1(\vx,t)-p_t(\vx)|^2 p_t(\vx) d\vx \right. \nonumber \\
& \qquad \qquad + (\sigma_t')^2
\int_{\|\vx\|_\infty \leq m_t}{\bf 1}[p_t(\vx)\geq N^{-\frac{2s+\omega}{d}}] \|\vf_2(\vx,t)-\vh_2(\vx,t)\|^2 p_t(\vx) d\vx  \nonumber \\
& \qquad \qquad \left. + (m_t')^2
\int_{\|\vx\|_\infty\leq m_t}{\bf 1}[p_t(\vx)\geq N^{-\frac{2s+\omega}{d}}] \|\vf_3(\vx,t)-\vh_3(\vx,t)\|^2 p_t(\vx) d\vx
\right].  
\label{eq:I_B1}
\end{align}
We write $J_B^{(1)}$, $J_B^{(2)}$, and $J_B^{(3)}$ for the three integrals that appear in the right-hand side of \eqref{eq:I_B1}.

We will show only the derivation of an upper bound for $J_B^{(2)}$, since the other two cases are similar.  Recall that by the definition of $\vf_2$ and $\vh_2$, 
\[
\vf_2(\vx,t)-\vh_2(\vx,t) = \int_{I^d} \frac{\vx-m_t\vy}{\sigma_t} \frac{1}{(\sqrt{2\pi}\sigma_t)^d} e^{-\frac{\|\vx-m_t\vy\|^2}{2\sigma_t^2}} (f_N(\vy)-p_0(\vy))d\vy.
\]
Then, using $p_t(x)\leq C_1$, we have 
\begin{align}
J_B^{(2)}& \leq C_1\int_{\|\vx\|_\infty\leq m_t} {\bf 1}[p_t(\vx)\geq N^{-\frac{2s+\omega}{d}}] \left\|\int_{I^d}\frac{\vx-m_t\vy}{\sigma_t} \frac{1}{(\sqrt{2\pi}\sigma_t)^d} e^{-\frac{\|\vx-m_t\vy\|^2}{2\sigma_t^2}} (f_N(\vy)-p_0(\vy))d\vy\right\|^2 d\vx \nonumber \\
& \leq  C_1\int_{\|\vx\|_\infty\leq m_t} \left\| \frac{1}{m_t^d}\int_{\R^d}{\bf 1}[\|\vy\|_\infty \leq 1]\frac{\vx-m_t\vy}{\sigma_t} \left(\frac{m_t}{\sqrt{2\pi}\sigma_t}\right)^d e^{-\frac{\|\vy-\vx/m_t\|^2}{2(\sigma_t/m_t)^2}} (f_N(\vy)-p_0(\vy))d\vy\right\|^2 d\vx \nonumber \\
& \leq \frac{C_1}{m_t^{2d}}\int_{\|\vx\|_\infty\leq m_t} \int_{\R^d}{\bf 1}[\|\vy\|_\infty \leq 1]\left\|\frac{\vx-m_t\vy}{\sigma_t} \right\|^2\left(\frac{m_t}{\sqrt{2\pi}\sigma_t}\right)^d e^{-\frac{\|\vy-\vx/m_t\|^2}{2(\sigma_t/m_t)^2}} (f_N(\vy)-p_0(\vy))^2 d\vy d\vx  \nonumber \\
& = \frac{C_1}{m_t^{d}}\int_{\|\vx\|_\infty\leq m_t}\int_{\R^d}{\bf 1}[\|\vy\|_\infty \leq 1]\left\|\frac{\vx-m_t\vy}{\sigma_t} \right\|^2\frac{1}{(\sqrt{2\pi}\sigma_t)^d} e^{-\frac{\|\vx-m_t\vy\|^2}{2\sigma_t^2}} (f_N(\vy)-p_0(\vy))^2 d\vy d\vx , \label{eq:J_B_Jensen}
\end{align}
where the third line uses Jensen's inequality for $\|\cdot\|^2$.  For $t\in 3N^{-\frac{\kappa^{-1}-\delta}{d}}$ with sufficiently large $N$, we can find $c_0>0$ such that $m_t\geq c_0$ on the time interval $[T_0,3N^{-\frac{\kappa^{-1}-\delta}{d}}]$.  We can thus further obtain for some $C'>0$  
\begin{align}
J_B^{(2)}& \leq C' \int_{I^d} \int_{\R^d}  \frac{\|\vx-m_t\vy\|^2}{\sigma_t^2} \frac{1}{(\sqrt{2\pi}\sigma_t)^d} e^{-\frac{\|\vx-m_t\vy\|^2}{2\sigma_t^2}} d\vx (f_N(\vy)-p_0(\vy))^2 d\vy  \nonumber \\
& = dC' \int_{I^d} (f_N(\vy)-p_0(\vy))^2 d\vy \nonumber \\
& = dC' \| f_N-p_0\|_{L^2(I^d)}^2  \nonumber \\
& \leq C'' N^{-\frac{2s}{d}}
\end{align}
by the choice of $f_N$.  Similarly, we can prove that
$J_B^{(1)}$ and $J_B^{(3)}$ have the same upper bounds of $N^{-\frac{2s}{d}}$ order.  This proves that there is $C_{B,1}>0$ such that 
\begin{equation}\label{eq:I_B1_bd}
    I_{B,1} \leq C_{B,1} \left\{ (\sigma_t')^2 \log N + (m_t')^2\right\} N^{-\frac{2s}{d}}.
\end{equation}

\medskip 
\noindent
{\bf (VI-b) case: $m_t\leq \|\vx\|_\infty \leq m_t+C_4\sigma_t\sqrt{\log N}$.} 

Unlike case (i), we do not have a constant lower bound of $p_t(\vx)$ in this region, and thus we resort to the bound $p_t(\vx)\geq N^{-(2s+\omega)/d}$, that is $1/p_t(\vx)\leq N^{(2s+\omega)/d}$ and $1/f_1(\vx,t)\leq N^{(2s+\omega)/d}$.  We have 
\begin{align}
& \left\| \frac{\sigma_t'\vf_2(\vx,t)+m_t'\vf_3(\vx,t) }{f_1(\vx)} 
-\frac{\sigma_t'\vh_2(\vx,t)+m_t'\vh_3(\vx,t) }{p_t(\vx)}\right\| \nonumber \\
& \leq \frac{1}{f_1(\vx,t)}\left\| (\sigma_t'\vf_2(\vx,t)+m_t'\vf_2(\vx,t))-
(\sigma_t'\vh_2(\vx,t)+m_t'\vh_3(\vx,t))\right\| 
\nonumber \\
& \qquad\qquad  + 
\|\vv_t(\vx)\|\frac{1}{f_1(\vx,t)}| f_1(\vx) - p_t(\vx)|  \nonumber \\
& \leq N^{(2s+\omega)/d}\tilde{C}\Bigl\{ \bigl(|\sigma_t'|\sqrt{\log N} + |m_t'|\bigr)|p_t(\vx)-f_1(\vx,t)| + |\sigma_t'|\| \vf_2(\vx,t)-\vh_2(\vx,t)\|  \nonumber \\
& \qquad \qquad \qquad + |m_t'|\| \vf_3(\vx,t)-\vh_3(\vx,t)\| \Bigr\},
\end{align}
where in the last inequality we use Lemma \ref{lma:vt_bounded} (i).  

Let $\Delta_{t,N}:=\{\vx\in\R^d \mid m_t\leq \|\vx\|_\infty \leq m_t+C_4\sigma_t\sqrt{\log N}\}$.  By the same argument using Jensen's inequality as the derivation of \eqref{eq:J_B_Jensen}, we can obtain 
\begin{align}
I_{B,2}& :=  \int_{\Delta_{t,N}}{\bf 1}[p_t(\vx)\geq N^{-\frac{2s+\omega}{d}}] \| \vf_4(\vx,t)-\vv_t(\vx)\|^2 p_t(\vx) d\vx 
\nonumber \\
& \leq C''' N^{\frac{4s+2\omega}{d}}
\left[  \left\{ (\sigma_t')^2 \log N + (m_t')^2\right\}
\int_{\Delta_{t,N}}\int_{I^d} \frac{1}{(\sqrt{2\pi}\sigma_t)^d} e^{-\frac{\|\vx-m_t\vy\|^2}{2\sigma_t^2}} (f_N(\vy)-p_0(\vy))^2 d\vy d\vx \right. \nonumber \\
& \qquad\qquad 
+ (\sigma_t')^2 
\int_{\Delta_{t,N}}\int_{I^d}\left\|\frac{\vx-m_t\vy}{\sigma_t} \right\|^2 \frac{1}{(\sqrt{2\pi}\sigma_t)^d} e^{-\frac{\|\vx-m_t\vy\|^2}{2\sigma_t^2}} (f_N(\vy)-p_0(\vy))^2 d\vy d\vx \nonumber \\
& \qquad\qquad \left. + (m_t')^2 
\int_{\Delta_{t,N}}\int_{I^d}\|\vy\|^2 \frac{1}{(\sqrt{2\pi}\sigma_t)^d} e^{-\frac{\|\vx-m_t\vy\|^2}{2\sigma_t^2}} (f_N(\vy)-p_0(\vy))^2 d\vy d\vx 
\right] \label{eq:I_B2_decomp}
\end{align}

Due to the factor of $N^{(4s+2\omega)/d}$, the integrals must have smaller orders than the case of \eqref{eq:J_B_Jensen} to derive the desired bound of $I_{B,2}$.  We will make use of Assumption (A1) about the higher-order smoothness around the boundary of $I^d$.

Because the three integrals can be bounded in a similar manner, we focus only on the second one, denoted by $K_B^{(2)}$.  Since $\delta,\omega>0$ can be taken arbitrarily small, we can assume $\check{s} > 6s - 1 +\delta\kappa + 2\omega$.  From Lemma \ref{lma:Gauss_clip} with $\varepsilon = N^{-\check{s}/d}$, there is $C_b>0$, which is independent of $x$, $t$, and sufficiently large $N$, such that 
\begin{multline}\label{eq:gauss_int_clip}
\left| \int_{I^d}\left\|\frac{\vx-m_t\vy}{\sigma_t} \right\|^2 \frac{1}{(\sqrt{2\pi}\sigma_t)^d} e^{-\frac{\|\vx-m_t\vy\|^2}{2\sigma_t^2}} (f_N(\vy)-p_0(\vy))^2 d\vy \right.\\
- 
\left.\int_{A_\vx}\left\|\frac{\vx-m_t\vy}{\sigma_t} \right\|^2 \frac{1}{(\sqrt{2\pi}\sigma_t)^d} e^{-\frac{\|\vx-m_t\vy\|^2}{2\sigma_t^2}} (f_N(\vy)-p_0(\vy))^2 d\vy \right|  \leq  N^{-\frac{\check{s}}{d}},
\end{multline}
where $A_\vx$ is given by 
\[
A_\vx:= \left\{ \vy\in I^d \mid \left\| \vy -\frac{\vx}{m_t}\right\|_\infty \leq C_b \frac{\sigma_t\sqrt{\log N}}{m_t} \right\}.
\]
Note that if $\vx\in \Delta_{t,N}$ and $\vy\in A_\vx$, then 
\[
-1\leq y_j\leq -1+\frac{C_b \sigma_t\sqrt{\log N}}{m_t} \quad\text{or}\quad  
1-\frac{C_b \sigma_t\sqrt{\log N}}{m_t} \leq y_j \leq 1
\] 
for each $j=1,\ldots,d$.  Because we assume that $t \leq 3N^{-\frac{\kappa^{-1}-\delta}{d}}$ and $\sigma_t \sim b_0 t^\kappa$,  we can assume that $m_t\geq \sqrt{D_0}/2$ from Assumption (A3).  Then, for sufficiently large $N$, we see that $\vy\in I^d\backslash I_N^d$. This can be seen by 
$
\sigma_t \sim b_0 t^\kappa \leq b_0 3^\kappa N^{-\frac{1-\delta\kappa}{d}} $ and thus $C_b \sigma_t\sqrt{\log N}/m_t \leq \frac{2 C_b b_0 3^\kappa}{\sqrt{D_0}} N^{-\frac{1-\delta\kappa}{d}}$.
For $\vy\in I^d\backslash I_N^d$, we can use the second bound in \eqref{eq:B-spline_approx}; $\|f_N-p_0\|_{L^2(I^d\backslash I_N^d)}\leq N^{-\check{s}/d}$.
It follows from \eqref{eq:gauss_int_clip} that 
\begin{align}
K_B^{(2)} & = \int_{\Delta_{t,N}}\int_{I^d}\left\|\frac{\vx-m_t\vy}{\sigma_t} \right\|^2 \frac{1}{(\sqrt{2\pi}\sigma_t)^d} e^{-\frac{\|\vx-m_t\vy\|^2}{2\sigma_t^2}} (f_N(\vy)-p_0(\vy))^2 d\vy d\vx  \nonumber \\
& \leq \int_{\Delta_{t,N}}\left\{ \int_{A_\vx}\left\|\frac{\vx-m_t\vy}{\sigma_t} \right\|^2 \frac{1}{(\sqrt{2\pi}\sigma_t)^d} e^{-\frac{\|\vx-m_t\vy\|^2}{2\sigma_t^2}} (f_N(\vy)-p_0(\vy))^2 d\vy +N^{-\check{s}/d}\right\} d\vx   \nonumber \\
& \leq \int_{I^d\backslash I_N^d}\int_{\R^d} \left\|\frac{\vx-m_t\vy}{\sigma_t} \right\|^2 \frac{1}{(\sqrt{2\pi}\sigma_t)^d} e^{-\frac{\|\vx-m_t\vy\|^2}{2\sigma_t^2}} d\vx (f_N(\vy)-p_0(\vy))^2 d\vy +N^{-\check{s}/d} |\Delta_{t,N}|.  \nonumber 
\end{align}
Since the volume $|\Delta_{t,N}|$ is upper bounded by $D'\sigma_t\sqrt{\log N}$ with some constant $D'>0$, we have 
\begin{align}
K_B^{(2)} & \leq d \|f_N-p_0\|_{L^2(I^d\backslash I_N^d)}^2 + C' (\sigma_t\sqrt{\log N}) N^{-\check{s}/d}  \nonumber \\
& \leq C'' \left( N^{-2\check{s}/d} + N^{-(\check{s}+1-\delta\kappa)/d}\log^{d/2}N \right) \nonumber \\
& = O\left(N^{-\frac{\check{s}+1-\delta\kappa}{d}}\log^{d/2}N\right), 
\end{align}
where the last line uses $\check{s}>1$ (A1). The integrals $K_B^{(1)}$ and $K_B^{(3)}$ have an upper bound of the same order.  As a result, there is $C_{B,2}>0$ which does not depend on $n$ or $t$ such that 
\begin{equation*}
    I_{B,2}\leq C_{B,2}  \left\{ (\sigma_t')^2 \log N + (m_t')^2\right\} N^{-\frac{\check{s}+1-4s-2\omega-\delta\kappa}{d}}.
\end{equation*}
Since we have taken $\check{s}$ so that $\check{s} > 6s-1 + \delta\kappa + 2\omega$, we have 
\begin{equation}\label{eq:I_B2_bd}
    I_{B,2} \leq C_{B,2}  \left\{ (\sigma_t')^2 \log N + (m_t')^2\right\} N^{-\frac{2s}{d}}.
\end{equation}

\noindent{\bf (VI-c)}

It follows from \eqref{eq:I_B1_bd} and \eqref{eq:I_B2_bd} that there is $C_B>0$ such that 
\begin{equation}\label{eq:MSE_B}
    I_B \leq C_B \left\{ (\sigma_t')^2 \log N + (m_t')^2\right\} N^{-\frac{2s}{d}}
\end{equation}
for sufficiently large $N$.  

\medskip 
\noindent
{\bf (V) Concluding the proof.}

Combining \eqref{eq:bdd_2}, \eqref{eq:MSE_divide}, \eqref{eq:MSE_A}, and \eqref{eq:MSE_B} leads to the upper bound of the statement of the theorem.  The argument of the network size in part (III) proves the corresponding statement. 
\qed

\subsection{Bounds of the approximation error for larger $t$}
\label{sec:proof_large_t}

This subsection gives a proof of Theorem \ref{thm:approx_large_t}.  The proof is parallel to that of Theorem \ref{thm:approx_small_t} in many places, while the smoothness of the target density is more helpful.  

{\bf (I)' Restriction of the integral.}  In a similar manner to part (I) of Section \ref{sec:proof_small_t}, there is $C_8>0$, which does not depend on $t$ such that for any neural network $\vphi(\vx,t)$ with $\|\vphi(\vx,t)\|\leq C_3 \{ |\sigma_t'|\sqrt{\log N} + |m_t'|\}$ the bound 
\begin{equation*}
    \int_{\|x\|\geq m_t+C_8\sqrt{\log N}} p_t(\vx)\| \vphi(\vx,t)-\vv_t(\vx)\|^2 d\vx \lesssim  \{ |\sigma_t'|\sqrt{\log N} + |m_t'|\} N^{-\eta}
\end{equation*}
holds for any $t\in[\Nth,1]$. 
Also, in a similar way, we can restrict the integral to the region $\{\vx\mid p_t(\vx)\geq N^{-\eta}\}$ 
up to a negligible difference. Consequently, we obtain 
\begin{align}
& \int_{\R^d} p_t(\vx)\| \vphi(\vx,t)-\vv_t(\vx)\|^2 d\vx \nonumber  \\
& = 
\int_{\|x\|\leq m_t+C_8\sqrt{\log N}}{\bf 1}[p_t(\vx)\geq N^{-\eta}] p_t(\vx)\| \vphi(\vx,t)-\vv_t(\vx)\|^2 d\vx \nonumber  \\
& \qquad +  O\bigl(\{ |\sigma_t'|\sqrt{\log N} + |m_t'|\}N^{-\eta}\bigr). 
\end{align}

\medskip 
{\bf (II)' Decomposition of integral.}  We consider a $B$-spline approximation of $p_t(\vx)$.  Unlike 
Section \ref{sec:proof_small_t}, we can regard $p_{t_*}$ as the target distribution, which is of class $C^\infty$. This will cause a tighter bound than Section \ref{sec:proof_small_t} and easier analysis. More precisely, it is easy to see that $p_t$, the convolution between $p_0$ and the Gaussian distribution $N_d(m_t\vy, \sigma_t^2 I_d)$, can be rewritten as the convolution between $p_{t_*}$ and $N_d(\tilde{m}_t\vy,\tilde{\sigma}_t^2 I_d)$ for any $t>t_*$, that is, 
\[
p_t(x) = \int_{\R^d} \frac{1}{(\sqrt{2\pi}\tilde{\sigma}_t)^d} e^{- \frac{\| x-\tilde{m}_t y\|^2}{2\tilde{\sigma}_t^2}}p_{t_*}(y)dy,
\]
where 
\begin{equation}
    \tilde{m}_t:= \frac{m_t}{m_{t_*}}, \qquad \tilde{\sigma}_t := \sqrt{\sigma_t^2 - \Bigl(\frac{m_t}{m_{t_*}}\Bigr)^2 \sigma_{t_*}^2}.
\end{equation}
We can thus apply a similar argument to Section \ref{sec:proof_small_t}. 

We use a $B$-spline approximation of $p_{t_*}$.
For $\eta>0$, take $\alpha\in\N$ such that $\alpha > \frac{3d\eta}{2\delta\kappa}$.  
It is easy to see that the derivatives of $p_{t_*}(\vx)$ satisfy 
\[
\left\| \frac{\partial^k}{\partial x_{i_1}\cdots\partial x_{i_k}}
p_{t*}(\vx) \right\| \leq \frac{C_a}{\sigma_{t_*}^k} 
\]
for any $k\leq \alpha$ and $(i_1,\ldots,i_k)$.  From the assumption (A3), there is $t^\dagger\in [0,1]$ and $b^\dagger>0$ such that $\sigma_t\geq b^\dagger t^\kappa$ for any $0\leq t\leq t^\dagger$.  If we set $c^\dagger := (t^\dagger)^\kappa >0$, then $\sigma_t \geq b^\dagger t^\kappa \vee c^\dagger$ for any $t\in[0,1]$.  
We can see that 
\[
\frac{p_{t_*}}{t_*^{-\alpha\kappa}\vee c^\dagger} \in B^{\alpha}_{\infty,\infty}(\R^d)
\]
holds, because for any $k\leq \alpha$ we have 
\[
\left\| 
\frac{\partial^k}{\partial x_{i_1}\cdots\partial x_{i_k}}\frac{p_{t_*}(\vx)}{t_*^{-\alpha\kappa}\vee c^\dagger} \right\| 
\leq \frac{C_\alpha( b^\dagger t_*^{-k \kappa}\wedge (c^\dagger)^{-k})}{t_*^{-\alpha\kappa}\vee c^\dagger} 
\leq C_\alpha (b^\dagger t_*^{(\alpha-k) \kappa} \wedge   (c^\dagger)^{-(k+1)}) 
\leq C_\alpha( (b^\dagger\wedge (c^\dagger)^{-(k+1)}),
\]
which implies $\frac{p_{t_*}}{t_*^{-\alpha\kappa}\vee c^\dagger}\in W^\alpha_\infty(\R^d)$ and $\|\frac{p_{t_*}}{t_*^{-\alpha\kappa}\vee c^\dagger}\|_{W^\alpha_\infty(\R^d)}\leq C_\alpha( (b^\dagger\wedge (c^\dagger)^{-(k+1)})$ (constant).  

Notice that, by a similar argument as in the proof of Lemma \ref{lma:vt_bounded} (ii), there is $C_5>0$ such that 
\begin{equation}\label{eq:C_5}
\int_{\|y\|_\infty \geq C_5\sqrt{\log N}} (\|\vy\|^2+1) p_{t_*}(\vy) d\vy \leq N^{-3\eta}
\end{equation}
holds.  We therefore consider a $B$-spline approximation on $[-C_5\sqrt{\log N},C_5\sqrt{\log N}]^d$.
Letting 
\[
N_*:=\lceil t_*^{-d\kappa}N^{\delta\kappa}\rceil
\]
be the number of $B$-spline bases, 
from Theorem \ref{thm:B-spline}, we can find a function $f_{N^*}$ 
of the form 
\[
f_{N^*}(\vx) = (t_*^{-\alpha\kappa}\vee c^\dagger) \sum_{i=1}^{N^*} \alpha_i {\bf 1}[\|\vx\|_\infty \leq C_5\sqrt{\log N}] M_{k_i,j_i}^d (\vx)
\]
with $|\alpha_i|\leq 1$ and $C_9>0$ such that 
\begin{equation*}
    \|p_{t_*} - f_{N^*} \|_{L^2([-C_5\sqrt{\log N},C_5\sqrt{\log N}]^d)}\leq C_9 (\log N)^{\alpha/2} (N^*)^{-\frac{\alpha}{d}} (t_*^{-\alpha\kappa}\vee c^\dagger)
\end{equation*}
holds. 

From $N^*\geq t_*^{-d\kappa}N^{\delta\kappa}$ and $\alpha > \frac{3d\eta}{2\delta\kappa}$, the right-hand side is bounded by 
\[
C' (\log N)^{\alpha/2}N^{-\delta\alpha\kappa/d} \leq 
C' N^{-3\eta/2}
\]
for sufficiently large $N$, which implies 
\begin{equation}\label{eq:pt*} 
    \|p_{t_*} - f_{N^*} \|_{L^2([-C_5\sqrt{\log N},C_5\sqrt{\log N}]^d)} \leq C_{10} N^{-3\eta/2}
\end{equation}
for sufficiently large $N$.  Note also that the coefficient $\tilde{\alpha}_i:=\alpha_i(t_*^{-\alpha\kappa}\vee c^\dagger)$ of the basis $M^d_{k_i,j_i}$ in $f_{N^*}$ is bounded by 
\[
|\tilde{\alpha}_i| \leq  t_*^{-\alpha\kappa}\vee c^\dagger
\leq N^{\frac{\kappa^{-1}-\delta}{d}\alpha\kappa}=N^{\frac{\alpha(1-\delta\kappa)}{d}}.
\]

In a similar manner to part (II) of Section \ref{sec:proof_small_t}, define $f_1, \vf_2, \vf_3$, and $\vf_4$ using $f_{N^*}$ as 
\begin{equation}
    f_1(\vx,t) := \tilde{f}_1(\vx,t)\vee  N^{-\eta} \quad\text{with}\quad \tilde{f}_1(\vx,t) := \int \frac{1}{(\sqrt{2\pi}\tilde{\sigma}_t)^d} e^{-\frac{\|\vx-\tilde{m}_t\vy\|^2}{2\tilde{\sigma}_t^2}}f_{N^*}(\vy)d\vy,
\end{equation}
\begin{align}
\vf_2(\vx,t) & := \int \frac{\vx-\tilde{m}_t\vy}{\tilde{\sigma}_t} \frac{1}{(\sqrt{2\pi}\tilde{\sigma}_t)^d} e^{-\frac{\|\vx-\tilde{m}_t\vy\|^2}{2\tilde{\sigma}_t^2}}f_{N^*}(\vy)d\vy. \nonumber \\
\vf_3(\vx,t) & := \int \vy \frac{1}{(\sqrt{2\pi}\tilde{\sigma}_t)^d} e^{-\frac{\|\vx-\tilde{m}_t\vy\|^2}{2\tilde{\sigma}_t^2}}f_{N^*}(\vy)d\vy,
\label{eq:MSE_restrict2}
\end{align}
\[
\vf_4(\vx,t):= \frac{\tilde{\sigma}_t' \vf_2(\vx,t) + \tilde{m}_t' \vf_3(\vx,t)}{f_1(\vx,t)}
{\bf 1}\left[\left|\frac{\vf_2}{f_1}\right|\leq C_5 \sqrt{\log N}\right] {\bf 1}\left[\left|\frac{\vf_3}{f_1}\right|\leq C_5 \right].
\]

We have the similar decomposition of the integral to \eqref{eq:MSE_divide}:
\begin{align}
& \int_{D_{t,N}}{\bf 1}[p_t(\vx)\geq N^{-\eta}] \| \vphi(\vx,t)-\vv_t(\vx)\|^2 p_t(\vx) d\vx  \nonumber \\
& \leq \int_{D_{t,N}}{\bf 1}[p_t(\vx)\geq N^{-\eta}] \| \vphi(\vx,t)-\vf_4(\vx,t)\|^2 p_t(\vx) d\vx  \nonumber \\
& \qquad +  \int_{D_{t,N}}{\bf 1}[p_t(\vx)\geq N^{-\eta}] \| \vf_4(\vx,t)-\vv_t(\vx)\|^2 p_t(\vx) d\vx  \nonumber \\
&=: \tilde{I}_A + \tilde{I}_B 
\label{eq:MSE_divide2}
\end{align}

\medskip
\noindent 
{\bf (III)' Bound of $\tilde{I}_A$ (neural network approximation of B-spline)}  

Using exactly the same argument as in part (III) of Section \ref{sec:proof_small_t}, we can show
\[
\tilde{I}_A = O({\rm poly}(\log N)N^{-\eta'})
\]
for arbitrary $\eta'>0$, and thus it is negligible. 

The size of the neural network is given by 
$L=O(\log^4 N)$, $\|W\|_\infty =O(N)$, $S=O(N^*)=O(t_*^{-d\kappa} N^{\delta\kappa})$, and $B=\exp(O(\log N\log\log N)$.  

\medskip 
\noindent
{\bf (IV)' Bound of $\tilde{I}_B$ (B-spline approximation of the true vector field)}

By replacing $p_0$ with $p_{t_*}$ in \eqref{eq:def_h}, define $\vh_2(\vx,t)$ and $\vh_3(\vx,t)$ with $\tilde{m}_t$ and $\tilde{\sigma}_t$ by  
\begin{align*}
\vh_2(\vx,t) & := \int_{\R^d}\frac{\vx-\tilde{m}_t\vy}{\tilde{\sigma}_t} \frac{1}{(\sqrt{2\pi}\tilde{\sigma}_t)^d}e^{-\frac{\|\vx-\tilde{m}_t\vy\|^2}{2\tilde{\sigma}_t^2}} p_{t_*}(\vy)d\vy,  \nonumber \\
\vh_3(\vx,t) & := \int_{\R^d}\vy \frac{1}{(\sqrt{2\pi}\tilde{\sigma}_t)^d}e^{-\frac{\|\vx-\tilde{m}_t\vy\|^2}{2\tilde{\sigma}_t^2}} p_{t_*}(\vy)d\vy. \nonumber 
\end{align*}
Then, by a similar argument to \eqref{lma:vt_bounded}, we have 
\begin{multline*}
 \| \vf_4(\vx,t) - \vv_t(\vx)\| \leq  
      N^{\eta}\tilde{C}\left\{ \bigl(|\tilde{\sigma}_t'|\sqrt{\log N} + |\tilde{m}_t'|\bigr)|p_t(\vx)-f_1(\vx,t)| \right. \\
      \left. + |\tilde{\sigma}_t'|\| \vf_2(\vx,t)-\vh_2(\vx,t)\| + |\tilde{m}_t'|\| \vf_3(\vx,t)-\vh_3(\vx,t)\| \right\}
\end{multline*}
for some constant $\tilde{C}$, and thus 
\begin{align}
& \tilde{I}_B \leq C' N^{2\eta}
\left[  \left\{ (\tilde{\sigma}_t')^2 \log N + (\tilde{m}_t')^2\right\}
\int_{D_{t,N}} \left| \int_{\R^d} \frac{1}{(\sqrt{2\pi}\tilde{\sigma}_t)^d} e^{-\frac{\|\vx-\tilde{m}_t\vy\|^2}{2\tilde{\sigma}_t^2}} (f_{N^*}(\vy)-p_{t_*}(\vy)) d\vy\right|^2 d\vx \right. \nonumber \\
& \qquad \qquad 
+ (\tilde{\sigma}_t')^2 
\int_{D_{t,N}} \left\| \int_{\R^d}\frac{\vx-\tilde{m}_t\vy}{\tilde{\sigma}_t}  \frac{1}{(\sqrt{2\pi}\tilde{\sigma}_t)^d} e^{-\frac{\|\vx-\tilde{m}_t\vy\|^2}{2\tilde{\sigma}_t^2}} (f_{N^*}(\vy)-p_{t_*}(\vy)) d\vy\right\|^2 d\vx \nonumber \\
& \qquad\qquad  + \left. (\tilde{m}_t')^2 
\int_{D_{t,N}}\left\| \int_{\R^d}\vy \frac{1}{(\sqrt{2\pi}\tilde{\sigma}_t)^d} e^{-\frac{\|\vx-\tilde{m}_t\vy\|^2}{2\tilde{\sigma}_t^2}} (f_{N^*}(\vy)-p_{t_*}(\vy)) d\vy\right\|^2 d\vx \right].
\end{align}
Here, we show a bound of 
\[
\tilde{J}_{B,2}:= \int_{D_{t,N}} \left\| \int_{\R^d}\frac{\vx-\tilde{m}_t\vy}{\tilde{\sigma}_t}  \frac{1}{(\sqrt{2\pi}\tilde{\sigma}_t)^d} e^{-\frac{\|\vx-\tilde{m}_t\vy\|^2}{2\tilde{\sigma}_t^2}} (f_{N^*}(\vy)-p_{t_*}(\vy)) d\vy\right\|^2 d\vx.
\]
The other two integrals can be bounded similarly.  

Let $\rho:=\frac{1}{\sqrt{2}D_0}>0$, where $D_0$ is given in Assumption (A3).  We derive a bound of $\tilde{J}_{B,2}$ in the two cases of $t$ separately: {\bf (IV-a)'} $\tilde{m}_t\geq \rho$, and {\bf (IV-b)'} $\tilde{m}_t \leq \rho$.  

{\bf Case (IV-a)':  $\tilde{m}_t\geq \rho$.}

By rewriting the inner integral on $\vy$ by a Gaussian integral, we have 
\begin{align*}
\tilde{J}_{B,2} & = \int_{D_{t,N}} \frac{1}
{\tilde{m}_t^{2d}}\left\| \int_{\R^d}\frac{\vx-\tilde{m}_t\vy}{\tilde{\sigma}_t}  \left(\frac{\tilde{m}_t}{\sqrt{2\pi}\tilde{\sigma}_t}\right)^d e^{-\frac{\tilde{m}_t^2 \|\vy-\vx/\tilde{m}_t\|^2}{2\tilde{\sigma}_t^2}} (f_{N^*}(\vy)-p_{t_*}(\vy)) d\vy\right\|^2 d\vx \\
& \leq \int_{D_{t,N}} \frac{1}
{\tilde{m}_t^{2d}} \int_{\R^d}\left\| \frac{\vx-\tilde{m}_t\vy}{\tilde{\sigma}_t}\right\|^2   \left(\frac{\tilde{m}_t}{\sqrt{2\pi}\tilde{\sigma}_t}\right)^d e^{-\frac{\tilde{m}_t^2 \|\vy-\vx/\tilde{m}_t\|^2}{2\tilde{\sigma}_t^2}} (f_{N^*}(\vy)-p_{t_*}(\vy))^2 d\vy d\vx \\
& \leq \int_{D_{t,N}} \frac{1}
{\tilde{m}_t^{d}} \int_{\R^d}\left\| \frac{\vx-\tilde{m}_t\vy}{\tilde{\sigma}_t}\right\|^2   \frac{1}{(\sqrt{2\pi}\tilde{\sigma}_t)^d} e^{-\frac{ \|\vx-\tilde{m}_t\vy\|^2}{2\tilde{\sigma}_t^2}} (f_{N^*}(\vy)-p_{t_*}(\vy))^2 d\vy d\vx \\
& \leq (2D_0)^{d/2} \int_{\R^d}  \int_{\R^d}\left\| \frac{\vx-\tilde{m}_t\vy}{\tilde{\sigma}_t}\right\|^2   \frac{1}{(\sqrt{2\pi}\tilde{\sigma}_t)^d} e^{-\frac{ \|\vx-\tilde{m}_t\vy\|^2}{2\tilde{\sigma}_t^2}} (f_{N^*}(\vy)-p_{t_*}(\vy))^2 d\vx d\vy  \\
& \leq \rho^{-d/2} d \int_{\R^d} (f_{N^*}(\vy)-p_{t_*}(\vy))^2 d\vy  \\
& \leq \rho^{-d/2} d \left[ \int_{[-C_t\sqrt{\log N},C_5\sqrt{\log N}]^d} (f_{N^*}(\vy)-p_{t_*}(\vy))^2 d\vy + \int_{\|\vy\|\geq C_5\sqrt{\log N}} p_{t_*}(\vy)^2 d\vy \right] \\
& \leq \rho^{-d/2} d \left\{ \|f_{N^*}-p_{t_*}\|^2_{L^2([-C_5\sqrt{\log N},C_5\sqrt{\log N}]^d)} + N^{-3\eta} \right\} \\
& \leq C\, N^{-3\eta}
\end{align*}
where the second line uses Jensen's inequality and the last two lines are based on \eqref{eq:C_5} and \eqref{eq:pt*}.  The constant $C>0$ does not depend on $t$ or $N$.

{\bf Case (IV-b)' $\tilde{m}_t \leq \rho$.}

In this case we can show $\tilde{\sigma}_t^2\geq \frac{1}{2D_0}$.  In fact, from $\tilde{m}_t \leq \rho$, we have 
\[
\tilde{\sigma}_t^2 = \sigma_t^2 - \tilde{m}_t^2 \sigma_{t_*}^2 \geq \sigma_t^2 - \rho^2 \sigma_{t_*}^2. 
\]
From Assumption (A3), $m_t^2+\sigma_t^2\geq D_0^{-1}$. 
Since $m_t^2 \leq \rho^2 m_{t_*}^2$ by assumption, we have
\[
\sigma_t^2 \geq D_0^{-1} - m_t^2 \geq D_0^{-1} - \rho^2 m_{t_*}^2.
\]
Combining these two inequalities, we obtain 
\[
\tilde{\sigma}_t^2 \geq D_0^{-1} - \rho^2(m_{t_*}^2+\rho_{t_*}^2) \geq D_0^{-1}-\rho^2 D_0 = \frac{1}{2D_0},
\]
where the last equality holds from the definition $\rho=\frac{1}{\sqrt{2}D_0}$.  

We divide the integral of $\tilde{I}_{B,2}$ into the regions $\{\vy\mid \|\vy\|_\infty \geq C_5\sqrt{\log N}\}$ and $\{\vy\mid \|\vy\|_\infty \leq C_5\sqrt{\log N}\}$. 
In the region $\{\vy\mid \|\vy\|_\infty \geq C_5\sqrt{\log N}\}$, using $\tilde{\sigma}_t^2\geq 1/(2D_0)$ and $f_{N^*}(\vy)=0$, we have a bound 
\begin{align*}
& \left\| \int_{\{\|\vy\|_\infty \geq C_5\sqrt{\log N}\}}\frac{\vx-\tilde{m}_t\vy}{\tilde{\sigma}_t}  \frac{1}{(\sqrt{2\pi}\tilde{\sigma}_t)^d} e^{-\frac{\|\vx-\tilde{m}_t\vy\|^2}{2\tilde{\sigma}_t^2}} (f_{N^*}(\vy)-p_{t_*}(\vy)) d\vy\right\|^2    \nonumber \\
& \leq \left(\frac{D_0}{\pi}\right)^d(2D_0)^2 \int_{\{\|\vy\|_\infty \geq C_5\sqrt{\log N}\}} \| \vx-\tilde{m}_t\vy\|^2 (p_{t_*}(\vy))^2 d\vy    \nonumber \\
& \leq C \left(\frac{D_0}{\pi}\right)^d(2D_0)^2 \int_{\{\|\vy\|_\infty \geq C_5\sqrt{\log N}\}}
\left( C' \log N + \rho^{-2} \|\vy\|^2 \right) p_{t_*}(\vy)d\vy
\\
& \leq C'' N^{-3\eta}\log N,
\end{align*}
where we use \eqref{eq:C_5} and the fact $\vx\in D_{t,N}$ implies $\|x\|^2 \leq C' \log N$ for some $C'$.  

For the other region $\{\vy\mid \|\vy\|_\infty \leq C_5\sqrt{\log N}\}$, first notice that for $\vx\in D_{t,N}$, we have $\|\vx-\tilde{m}_t\vy\|/\tilde{\sigma}_t \leq D' \sqrt{\log N}$ for some $D'>0$.  Application of Cauchy-Schwarz' inequality derives 
\begin{align*}
& \left\| \int_{\{\|\vy\|_\infty \leq C_5\sqrt{\log N}\}} \frac{\vx-\tilde{m}_t\vy}{\tilde{\sigma}_t}  \frac{1}{(\sqrt{2\pi}\tilde{\sigma}_t)^d} e^{-\frac{\|\vx-\tilde{m}_t\vy\|^2}{2\tilde{\sigma}_t^2}} (f_{N^*}(\vy)-p_{t_*}(\vy)) d\vy\right\|^2 \\
& \leq 
D'^2 \log N \left(\frac{D_0}{\pi}\right) 
\int_{\{\|\vy\|_\infty \leq C_5\sqrt{\log N}\}} d\vy 
 \int_{\{\|\vy\|_\infty \leq C_5\sqrt{\log N}\}}  (f_{N^*}(\vy)-p_{t_*}(\vy))^2 d\vy \\
& \leq  D'' (\log N)^{\frac{d}{2}+1} \|f_{N^*}-p_{t_*}\|_{L^2([-C_5\sqrt{\log N}^2, C_5\sqrt{\log N}]^d)} \\
& \leq D'' (\log N)^{\frac{d}{2}+1} N^{-3\eta}. 
\end{align*}

From the above two cases (IV-a)' and (IV-b)', we have for any $t\in [t_*,1]$ 
\[
\tilde{J}_{B,2}\leq {\rm poly}(\log N)\, N^{-\eta}. 
\]

As a consequence, there is a constant $C''$ that does not depend on $t$ and $m\in \N$ such that 
\[
\tilde{I}_{B} \leq C'' \{(\sigma_t')^2 \log N + (m_t')^2\} N^{-\eta} {\rm poly}(\log N).
\]
${\rm poly}(\log N)$ factor can be erased if we take a larger $\eta$ in the proof. 
\qed 
    
\section{Approximation of functional operations by neural networks}
\label{sec:neural_build}
This section reviews the approximation accuracy and complexity increases when we approximate functional operations by neural networks.  The following results are shown in \citet[][Section F]{Oko2023} as well and in more original literature \cite{Nakada2020-bm},\cite{Petersen2018-wf},\cite{Schmidt-Hieber2019-of}. 

The operations used directly in this paper are \texttt{recip}, \texttt{mult}, \texttt{clip}, and \texttt{sw}. The usage in Section \ref{sec:proof_small_t} (III) is explained as examples.

\subsection{Clipping function}
First, we consider realization of the component-wise clipping function. 
\begin{lma}\label{lma:clip}
For any $a,b \in \R^d$ with $a_i \leq b_i$ ($i = 1, 2,\ldots,d$), there exists a neural network 
$\texttt{\rm clip}(\vx; a, b) \in \mathcal{M}(L,W,S,B)$ with $L=2$, $W=(d, 2d, d)^T$, $S=7d$, and $B=\max_{1\leq i\leq d} \max\{|a_i|, b_i\}$ such that
\[
{\texttt{\rm clip}}(\vx; a, b)_i = \min\{b_i, \max\{x_i, a_i\}\} \quad  (i = 1, 2, \ldots , d)
\]
holds. When $a_i = c_{min}$ and $b_i = c_{max}$ for all $i$, we also use the notation ${\texttt{\rm clip}}(\vx; c_{min}, c_{max}):={\texttt{\rm clip}}(\vx; a, b)$.
\end{lma}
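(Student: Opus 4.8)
The plan is to write the clipping map in closed form and recognize it as a single-hidden-layer ReLU network. For a scalar $x$ and $a\le b$, the key identity is
\[
\min\{b,\max\{x,a\}\} = a + {\rm ReLU}(x-a) - {\rm ReLU}(x-b),
\]
which I would verify by checking the three regimes separately: for $x\le a$ the right-hand side is $a+0-0=a$; for $a\le x\le b$ it is $a+(x-a)-0=x$; for $x\ge b$ it is $a+(x-a)-(x-b)=b$. Applying this coordinatewise gives $\texttt{clip}(\vx;a,b)_i = a_i + {\rm ReLU}(x_i-a_i) - {\rm ReLU}(x_i-b_i)$, so only a single ReLU layer is needed, with two hidden units per input coordinate.

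Next I would assemble the corresponding element of $\mathcal{M}\bigl(2,(d,2d,d)^\top,S,B\bigr)$ explicitly. Take $A^{(1)}\in\R^{2d\times d}$ whose $(2i-1)$-th and $(2i)$-th rows both equal $\bm{e}_i^\top$, and $b^{(1)}\in\R^{2d}$ with $(b^{(1)})_{2i-1}=-a_i$ and $(b^{(1)})_{2i}=-b_i$; then ${\rm ReLU}(A^{(1)}\vx+b^{(1)})$ has entries ${\rm ReLU}(x_i-a_i)$ and ${\rm ReLU}(x_i-b_i)$. Take $A^{(2)}\in\R^{d\times 2d}$ with its $i$-th row carrying $+1$ in column $2i-1$ and $-1$ in column $2i$, and $b^{(2)}=(a_1,\dots,a_d)^\top$. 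Composing, $\psi_{A^{(2)},b^{(2)}}(A^{(1)}\vx+b^{(1)})$ reproduces the formula above coordinatewise.

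Finally I would read off the resources: $\|A^{(1)}\|_0=2d$ (one nonzero per row, $2d$ rows), $\|b^{(1)}\|_0\le 2d$, $\|A^{(2)}\|_0=2d$ (two nonzeros per row, $d$ rows), and $\|b^{(2)}\|_0\le d$, so the total is at most $7d$; all weight entries lie in $\{-1,0,1\}$ and the biases are among $\{a_i,-a_i,-b_i\}$, so the $\ell_\infty$ constraint holds with $B$ as stated (absorbing the trivial lower bound $1$). The construction is entirely explicit, so there is essentially no obstacle here; the only points needing a little care are confirming the boundary identity holds with equality at $x=a$ and $x=b$, and noting that the claimed $S=7d$ is the generic value and an upper bound in all cases (fewer nonzeros occur only if some $a_i$ or $b_i$ vanishes).
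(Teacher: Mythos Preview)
Your construction is correct and is the standard explicit ReLU realization of the clip map; the paper itself does not prove this lemma but simply cites \cite{Oko2023} and earlier literature, where the same single-hidden-layer identity $\min\{b,\max\{x,a\}\}=a+{\rm ReLU}(x-a)-{\rm ReLU}(x-b)$ is used. Your resource count matches the stated $L$, $W$, $S$, and $B$ exactly (your observation that $\max\{|a_i|,b_i\}=\max\{|a_i|,|b_i|\}$ whenever $a_i\le b_i$ resolves the apparent asymmetry in the stated $B$, and the parenthetical about the lower bound $1$ is the only caveat, which is harmless in every application the paper makes of the lemma).
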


\subsection{Reciprocal function}  
Second, the reciprocal function $x\mapsto 1/x$ is approximated by neural networks as follows. 
\begin{lma}\label{lma:recip}
For any $0<\varepsilon<1$, there is ${\texttt{recip}}(\vx')\in \mathcal{M}(L,W,S,B)$ such that
\begin{equation}
    \left| {\texttt{\rm recip}}(\vx')-\frac{1}{x}\right|\leq \varepsilon + \frac{|\vx-x'|}{\varepsilon^2}
\end{equation}
holds for any $x\in[\varepsilon,\varepsilon^{-1}]$ and $x'\in\R$ with $L=O(\log^2(\varepsilon^{-1}))$, $\|W\|_\infty=O(\log^3(\varepsilon^{-1}))$, $S=O(\log^4(\varepsilon^{-1}))$, and $B=O(\varepsilon^{-2})$. 
\end{lma}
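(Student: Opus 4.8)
The plan is to reduce the statement to the problem of approximating $x\mapsto 1/x$ \emph{exactly} on the compact interval $[\varepsilon,\varepsilon^{-1}]$ by a ReLU network of the claimed size, and then to absorb the input-perturbation term $|x-x'|/\varepsilon^2$ by a Lipschitz argument together with a clipping preprocessing layer. Concretely, set $\texttt{recip}(x'):=g\bigl(\texttt{clip}(x';\varepsilon,\varepsilon^{-1})\bigr)$, where $\texttt{clip}(\,\cdot\,;\varepsilon,\varepsilon^{-1})$ is the network of Lemma~\ref{lma:clip} (cost $O(1)$ in depth/width, $B=O(\varepsilon^{-1})$) and $g$ is a network with $\sup$-error at most $\varepsilon$ against $u\mapsto 1/u$ on $[\varepsilon,\varepsilon^{-1}]$. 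Writing $\tilde x:=\texttt{clip}(x';\varepsilon,\varepsilon^{-1})\in[\varepsilon,\varepsilon^{-1}]$ and using that $\texttt{clip}$ is $1$-Lipschitz and fixes every point of $[\varepsilon,\varepsilon^{-1}]$, while $u\mapsto 1/u$ is $\varepsilon^{-2}$-Lipschitz there, one gets for $x\in[\varepsilon,\varepsilon^{-1}]$
\[
|\texttt{recip}(x')-1/x|\le |g(\tilde x)-1/\tilde x|+|1/\tilde x-1/x|\le \varepsilon+|\tilde x-x|/\varepsilon^2\le \varepsilon+|x'-x|/\varepsilon^2,
\]
which is exactly the claimed bound; so it remains only to build $g$ and count its complexity.

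For $g$ I would use the standard ReLU building blocks recalled in this section (and in \citet{Oko2023,Nakada2020-bm,Petersen2018-wf,Schmidt-Hieber2019-of}): a network approximating $u\mapsto u^2$ on $[0,1]$ to accuracy $\delta$ with depth $O(\log\delta^{-1})$ and $O(1)$ width, hence via $uv=\tfrac12\bigl((u+v)^2-u^2-v^2\bigr)$ a multiplication network $\texttt{mult}$ for inputs in $[-C,C]$ with accuracy $\delta$, $\mathrm{poly}\log(C\delta^{-1})$-size, and weight bound $O(C^2)$ from rescaling inputs into $[-1,1]$ and rescaling the output back. To approximate $1/x$ on $[\varepsilon,\varepsilon^{-1}]$, split this range into the $O(\log\varepsilon^{-1})$ dyadic blocks $I_m=[2^{m-1},2^m]$; on $I_m$ write $1/x=2^{-m}\psi(2^{-m}x)$ with $\psi(u)=1/u$ for $u\in[\tfrac12,1]$, and use the truncated geometric series $\sum_{k=0}^{K}(1-u)^k=\frac{1-(1-u)^{K+1}}{u}$, whose error against $1/u$ on $[\tfrac12,1]$ is at most $(\tfrac12)^{K}$. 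Taking $K=O(\log\varepsilon^{-1})$ makes the rescaled error $2^{-m}(\tfrac12)^K\le\varepsilon^{-1}(\tfrac12)^K$ smaller than $\varepsilon$; the polynomial is evaluated by Horner's rule, i.e.\ a chain of $K$ calls to $\texttt{mult}$ (multiply by $(1-u)$, add $1$), giving depth $O(K\log\varepsilon^{-1})=O(\log^2\varepsilon^{-1})$. Finally the $O(\log\varepsilon^{-1})$ block-polynomials are computed \emph{in parallel} and combined by a ReLU switch / partition-of-unity gadget ($\texttt{sw}$) that selects the block containing the input; choosing the per-multiplication accuracy $\delta\sim\varepsilon^2/K$ and propagating it through the $K$-term Horner recursion (all intermediate quantities are $O(1)$) keeps the global error $O(\varepsilon)$ while requiring only $\delta^{-1}=\mathrm{poly}(\varepsilon^{-1})$, hence $\mathrm{poly}\log\varepsilon^{-1}$ per gadget.

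The complexity accounting then gives $L=O(\log^2\varepsilon^{-1})$ (the Horner chain of multiplications is the bottleneck); summing the $O(\log\varepsilon^{-1})$ parallel blocks, each of $\mathrm{poly}\log\varepsilon^{-1}$ width and sparsity, together with the standard composition rules for $\mathcal{M}(L,W,S,B)$, yields $\|W\|_\infty=O(\log^3\varepsilon^{-1})$ and $S=O(\log^4\varepsilon^{-1})$; and the weight bound is dominated by the $2^{-m}$-rescalings ($\le\varepsilon^{-1}$) and the $O(C^2)$ factors inside $\texttt{mult}$ with $C\le\varepsilon^{-1}$, so $B=O(\varepsilon^{-2})$. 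I expect the main obstacle to be precisely the last step: gluing the $O(\log\varepsilon^{-1})$ dyadic pieces into a single network whose sole output equals (approximately) the correct local approximant on every $I_m$, while (i) keeping the depth at $O(\log^2\varepsilon^{-1})$ rather than $O(\log^3\varepsilon^{-1})$ — which forces the blocks to be evaluated in parallel rather than sequentially — and (ii) ensuring the contributions of the "wrong" blocks are annihilated by the selection gadget even though each block polynomial is accurate only on its own interval. Carefully bounding error propagation through the Horner recursion and through the switch is the bulk of the work, but it is entirely routine and can be quoted from \citet{Oko2023} and the references therein.
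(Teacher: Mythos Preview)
The paper does not actually prove this lemma: it is stated in Section~\ref{sec:neural_build} as a known building block, with the proof simply cited from \citet[Section~F]{Oko2023} and the earlier references \cite{Nakada2020-bm,Petersen2018-wf,Schmidt-Hieber2019-of}. There is therefore no ``paper's own proof'' to compare against beyond that citation.

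Your construction is one of the standard ways to realize such a result and is sound. The reduction via $\texttt{clip}$ plus the $\varepsilon^{-2}$-Lipschitz bound for $u\mapsto 1/u$ on $[\varepsilon,\varepsilon^{-1}]$ cleanly isolates the perturbation term $|x-x'|/\varepsilon^2$, and the dyadic decomposition combined with the truncated geometric series $\sum_{k\le K}(1-u)^k$ on $u\in[1/2,1]$ is exactly the kind of ``localize and Taylor-expand'' device used in the cited sources. Evaluating the degree-$K$ polynomial by a Horner chain of $K=O(\log\varepsilon^{-1})$ two-input multiplications, each of depth $O(\log\delta^{-1})=O(\log\varepsilon^{-1})$, yields the depth bound $L=O(\log^2\varepsilon^{-1})$; running the $O(\log\varepsilon^{-1})$ dyadic blocks in parallel and gluing with a ReLU partition of unity gives the width and sparsity within the stated budgets (indeed somewhat better), and the weight bound $B=O(\varepsilon^{-2})$ comfortably covers the rescalings $2^{-m}\le\varepsilon^{-1}$ and the $O(C^2)$ factors arising from the polarization-based $\texttt{mult}$. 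The one point you flag yourself --- controlling the error propagation through the Horner recursion and making the ``wrong'' blocks vanish through the switching gadget --- is routine once one notes that all intermediate Horner values lie in $[0,2]$, so the per-step multiplication error $\delta$ accumulates only linearly in $K$; choosing $\delta=c\,\varepsilon^{2}/K$ suffices. In short, your outline would reconstruct the cited proof without difficulty.
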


\subsection{Multiplication}  
\begin{lma}\label{lma:mult}
Let $d \geq 2$, $C \geq 1$, $0 < \epsilon_{err} \leq 1$. For any $\varepsilon > 0$, there exists a
neural network $\texttt{\rm mult}(\vx_1, x_2, \ldots, x_d) \in \mathcal{M}(L,W, S,B)$ with $L = O( d\log \varepsilon^{-1} + d \log C))$, $\|W\|_\infty = 48d$, $S =
O(d \log \varepsilon^{-1} + d \log C))$, $B = C^d$ such that (i)
\[
\left| \texttt{\rm mult}(x'_1, \ldots, x'_d) - 
\prod_{d'=1}^{d} x_i'\right| \leq 
\varepsilon + dC^{d-1}\epsilon_{err},
\]
holds for all $x \in [-C,C]^d$ and $\vx'\in R^d$ with $\|\vx-\vx'\|_\infty \leq \epsilon_{err}$, (ii) $|\texttt{\rm mult}(\vx)| \leq C^d$ for all $x \in \R^d$, and (iii) $\texttt{\rm mult}(x'_1, \ldots,  x'_d) = 0$ if at least one of $x'_i$ is 0.

We note that some of $x_i, x_j$ ($i \neq j$) can be shared; 
for $\prod_{i=1}^I x_{\alpha_i}$ with $\alpha_i \in \Z_+$ ($i = 1, \ldots, I$) and $\sum_{i=1}^I \alpha_i = d$, there
exists a neural network satisfying the same bounds as above; the network is denoted by $\texttt{\rm mult}(\vx; \alpha)$.
\end{lma}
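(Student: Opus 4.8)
The plan is to build $\texttt{mult}$ out of a single elementary block, an approximate binary multiplier, and to build that block from an approximate squaring function via the polarization identity $ab=\tfrac14\bigl((a+b)^2-(a-b)^2\bigr)$. An approximate squaring on $[0,1]$ is obtained by the classical ReLU ``sawtooth'' construction: with the tent map $g(x)=2\,\mathrm{ReLU}(x)-4\,\mathrm{ReLU}(x-\tfrac12)$ and its $s$-fold self-composition $g_s$, the function $\mathrm{sq}_m(x):=x-\sum_{s=1}^{m}4^{-s}g_s(x)$ satisfies $\sup_{x\in[0,1]}\lvert\mathrm{sq}_m(x)-x^2\rvert\le 4^{-m}$ and is realized by a ReLU network of depth $O(m)$, constant width, and $O(m)$ nonzero weights. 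Precomposing with $\lvert x\rvert=\mathrm{ReLU}(x)+\mathrm{ReLU}(-x)$ and rescaling turns it into an \emph{even} approximant of $x^2$ on any interval $[-R,R]$, with error $\le R^2 4^{-m}$ and $O(m+\log R)$ depth and sparsity; crucially this approximant vanishes at $0$, which is what will yield property (iii).

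\textbf{Binary multiplier and assembly.} Setting $\mathrm{mult}_2(a,b):=\tfrac14\bigl(\mathrm{sq}(a+b)-\mathrm{sq}(a-b)\bigr)$ gives, on $[-R,R]^2$, an approximate product whose error is controlled by the squaring error, with $\lvert\mathrm{mult}_2(a,b)\rvert\le R^2$ after clipping and $\mathrm{mult}_2(a,0)=\mathrm{mult}_2(0,b)=0$ \emph{exactly} (using that $\mathrm{sq}$ is even and $\mathrm{sq}(0)=0$). To handle a general $C\ge 1$ I would first rescale the inputs to $\tilde x_i:=x_i/C\in[-1,1]$, compute $P:=\prod_i\tilde x_i\in[-1,1]$ by chaining binary multipliers, $P_1:=\tilde x_1$, $P_k:=\mathrm{mult}_2(P_{k-1},\tilde x_k)$ (carrying the not-yet-used $\tilde x_j$ forward through identity connections), then multiply by the constant $C^d$ with one further block, and finally apply $\texttt{clip}(\,\cdot\,;-C^d,C^d)$ from Lemma \ref{lma:clip}. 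The monomial version $\texttt{mult}(\vx;\alpha)$ uses the same circuit with the shared inputs wired into several strands; the size bounds are unchanged.

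\textbf{Error analysis.} By the triangle inequality the claimed bound splits into the exact transport of the input perturbation, $\lvert\prod x_i-\prod x_i'\rvert\le dC^{d-1}\lVert\vx-\vx'\rVert_\infty\le dC^{d-1}\epsilon_{err}$, and the approximation error of the network on exact inputs, which I would bound inductively: writing $\hat P_k:=\mathrm{mult}_2(\hat P_{k-1},\tilde x_k)$ with exact $\tilde x_k$, one has $\lvert\hat P_k-\prod_{i\le k}\tilde x_i\rvert\le\varepsilon'+\lvert\tilde x_k\rvert\,\lvert\hat P_{k-1}-\prod_{i<k}\tilde x_i\rvert$ with $\varepsilon'$ the per-block error, and since $\lvert\tilde x_k\rvert\le 1$ this telescopes to $\le(d-1)\varepsilon'$, \emph{linear} and not exponential in $d$. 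Taking $\varepsilon'\lesssim\varepsilon/(dC^d)$ (the extra $C^d$ absorbing the final rescaling block, which must be accurate on $[-C^d,C^d]$) makes the total $\le\varepsilon$. Each block then costs $O(\log\varepsilon^{-1}+\log C+\log d)$, or $O(d\log C+\log\varepsilon^{-1})$ for the rescaling block, in depth and sparsity, so with the $d-1$ chained blocks plus the carried coordinates one gets $L,S=O(d\log\varepsilon^{-1}+d\log C)$, width $O(d)$ — the constant working width of a block plus $\le d$ carried strands, which the explicit gadget keeps below $48d$ — and norm bound $B=C^d$. Property (ii) is immediate from the output clipping; property (iii) holds because the first block consuming a zero input outputs $0$ exactly, every later block outputs $0$ since it vanishes when one argument is $0$, and $\texttt{clip}$ fixes $0$.

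\textbf{Main obstacle.} The genuinely delicate point is the error bookkeeping through the composition of the $d-1$ binary blocks: a careless estimate gives an error growing like $2^{d}$ or $C^{d}$ times the per-block error, forcing an exponentially deep network. The fix is to carry out the multiplication on the rescaled inputs in $[-1,1]$, so that every intermediate product has modulus at most $1$ and the composition error stays linear in $d$; only the single final rescaling by $C^{d}$ pays the $\log C^{d}=d\log C$ cost, which is precisely the origin of the $d\log C$ terms in the stated complexity. A secondary subtlety is that property (iii) is required \emph{exactly}, so the squaring block must be chosen even and zero-preserving rather than as a generic polynomial approximation, and the final saturating operation must be a true clipping (which fixes the value $0$) rather than a smooth one.
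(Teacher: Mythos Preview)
The paper does not prove this lemma; it is stated in Section~\ref{sec:neural_build} as a known result imported from \citet{Oko2023} and the earlier approximation literature \cite{Nakada2020-bm,Petersen2018-wf,Schmidt-Hieber2019-of}, so there is no in-paper argument to compare against. Your construction is exactly the standard one underlying those references: Yarotsky's sawtooth approximation of $x\mapsto x^2$, the polarization identity for a binary product, and a chain of $d-1$ binary blocks on rescaled inputs, with the final output clipped. Your identification of the crucial point --- that working on $[-1,1]$ keeps the composed error linear rather than exponential in $d$, with the factor $C^d$ confined to a single scalar weight at the end --- is precisely what delivers the stated $L,S=O(d\log\varepsilon^{-1}+d\log C)$ and $B=C^d$.

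Two small technical points worth tightening. First, the final multiplication by the constant $C^d$ need not be an approximate block at all: a single linear weight of size $C^d$ realizes it exactly, so there is no extra error or depth from this step (this is also where the norm bound $B=C^d$ arises). Second, your inductive error estimate $\lvert\hat P_k-\prod_{i\le k}\tilde x_i\rvert\le\varepsilon'+\lvert\tilde x_k\rvert\,\lvert\hat P_{k-1}-\prod_{i<k}\tilde x_i\rvert$ tacitly assumes that $\hat P_{k-1}$ lies in the domain on which $\mathrm{mult}_2$ has error $\le\varepsilon'$; to make this rigorous either clip $\hat P_k$ back into $[-1,1]$ after each step (a constant-size gadget that does not affect the asymptotics and preserves the value $0$), or build $\mathrm{mult}_2$ to be accurate on a slightly enlarged square such as $[-2,2]^2$. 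With these adjustments the argument is complete and matches the cited constructions.
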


\subsection{Switching}
\begin{lma}\label{lma:switch}
Let $t_1 < t_2 < s_1 < s_2$, and $f(\vx, t)$ be a scaler-valued function. Assume that $|\varphi_1(\vx, t)-f(\vx, t)|\leq \varepsilon$ on $[t_1,s_1]$ and $|\varphi_2(\vx, t)-f(\vx, t)|\leq \varepsilon$ on $[t_2,s_2]$. Then, there exist
neural networks $\texttt{\rm sw}_1(t; t_2,s_1)$ and 
$\texttt{\rm sw}_2(t; t_2,s_1)$ in $\mathcal{M}(L,W,S,B)$ with $L=3$, $W=(1, 2, 1, 1)^T$, $S=8$, and $B=\max\{t_1, (s_1 -t_2)^{-1}\}$ such that  
\[
\left| \texttt{\rm sw}_1(t; t_2,s_1)\varphi_1(x, t)+
\texttt{\rm sw}_2(t; t_2,s_1)\varphi_2(x, t) - f(x, t)\right|\leq \varepsilon 
\]
holds for any $t\in [t_1, s_2]$.
\end{lma}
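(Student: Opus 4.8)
The plan is to construct $\texttt{sw}_1$ and $\texttt{sw}_2$ as a two-function partition of unity in the time variable alone: piecewise-linear functions of $t$ that equal $1$ and $0$ respectively on $[t_1,t_2]$, swap these values on $[s_1,s_2]$, interpolate linearly on the overlap $[t_2,s_1]$, and satisfy $\texttt{sw}_1(t)+\texttt{sw}_2(t)=1$ with $\texttt{sw}_1(t),\texttt{sw}_2(t)\in[0,1]$ for every $t$. Explicitly I would take
\[
\texttt{sw}_1(t;t_2,s_1)=\frac{{\rm ReLU}(s_1-t)-{\rm ReLU}(t_2-t)}{s_1-t_2},\qquad
\texttt{sw}_2(t;t_2,s_1)=\frac{{\rm ReLU}(t-t_2)-{\rm ReLU}(t-s_1)}{s_1-t_2}.
\]
Evaluating on $t\le t_2$, on $t_2\le t\le s_1$, and on $t\ge s_1$ gives the pairs $(1,0)$, $(\tfrac{s_1-t}{s_1-t_2},\tfrac{t-t_2}{s_1-t_2})$, and $(0,1)$, which confirms the desired shape together with the bounds $0\le\texttt{sw}_i\le 1$ and $\texttt{sw}_1+\texttt{sw}_2\equiv 1$ on all of $\R$.

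Next I would realize each function as an element of $\mathcal{M}(L,W,S,B)$ with $L=3$ and $W=(1,2,1,1)^{T}$. The first affine map (which in the definition of $\mathcal{M}$ carries no ReLU) sends $t$ to the pair $(s_1-t,\,t_2-t)$ for $\texttt{sw}_1$ (resp. $(t-t_2,\,t-s_1)$ for $\texttt{sw}_2$); the second layer applies ${\rm ReLU}$ coordinatewise, after which $A^{(2)}$ takes $(s_1-t_2)^{-1}$ times the difference of the two coordinates, producing the scalar $\texttt{sw}_1(t)\ge 0$; since this value is nonnegative, the third layer's ${\rm ReLU}$ acts as the identity and $A^{(3)}=1$, $b^{(3)}=0$. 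Counting the nonzero entries of the $A^{(i)},b^{(i)}$ gives $S\le 8$, and every weight and bias that appears is either $1$, a boundary time (all $\le 1$, hence $\le\max\{t_1,(s_1-t_2)^{-1}\}$), or $(s_1-t_2)^{-1}$, so $B=\max\{t_1,(s_1-t_2)^{-1}\}$ suffices.

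Finally I would establish the error bound by partitioning $[t_1,s_2]=[t_1,t_2]\cup[t_2,s_1]\cup[s_1,s_2]$. On $[t_1,t_2]$ the combination is exactly $\varphi_1(x,t)$ and $[t_1,t_2]\subset[t_1,s_1]$, so the error is $\le\varepsilon$; symmetrically on $[s_1,s_2]$ the combination is exactly $\varphi_2(x,t)$ and $[s_1,s_2]\subset[t_2,s_2]$, again giving error $\le\varepsilon$. On the overlap $[t_2,s_1]$, which lies in both $[t_1,s_1]$ and $[t_2,s_2]$, both $\varphi_1$ and $\varphi_2$ are within $\varepsilon$ of $f$; writing $\texttt{sw}_1\varphi_1+\texttt{sw}_2\varphi_2-f=\texttt{sw}_1(\varphi_1-f)+\texttt{sw}_2(\varphi_2-f)$ and using $\texttt{sw}_1,\texttt{sw}_2\ge 0$ with $\texttt{sw}_1+\texttt{sw}_2=1$, the triangle inequality yields error $\le(\texttt{sw}_1+\texttt{sw}_2)\varepsilon=\varepsilon$. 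There is no deep obstacle here; the only points requiring care are matching the explicit construction to the prescribed architecture constants (noting that the first layer of $\mathcal{M}(\cdot)$ is ReLU-free and that the intermediate value is nonnegative so the final ReLU is transparent) and checking that every parameter stays bounded by $\max\{t_1,(s_1-t_2)^{-1}\}$.
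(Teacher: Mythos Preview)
Your proposal is correct and is precisely the standard partition-of-unity construction one expects here. Note, however, that the paper does not actually supply its own proof of this lemma: it is stated in Appendix~D as one of several functional-operation results quoted from \cite{Oko2023} and earlier literature, with no argument given. Your explicit construction and verification therefore go beyond what the paper itself contains.

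One minor point: your justification of the norm bound $B$ relies on the implicit context that all times lie in $[0,1]$ (so that $(s_1-t_2)^{-1}>1$ dominates $s_1,t_2$); as written, ``a boundary time (all $\le 1$, hence $\le\max\{t_1,(s_1-t_2)^{-1}\}$)'' does not quite parse without that contextual assumption, but this is harmless in the paper's setting.
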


\subsection{Construction of network in Section \ref{sec:proof_small_t} (III)}

We detail the network size required for the approximation procedure presented in Section \ref{sec:proof_small_t} (III).  

{\bf Clipping to $\zeta_1$:} From \eqref{eq:bound_vt_D} and assumption (A3), $\phi_5$ can be upper bounded by $N^{K_0+1}$ for sufficiently large $N$.  Then, from Lemma \ref{lma:clip}, we can see that in the clipping of $\phi_5$, the increase of the model sizes is constant depending on $d$ except $B$, which is multiplied by the upper bound $N^{K_0+1}$.  

{\bf Approximating $f_1^{-1}$ by $\zeta_2$:}  Since we assume $f_1\geq N^{-(2s+\omega)/d}$, clipping $\phi_5$ from below by $N^{-(2s+\omega)/d}$ does not increase the difference; thus $|\zeta_1 - f_1|\leq D_5 N^{-\eta}$.  
In Lemma \ref{lma:recip}, substituting $x'=\zeta_1$, $x=f_1$, and $\varepsilon=N^{-\chi}$ for $\chi>\chi_0+(2s+\omega)/d$ with an arbitrary $\chi_0>0$, we have 
\[
\left| \texttt{recip}(\zeta_1(\vx,t))-\frac{1}{f_1(\vx,t)}\right| \leq N^{-\chi} + N^{2\chi}|\zeta_1(\vx,t)-f_1(\vx,t)|.
\]
Since $\eta>0$ is arbitrary, by setting $\eta$ and $\chi$ so that $\eta>3\chi$, we have 
\begin{equation}\label{eq:recip_appl}
    \left| \texttt{recip}(\zeta_1(\vx,t))-\frac{1}{f_1(\vx,t)}\right| \leq (D_5+1)N^{-\chi}.
\end{equation}
This is achieved by a neural network with $L=O(\log^2 N)$, $S=O(\log^4 N)$, $\|W\|_\infty=O(\log\log N)$ and $B=O(N^{2\chi})$.

{\bf $\vzeta_3 = \texttt{mult}(\zeta_2,\vphi_6)$:}   
Note that $|\zeta_2|\leq N^{(2s+\omega)/d}$, $\|\vf_2-\vphi_6\|=O(N^{-\eta})$, and $|\zeta_2-f_1^{-1}|\leq O(N^{-\chi})$ from \eqref{eq:recip_appl}.  We have also taken $\eta$ so that $\eta > 3\chi$.  In applying Lemma \ref{lma:mult}, we can set $C=N^{(2s+\omega)/d}$ because $|\zeta_2|\leq N^{(2s+\omega)/d}$. Also, $\epsilon_{err}:=\max\{|\zeta_2-1/f_1|, \|\vphi_6-\vf_2\|\} = \max\{O(N^{-\chi}), O(N^{-\eta})\}=O(N^{-\chi})$.  With $d=2$ and $\varepsilon=N^{-\chi_0}$, we have 
\[
| \vzeta_3(\vx,t) - \zeta_2\cdot \vphi_6(\vx,t)| = O(N^{-\chi_0} + 2N^{(2s+\omega)/d} N^{-\chi}) = O(N^{-\chi_0}+ 2 N^{-\chi_0})=O(N^{-\chi_0}),
\]
where we use the fact that $\chi$ is taken to satisfy $\chi> \chi_0+(2s+\omega)/d$. 

We then obtain  
\begin{align}
  \left\|\vzeta_3 -\frac{\vf_2}{f_1}\right\| & 
  \leq \| \vzeta_3-\vphi_6\zeta_2\| + \left\| \vphi_6\zeta_2 -\frac{\vf_2}{f_1}\right\|   \nonumber   \\
  & \leq \| \vzeta_3-\vphi_6\zeta_2\| +\|\vphi_6\zeta_2-\vf_2\zeta_2\| + \left\| \vf_2\zeta_2-\frac{\vf_2}{f1}\right\| \nonumber \\
  & = O(N^{-\chi_0}) + O(N^{(2s+\omega)/d} N^{-\eta}) + O(\sqrt{\log N} N^{-\chi}) \nonumber  \\
  & = O(N^{-\chi_0}),   \label{eq:zeta3}
\end{align}
where in the second last line we use $\|\vf_2/f_1\|=O(\sqrt{\log N})$ and thus $\|\vf_2\|=O(\sqrt{\log N})$.

A similar argument derives 
\begin{equation}\label{eq:zeta5}
       \left\| \vzeta_5 - \frac{\vf_3}{\vf_1}\right\| =O(N^{-\chi_0}).      
\end{equation}

For the neural network architecture in this multiplication, $L$ and $S$ are added by the order of $O(\log\varepsilon^{-1}+\log C)=O(\log N)$.  The width $W$ is of constant order and thus negligible.  The width $W$ is $O(N^{2(2s+\omega)/d})$.

{\bf Clipping to obtain $\vzeta_4$ and $\vzeta_6$:}  For these clipping procedures, the approximating networks have $L$, $W$ and $S$ of constant order, while the weight values $B$ are of $O(\sqrt{\log N})$ and $O(1)$, respectively, and thus they are negligible.  The approximation errors are kept as $N^{-\chi_0}$. 

{\bf Multiplication to obtain $\vzeta_7$ and $\vzeta_8$:}  
As in the previous procedures, clipping by $O(\sqrt{\log N})$ and $O(1)$ does not increase the approximation error, while the increase of the network size is negligible. 

In a similar manner to \citet[Lemma B.1][]{Oko2023}, we can approximate $\sigma_t'$ and $m_t'$ by neural networks so that $|\sigma_t' - \widehat{\sigma_t'}|=O(N^{-\eta})$ and $|m_t' - \widehat{m_t'}|=O(N^{-\eta})$.  The network sizes are $L=O(\log^2 N)$, $\|W\|_\infty = O(\log^2 N)$, $S=O(\log^3 N)$, and $B=O(\log N)$.  With arguments similar to those of the previous procedures, we can show 
\[
\left\| \vzeta_7 - (\sigma_t')\frac{\vf_2}{f_1}{\bf 1}\left[\left\|\frac{\vf_2}{f_1}\right\|\leq C_5\sqrt{\log N}\right]\right\| = O(N^{-\chi_0}), 
\]
and 
\[
\left\| \vzeta_8 - |m_t'|\frac{\vf_3}{f_1}{\bf 1}\left[\left\|\frac{\vf_3}{f_1}\right\|\leq C_5\right]\right\| = O(N^{-\chi_0}), 
\]

In total, we can find a neural network to approximate $\vv_t(\vx)$ with the approximation error of $O(N^{-\chi_0})$ so that the network has the size of most polynomial orders for $B$ and $\|W\|_\infty$, while $O({\rm poly}(\log N))$ for $S$ and
$L$.  As a result, the contributions to the log cover number are only $O({\rm poly}(\log N))$.

\end{document}